%% file: example_paper.tex
\documentclass{article}

\usepackage{microtype}
\usepackage{graphicx}
\usepackage{subcaption}
\usepackage{booktabs} 

\usepackage{hyperref}
\usepackage{tikz}
\usetikzlibrary{automata, positioning, arrows.meta, calc}

\usepackage[accepted]{icml2026}

\usepackage{amsmath}
\usepackage{amssymb}
\usepackage{mathtools}
\usepackage{amsthm}
\usepackage{svg}

\newcommand{\removed}[1]{{}}
\usepackage[capitalize,noabbrev]{cleveref}

\input{commands}

\theoremstyle{plain}
\newtheorem{theorem}{Theorem}[section]
\newtheorem{proposition}[theorem]{Proposition}
\newtheorem{lemma}[theorem]{Lemma}

\theoremstyle{definition}

\newtheorem{assumption}[theorem]{Assumption}
\theoremstyle{remark}

\crefname{section}{Section}{Sections}
\crefname{appendix}{Appendix}{Appendices}
\crefname{subsection}{Appendix}{Appendices}
\Crefname{subsection}{Appendix}{Appendices}
\crefname{subsubsection}{Appendix}{Appendices}
\Crefname{subsubsection}{Appendix}{Appendices}
\crefname{theorem}{Theorem}{Theorems}
\Crefname{theorem}{Theorem}{Theorems}

\crefname{lemma}{Lemma}{Lemmas}
\Crefname{lemma}{Lemma}{Lemmas}

\crefname{proposition}{Proposition}{Propositions}
\Crefname{proposition}{Proposition}{Propositions}

\crefname{corollary}{Corollary}{Corollaries}
\Crefname{corollary}{Corollary}{Corollaries}

\crefname{assumption}{Assumption}{Assumptions}
\Crefname{assumption}{Assumption}{Assumptions}

\crefname{definition}{Definition}{Definitions}
\Crefname{definition}{Definition}{Definitions}

\crefname{remark}{Remark}{Remarks}
\Crefname{remark}{Remark}{Remarks}

\usepackage[textsize=tiny]{todonotes}

\icmltitlerunning{Reward Redistribution for CVaR MDPs}

\begin{document}

\twocolumn[
  \icmltitle{Reward Redistribution for CVaR MDPs using a Bellman Operator on L-infinity}
  


  \icmlsetsymbol{equal}{*}

  \begin{icmlauthorlist}
    \icmlauthor{Aneri Muni}{yyy,comp}
    \icmlauthor{Vincent Taboga}{yyy,comp}
    \icmlauthor{Esther Derman}{comp}
    \icmlauthor{Pierre-Luc Bacon}{yyy,comp}
    \icmlauthor{Erick Delage}{comp,sch}
  \end{icmlauthorlist}

  \icmlaffiliation{yyy}{Université de Montréal, Quebec, Canada}
  \icmlaffiliation{comp}{Mila - Quebec AI Institute, Quebec, Canada}
  \icmlaffiliation{sch}{GERAD \& Department of Decision Sciences, HEC Montr\'eal, Quebec, Canada}

  \icmlcorrespondingauthor{Aneri Muni}{aneri.muni@mila.quebec}

  \icmlkeywords{Machine Learning, ICML}

  \vskip 0.3in
]



\printAffiliationsAndNotice{}  

\begin{abstract}
Tail-end risk measures such as static conditional value-at-risk (CVaR) are  used in safety-critical applications to prevent rare, yet catastrophic events. Unlike risk-neutral objectives, the static CVaR of the return depends on entire trajectories without admitting a recursive Bellman decomposition in the underlying Markov decision process. 
A classical resolution relies on state augmentation with a continuous variable. However, unless restricted to a specialized class of admissible value functions, this formulation induces sparse rewards and degenerate fixed points. In this work, we propose a novel formulation of the static CVaR objective based on augmentation. Our alternative approach leads to a Bellman operator with: (1) dense per-step rewards; (2) contracting properties on the full space of bounded value functions. Building on this theoretical foundation, we develop risk-averse value iteration and model-free Q-learning algorithms that rely on discretized augmented states. We further provide convergence guarantees and approximation error bounds due to discretization. Empirical results demonstrate that our algorithms successfully learn CVaR-sensitive policies and achieve effective performance-safety trade-offs. Code for the paper is available at: \href{https://github.com/amuni3/StaticCVaR-DP}{https://github.com/amuni3/StaticCVaR-DP}.
\end{abstract}

\section{Introduction}
Classical reinforcement learning (RL) theory focuses on maximizing the \emph{expected} return over a finite or infinite decision horizon. This objective is convenient for its recursively decomposable structure, which guarantees the existence of an optimal stationary policy \cite{Puterman}. This underlying Markov recursion is the algorithmic backbone of dynamic programming (DP), temporal-difference (TD) learning, and modern deep RL algorithms.

\begin{figure}
  \centering
  \scalebox{0.62}{%
    \input{illustration_simple.tikz}%
  }
  \caption{Illustration of the reward schemes produced along a trajectory under the augmented MDP  of \citet{bauerle2011markov} versus ours, highlighting per-step rewards in \textcolor{blue}{blue} and augmented states in \textcolor{pink}{pink}. (Refer to Appendix~\ref{Appendix:mdp_variants} for further details.)}
  \label{fig:illustration}
\end{figure}
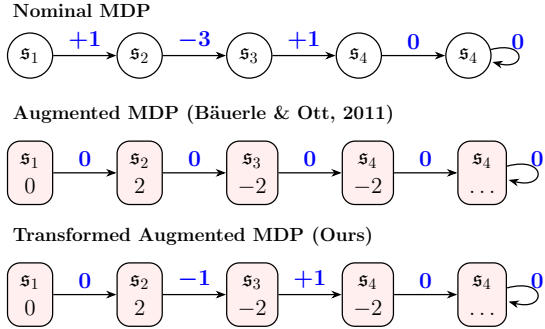

Despite undeniable success, average-performance-based decision-making may not always be desired. In safety-critical applications where failures might be rare yet catastrophic, a policy can be optimal in expectation yet operationally unacceptable. For example, an autonomous car that may improve its mean travel time by introducing a small collision probability or a clinical treatment plan that may improve expected outcomes, yet occasionally lead to higher morbidity in a minority subpopulation. In such settings, one would reasonably prefer to explicitly control the lower tail of the outcome distribution rather than optimize its average. Risk-sensitive RL captures this objective by replacing the expectation with a risk functional of the return distribution, thus optimizing policies with respect to tail-relevant statistics rather than mean \cite{survey-risk}.

Risk-sensitive objectives can either be modeled as \emph{nested} (recursive) risks, by applying the risk measure to the return, at every step as the process unfolds \cite{ruszczynski2010risk, tamar2015policy, coache2024reinforcement}, or as \emph{static} risks, where a risk functional is applied to the full-horizon return distribution \cite{bauerle2011markov, tamar2015optimizing, moghimi2025cvarleveragingstaticspectral}. The nested formulation gives a convenient Markov decomposition, but need not coincide with a measure of the lower tail of the overall return distribution. The resulting nested risk-averse policies tend to be more difficult to interpret \cite{axiomatic-risks} and often lead to overly conservative behavior \cite{lim2022distributional}. Static risks, in contrast, directly quantify the tail risk of cumulative returns, aligning more naturally with how risk is understood in practice. In this paper, we specifically focus on optimizing static conditional value-at-risk (CVaR), a widely used functional that holds desirable coherence properties \cite{coherent-risks}.

As with other static risk measures, the static CVaR objective lacks recursive separability and generally cannot be expressed as a Markovian Bellman recursion over the nominal state alone. This in turn leads to history-dependent optimal policies that may be intractable as the decision horizon grows \cite{SHAPIRO2009143}. In their influential work, \citet{bauerle2011markov} showed that decision-making under static CVaR can nevertheless be solved via dynamic programming, if we augment the state space with a continuous variable that tracks the return accumulated over trajectories. This augmented state, also called ``budget", is a \emph{sufficient statistic} for static CVaR and as a result, one can find policies in the augmented state space that are both stationary and optimal.

While analytically appealing, the Bellman formulation used in \citet{bauerle2011markov} has important practical limitations. First, it concentrates all CVaR-relevant information into a terminal functional of the augmented state, thus inducing a sparse learning signal. As a consequence, this construction creates delayed feedback, even when the underlying environment provides informative per-stage rewards (see \cref{fig:illustration}). Sparse-rewards exacerbate exploration and temporal credit assignment challenges, becoming particularly brittle in the infinite-horizon setting  \cite{minsky2007steps, pignatelli2023survey}. Second, the valid convergence properties of the associated Bellman operator are only guaranteed on a carefully restricted set of admissible unbounded functions. In practice, this means that value iteration provably converges to the right fixed point only when initialized inside this very restricted set. It is unclear how to enforce such a requirement when using function approximation or trying to deploy Q-learning methods. Alternatively, one can show that employing the same operator on space of bounded functions leads to a degenerate fixed point. 

Our main contribution is to show that these limitations are not intrinsic to the static CVaR formulation, but rather an artifact of the structure of the value function it identifies. Building on \citet{bauerle2011markov}, we propose a simple algebraic transformation of static CVaR from which we derive a transformed Bellman operator and an \emph{equivalent} fixed point. Our reformulation produces: (1) redistributed CVaR-relevant signals across time, easing integration with DP and TD-learning (\cref{fig:illustration}); (2) a valid contracting Bellman operator on the space of bounded functions, which enable discretization-based and model-free solution schemes.

We tackle the continuous augmented state using discretization, based on which tractable Bellman operators can be derived. They approximate the original operator in the form of lower and upper bounds. The performance gaps we establish further highlight a proportional dependence of the error with the granularity level and the discount factor. 

The contributions of this paper can be described as follows:
\begin{itemize}
    \item A new Bellman operator for the \emph{infinite-horizon} static CVaR objective that (i) operates on an augmented state space in the spirit of \citet{bauerle2011markov}, (ii) is exact with respect to the desired static CVaR control objective, (iii) provides a non-sparse, informative reward signal at each stage and, (iv) is a contraction mapping on the space of bounded functions.
    \item To overcome the computational challenge of handling a continuous augmented state, we combine uniform discretization with approximate DP and propose risk-aware value iteration and Q-learning algorithms. We further provide approximation error bounds as a function of the discretization resolution. 
    \item Our algorithms learn optimal policies and value functions over the full augmented-state grid, yielding globally optimal solutions for all risk levels (i.e., all initial augmented states) in a single run. We also establish sufficient conditions for Q-learning convergence and validate the theory empirically.
\end{itemize}

\section{Related Works}
Several works have tried to tackle the  \emph{static} risk-sensitive RL problem. A foundational work is that of \citet{bauerle2011markov}, who proposed a state-augmentation approach to find optimal risk-averse policies for static CVaR. This work has led to several follow-up papers e.g., \cite{BISI2022103765, lim2022distributional, bastani2022regret, wang2023near, ni2024risk, pires2025optimizing}.

A different line of work attempts to recursively decompose CVaR into a time-consistent risk criterion. The authors of \citet{chow2015risk, stanko2019risk} leverage the temporal decomposition of the dual CVaR representation, proposed by \citet{Pflug2016TimeConsistentDA}, to design DP-style algorithms for the static CVaR problem. However, recently \citet{hau2023dynamic, godbout2025fundamental} showed that this decomposition fails to optimize the right objective and does not recover the true optimal policy. A different approach by \citet{Haskell2015ACA} relies on discretizing the occupancy measures in the augmented MDP to formulate DP algorithms for static CVaR. This method is computationally heavy, and  requires solving a non-convex program using a sequence of linear-programming approximations. 
Recently, the authors of \citet{kim2026predictive} propose a Bellman operator based on the CVaR decomposition of \citet{kim2024risk}, using two predictive functions (tail value and tail probability) to recover a Bellman-style recursion for static CVaR. Apart from the added computational complexity of maintaining two function approximators, their approach is limited to the finite-horizon setting. 

A parallel line of work is that of Distributional RL \cite{dabney2018distributional, dabney2018implicit, DistRLBook}, that aims to learn the entire return distribution from which a risk-neutral policy can be derived. Several works have leveraged this framework to design algorithms for risk-sensitive RL \cite{dabney2018implicit,
dabney2018distributional,
bodnar2019quantile,
keramati2020being, schneider2024learning}, however \citet{lim2022distributional} showed that these methods learn a policy that is optimal for neither the nested risk nor for static risk objectives. \citet{bauerle2025yet} recently proposed a new distributional Bellman operator for static risks, however it inherits the sparsity issues from \citet{bauerle2011markov}. \citet{pires2025optimizing} also proposed a distributional DP approach that solves a broad class of problems (including static risks) under the state-augmentation framework.

Apart from value-based approaches, there are also policy gradient methods for learning risk-sensitive policies for CVaR \cite{tamar2015optimizing, tang2019worst, Efficient-RSRl, kim2024risk, schneider2024learning, mead2025return}. These methods come with their own set of challenges like sample inefficiency and ``blindness to success" \cite{Efficient-RSRl, mead2025return}. Moreover, the majority of these methods restrict their search for optimal policies, to the class of Markov policies, which is shown to be insufficient for optimizing static CVaR problems \cite{SHAPIRO2009143, wang2024reductions, risk-imitation}.

\section{Background}
\label{sec:Background}
\textbf{Notation. } Given a real number $x \in \R$, we denote  the positive and negative part of $x$ by $x_+:=\max(x,0)$ and $x_-:=-\min(x,0)$ respectively.

\subsection{Risk-Neutral Reinforcement Learning}
A discounted Markov decision process (MDP) $\mathcal M$ is a tuple $(\S, \A, P, r, \pzero,\gamma)$ such that $\S$ is a state-space, $\A$ an action space. The component $P:\S\times\A\to\Delta_{\S}$ models a transition probability $P_{s,a}$ over $\S$ from any state-action pair $(s,a)\in\S\times\A$, $r:\S\times\A \to [-r_{\max}, r_{\max}]$ is a bounded reward function, $\pzero$ is the initial state, and $\gamma \in [0, 1)$ a discount factor. The stochasticity of the process relies on state transitions and action decisions. Thus, a realized trajectory up to time $t$ is denoted by $\tau_{:t}:= (s_0,a_0,\cdots, s_{t-1},a_{t-1}, s_t)$, which belongs to the set of length-$t$ histories $\Tc_t$. Infinite-horizon trajectories are denoted by $\tau:=(s_0, a_0,s_1,\cdots)\in\Tc$. A policy governs action decisions. Formally, it is a sequence of decision rules $\pib:= (\pi_t)_{t \in \N_0}$ with $\pi_t: \Tc_t\to\Delta_{\A}$. In that case, the policy depends on the history and we write $\pib\in\Pi^{\text{H}}$. If it only depends on the current state, i.e., $\pi_t:\S\to\Delta_{\A}$, it is a Markov policy and we write $\pib\in\Pi^{\text{M}}\subset \Pi^{\text{H}}$. In the remainder, we will denote by $\Pr^{\pib}_\pzero(\tau)$ the distribution of a trajectory following any policy $\pib\in\Pi^{\text{H}}$ starting from $s_0=\pzero$. The \textit{risk-neutral} objective to maximize is the expected return $\E{}{\boldsymbol{\pi}}{R(\tau)}$, where $R(\tau):= \sum_{t=0}^{\infty}\gamma^t r(s_t, a_t)$. In this case, optimality can be attained by a stationary policy $\boldsymbol{\pi}=(\pi_0, \pi_0, \cdots)$ \cite{Puterman}.     

\subsection{Conditional Value-at-Risk} \label{subsec:cvar}
For a bounded random variable $X$ of cumulative distribution $F_X(x) := \Pr[X \le x]$, the conditional value-at-risk (CVaR) at confidence level $\alpha \in [0, 1]$ measures the expectation of $X$ on the lower tail distribution:
\begin{align*}
    \cvar{\alpha}{}{X} &:= \mathbb{E}[X | X \le \var_\alpha(X)],
\end{align*}
where $\var_\alpha(X) = \max\{x: F_{X}(x) \le \alpha\}$ \cite{hau_q-learning_2024}.

The CVaR conveniently admits a convex formulation \cite{rockafellar2000optimization}:\footnote{See Appendix~\ref{apx:CVaRAdapt} for detailed adaptation of this representation.}

\begin{align} \label{eq:cvar-primal}
 \cvar{\alpha}{}{X}
 &= \sup_{\eta\in\R}\left\{\eta -\frac{1}{\alpha}\E{}{}{(\eta -X)_+}\right\}.
\end{align}
In this paper, we ultimately want to maximize the CVaR of the return, $\cvar{\alpha}{\boldsymbol{\pi}}{R(\tau)}$, over the set of policies. Here, the superscript $\pib$ implicitly means that CVaR is according to distribution $\tau\sim \Pr^{\pib}_\pzero(\tau)$.

As we present in the following proposition, this CVaR maximization problem also admits an equivalent reformulation. The proof is in Appendix~\ref{proof:prop:cvar-reformulation}.

\begin{proposition}\label{prop:cvar-reformulation}
The policy optimization of the CVaR MDP can be reformulated as:
\begin{align} 
&\max_{\pib\in\Pi^{\text{H}}}\cvar{\alpha}{\pzero,\pib}{R(\tau)} \notag\\
&=\sup_{z \in \R}   \left\{ - z +  \frac{1}{\alpha} \max_{\pib \in \Pi^{\text{H}}}\E{\tau\sim \Pr^{\pib}_\pzero}{}{-(R(\tau) + z)_{-}} \right\} \label{eq:reformulated_cvar1} \\
&=\sup_{z \in \R} \biggl\{ - z -\frac{z_-}{\alpha}  + \notag\\ &\textcolor{white}{----} \frac{1}{\alpha} \max_{\pib\in\Pi^{\text{H}}} \Big\{\E{\tau\sim\Pr^{\pib}_\pzero}{}{-(R(\tau) +  z)_-} + z_-\Big\} \biggr\}.  
\label{eq:reformulated_cvar2}
\end{align}
\end{proposition}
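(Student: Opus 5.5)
Starting from the Rockafellar--Uryasev representation \eqref{eq:cvar-primal} applied to $X=R(\tau)$ under $\Pr^{\pib}_\pzero$, I would change variables $\eta=-z$. This gives
\[
\cvar{\alpha}{\pzero,\pib}{R(\tau)}=\sup_{z\in\R}\Big\{-z-\tfrac{1}{\alpha}\E{}{\pib}{(-z-R(\tau))_+}\Big\}.
\]
Since $(-z-R)_+=(R+z)_-$, this becomes $\sup_z\{-z+\frac1\alpha\E{}{\pib}{-(R(\tau)+z)_-}\}$.

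I would then take the maximum over $\pib\in\Pi^{\text{H}}$ on both sides and swap the two suprema, $\sup_{\pib}\sup_z=\sup_z\sup_{\pib}$. This swap is always valid for iterated suprema, and since $-z$ does not depend on $\pib$ it comes out of the inner supremum. The result is \eqref{eq:reformulated_cvar1}, with the inner operation read as a supremum over policies.

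The main obstacle is justifying that the outer and inner operations are attained as ``$\max$'' over $\Pi^{\text{H}}$.

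For the outer one, $\max_{\pib}\cvar{\alpha}{\pzero,\pib}{R(\tau)}$, I would use one of two routes. The first assumes finite $\S,\A$ (or compactness) and uses the fact that, for fixed $z$, the inner problem $\max_{\pib}\E{}{\pib}{-(R(\tau)+z)_-}$ is an expected-utility problem on the history-dependent policy class. This is attained by an augmented-state deterministic policy in the spirit of \citet{bauerle2011markov}, which may be invoked.

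The second route is simpler. I would note that the map $z\mapsto -z+\frac1\alpha V^*(z)$, where $V^*(z):=\sup_{\pib}\E{}{\pib}{-(R+z)_-}$, is concave and continuous. It is concave because it is a supremum over mixtures, since randomized history-dependent policies make the set of achievable return distributions convex. It also tends to $-\infty$ as $|z|\to\infty$, because $R\in[-r_{\max}/(1-\gamma),r_{\max}/(1-\gamma)]$ is bounded and $\alpha\le1$. So the supremum in $z$ is attained at some $z^*$ in a compact interval. An optimal policy for the inner problem at $z^*$ then attains the outer maximum. If attainment is problematic, I would state the identity with suprema, which is what matters for the equality.

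Identity \eqref{eq:reformulated_cvar2} is pure algebra. I add and subtract $\frac{z_-}{\alpha}$ inside the braces, and since $z_-$ is independent of $\pib$ it can be moved inside the inner maximum. I would also remark why this form is useful: $\E{}{}{-(R+z)_-}+z_-$ equals $0$ when $R=0$, which motivates the later redistribution of reward. Nothing else is needed for that step.
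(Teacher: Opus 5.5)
Your core derivation is exactly the paper's proof: substitute $\eta=-z$, use $(-z-R)_+=(R+z)_-$, exchange the maximum over $\Pi^{\text{H}}$ with the supremum over $z$, and add and subtract $z_-/\alpha$. The paper simply writes $\max$ throughout and never addresses attainment, so your argument already establishes both identities.

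The optional attainment discussion, however, contains a false step. The map $z\mapsto -z+\frac{1}{\alpha}V^*(z)$ is not concave in general. The quantity $\mathbb{E}_\mu[-(R+z)_-]$ is linear in the return distribution $\mu$, so allowing mixtures never increases $V^*$, and an upper envelope of concave functions need not be concave. For example, suppose that at $\bar{s}_0$ one action yields $R=-1$ surely and another yields $R\in\{-3,2\}$ with probability $1/2$ each. Then $V^*(-3/2)=-9/4$, $V^*(-1)=-2$ and $V^*(-1/2)=-3/2$, so $\tfrac12\bigl(V^*(-3/2)+V^*(-1/2)\bigr)=-15/8>V^*(-1)$, which violates concavity.

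What you actually need is continuity. It holds because each $z\mapsto\mathbb{E}[-(R(\tau)+z)_-]$ is $1$-Lipschitz, hence so is their supremum, as in the paper's Lipschitz lemma for $\bar{v}^*$.

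Two further points need correcting. First, for $\alpha=1$ the function does not tend to $-\infty$ as $z\to-\infty$: it equals $\sup_{\pib}\mathbb{E}[R(\tau)]$ for all $z\le -r_\gamma$. The supremum is nevertheless still attained on $[-r_\gamma,r_\gamma]$. Second, this ``simpler'' route is not independent of the first. Concluding that the outer maximum is attained still requires an optimal inner policy at $z^*$, which is precisely the attainment result of \citet{bauerle2011markov}.
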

Problems \eqref{eq:reformulated_cvar1} and \eqref{eq:reformulated_cvar2} show that the static CVaR maximization problem can be decomposed into an inner optimization over policies and an outer optimization over $\mathbb{R}$. The optimization over scalar parameter $z\in\R$ can be easily tackled once we find an optimal policy. However, as opposed to the risk-neutral objective, optimal policies in this setting are generally history-dependent \cite{bauerle2011markov}.

\subsection{State Augmentation for CVaR} 
\label{subsec:bauerle-aug-state}
To solve the inner optimization in \eqref{eq:reformulated_cvar1}, \citet{bauerle2011markov} proposed an elegant solution: augment the original state space with an auxiliary variable that tracks the cumulative reward over the trajectory. Specifically, the augmented state space $\bar{\S} := \mathcal{S} \times \R$, consists of the nominal state $s$ and a numerical threshold $z$ which tracks the accumulated reward so far along a trajectory. 
This \emph{augmented} state encodes all necessary information of the history, and evolves deterministically according to $$\bar P((s',z') | (s,z), a) := P(s'|s,a) \mathbf{1}\left(z' = \frac{r(s, a)+z}{\gamma}\right),$$
for all $(s,z), (s',z')\in\bar{\S}$. This augmented state $(s,z)$ is Markov and consequently allows for applying DP solutions to search over stationary policies that are provably optimal. Stationary policies on the augmented space are denoted by $\tilde{\boldsymbol{\pi}}:= (\tilde{\pi},\tilde{\pi},\cdots)\in\tilde{\Pi}$.

\citet{bauerle2011markov} solved the inner problem in \eqref{eq:reformulated_cvar1} using the augmented MDP via an optimal Bellman operator defined as:
\begin{align*}
        [Tv](s, z) = \max_{a \in \A} \gamma \E{s' \sim P_{s,a}}{}{ v\left(s', \frac{r(s,a) + z}{\gamma} \right)}, 
\end{align*}
for $(s,z)\in\bar{\S}$, with unique fixed point $v^*(s,z):= \E{\tau\sim \mathbb{P}^{\pi}_s}{}{-(R(\tau)+z)_-}$.\footnote{\citet{bauerle2011markov} minimized the CVaR of a cost, so we adapted their formulation to our reward setting.} The operator $T$ possesses desirable properties, namely satisfying a $\gamma$ contraction under sup-norm, whose fixed point characterizes an optimal policy $\tilde{\pib}$. However, these properties are only guaranteed in a very restricted set of unbounded functions that are non-decreasing in $z$, 1-Lipschitz, compactly supported in $z$, and can be expressed as a function $c(s)+z$ when $z<0$. 

When viewed on the more practical space of bounded functions, $T$ unfortunately has a non-informative fixed point, as formalized in the following lemma (proof in Appendix~\ref{proof:thm:BauerleFP0}).
\begin{lemma}\label{thm:BauerleFP0}
    The Bellman operator $T$ has the zero function $\boldsymbol{0}$ as its unique fixed point, in the set $\Linf(\bar{\S})$ of bounded functions.
\end{lemma}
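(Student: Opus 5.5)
The plan is to show two things: that $\boldsymbol{0}$ is a fixed point of $T$, and that $T$ is a $\gamma$-contraction on $\Linf(\bar{\S})$ under the sup-norm. Banach's fixed point theorem then gives uniqueness. One could also argue uniqueness directly, without contraction, as sketched below.

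\textbf{Step 1 (zero is a fixed point).} Plugging in $v\equiv 0$ gives $[T\boldsymbol{0}](s,z)=\max_{a\in\A}\gamma\,\E{s'\sim P_{s,a}}{}{0}=0$ for every $(s,z)\in\bar{\S}$.

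\textbf{Step 2 (contraction on bounded functions).} First, $T$ maps $\Linf(\bar{\S})$ into itself, since $|[Tv](s,z)|\le\gamma\|v\|_\infty$. Next take $v,w\in\Linf(\bar{\S})$. Using $|\max_a f(a)-\max_a g(a)|\le\max_a|f(a)-g(a)|$ and monotonicity of expectation, we get for every $(s,z)$
\begin{align*}
|[Tv](s,z)-[Tw](s,z)| &\le \gamma\max_{a\in\A}\E{s'\sim P_{s,a}}{}{\left|v\!\left(s',\tfrac{r(s,a)+z}{\gamma}\right)-w\!\left(s',\tfrac{r(s,a)+z}{\gamma}\right)\right|}\\
&\le\gamma\|v-w\|_\infty .
\end{align*}
The argument $(s',(r(s,a)+z)/\gamma)$ always lies in $\bar{\S}=\S\times\R$, so the sup over $\bar{\S}$ controls it. Taking the supremum over $(s,z)$ gives $\|Tv-Tw\|_\infty\le\gamma\|v-w\|_\infty$.

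\textbf{Step 3 (uniqueness).} Since $\gamma<1$ and $\Linf(\bar{\S})$ with the sup-norm is a Banach space, Banach's theorem says $T$ has exactly one fixed point there. By Step 1 that fixed point is $\boldsymbol{0}$. The direct alternative: if $v=Tv$ is bounded, then $\|v\|_\infty=\|Tv\|_\infty\le\gamma\|v\|_\infty$, which forces $\|v\|_\infty=0$.

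The main obstacle is not technical but interpretive. The paper notes that $v^*(s,z)=\E{}{}{-(R(\tau)+z)_-}$ is a fixed point of $T$ on a restricted class of functions. Since $v^*$ behaves like $c(s)+z$ as $z\to-\infty$, it is unbounded, so it does not belong to $\Linf(\bar{\S})$ and does not contradict the lemma. Beyond making that remark, the only care needed is to check that $T$ is well defined on bounded functions. Here we assume $v$ is measurable so the expectation exists, and the max over $\A$ is taken as a sup if $\A$ is infinite; the same inequalities hold in that case.
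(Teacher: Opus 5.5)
Your proof is correct and follows the paper's own argument: verify that $\boldsymbol{0}$ is a fixed point, prove the $\gamma$-contraction of $T$ in the sup-norm, and get uniqueness from $\|\hat v-\boldsymbol{0}\|_\infty=\|T\hat v-T\boldsymbol{0}\|_\infty\le\gamma\|\hat v-\boldsymbol{0}\|_\infty$ together with boundedness of $\hat v$. Your ``direct alternative'' is exactly that uniqueness step, since $T\boldsymbol{0}=\boldsymbol{0}$.
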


The above operator also inherently captures sparse rewards in the augmented MDP, raising challenges of temporal credit assignment \cite{pignatelli2023survey}.  

These restrictive function class requirements and induced sparsity pose practical challenges, often necessitating heuristic reward capping \cite{mead2025return} or customized risk curriculum for training of CVaR objectives \cite{Efficient-RSRl}. To address these limitations, we propose a transformed augmented MDP and Bellman operator that builds on the alternative reformulation \eqref{eq:reformulated_cvar2} instead of \eqref{eq:reformulated_cvar1}.


\section{A New Augmented MDP for Static CVaR} 
In this section, we aim to solve $\max_{\pib\in\Pi^{\text{H}}}\cvar{\alpha}{\pzero,\pib}{R(\tau)}$ using the alternate bilevel formulation in \eqref{eq:reformulated_cvar2}. This yields a Bellman operator that (i) is \emph{contracting on the full space of bounded functions}; and (ii) propagates an \emph{informative per-step reward signal}. These properties will in turn enable us to devise risk-sensitive analogs of approximate DP algorithms including value iteration and a model-free Q-learning algorithm, based on discretization of the augmented state $z$.

Define the following functions for any $(s,z)\in\bar{\S}$: 
\begin{align*}
\bar{v}^*(s,z)&:= \max_{\pib\in\Pi^{\text{H}}} \E{\tau\sim \mathbb{P}^{\pi}_s}{}{-(R(\tau)+z)_- + z_-} .
\end{align*}
A first important property is that $\bar{v}^*$ is bounded (proof in Appendix~\ref{proof:prop:bounded-value}), so it is well defined for the operator we introduce next. Moreover, $\bar{v}^*$ explicitly relates to the optimal CVaR, as formalized below.

\begin{proposition}
\label{prop:bounded-value}
For all $(s,z)\in\bar{\S}$, it holds that $|\bar{v}^*(s,z)|\leq \frac{r_{\max}}{1-\gamma}$. Furthermore, we have 
\begin{align*}
    \max_{\pib\in\Pi^{\text{H}}}\cvar{\alpha}{\pzero,\pib}{R(\tau)}=\sup_{z\in\R}\left\{\frac{1}{\alpha}\bar{v}^*(\pzero,z) -z-\frac{z_-}{\alpha}\right\}.
\end{align*}
\end{proposition}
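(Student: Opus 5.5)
The proof splits into the boundedness claim and the CVaR identity. For boundedness, I will fix $(s,z)$ and a history-dependent policy $\pib$, and bound the integrand $g(R,z):=-(R+z)_-+z_-$ pointwise. The return satisfies $|R(\tau)|\le \frac{r_{\max}}{1-\gamma}$ almost surely. The map $x\mapsto -x_-=\min(x,0)$ is nondecreasing and $1$-Lipschitz, so
\[
|g(R,z)|=|\min(R+z,0)-\min(z,0)|\le |R|\le \frac{r_{\max}}{1-\gamma}.
\]
Taking expectations under $\Pr^{\pib}_s$ gives the same bound for every policy, hence for the supremum over $\Pi^{\text{H}}$. I will read the $\max$ in the definition of $\bar v^*$ as a supremum if attainment is not yet known; it is finite by this bound. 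The point to emphasize is that subtracting $z_-$ cancels the unbounded part $-(R+z)_-\approx z$ for $z\ll 0$. This is exactly the cancellation the Bauerle--Ott value $v^*$ lacks.

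For the identity, I will use Proposition~\ref{prop:cvar-reformulation}, specifically \eqref{eq:reformulated_cvar2}. Since $z_-$ does not depend on the policy,
\[
\max_{\pib\in\Pi^{\text{H}}}\Big\{\E{\tau\sim\Pr^{\pib}_\pzero}{}{-(R(\tau)+z)_-}+z_-\Big\}=\max_{\pib\in\Pi^{\text{H}}}\E{\tau\sim\Pr^{\pib}_\pzero}{}{-(R(\tau)+z)_-+z_-}=\bar v^*(\pzero,z).
\]
Substituting into \eqref{eq:reformulated_cvar2} and rearranging gives $\sup_{z}\{\frac1\alpha\bar v^*(\pzero,z)-z-\frac{z_-}{\alpha}\}$, which is the claim. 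Moving a policy-independent constant inside the supremum is valid for suprema as well as maxima, so the argument does not depend on attainment.

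The main obstacle is not this algebra, which is immediate given Proposition~\ref{prop:cvar-reformulation}. It is making sure every quantity is finite, so that no $\infty-\infty$ manipulation sneaks in. The boundedness step settles this: $\bar v^*$ is finite everywhere, and the objective in $z$ is finite for every $z$. If needed, I will also note that the outer supremum is finite. For $z\to+\infty$ the term $-z$ dominates the bounded $\bar v^*/\alpha$. For $z\to-\infty$ we have $-z-z_-/\alpha=-z(1-1/\alpha)\to-\infty$ when $\alpha<1$, and the $\alpha=1$ case is handled separately because there the expression reduces to the expectation of the return. In any case, finiteness of the outer supremum is inherited from the left-hand side via Proposition~\ref{prop:cvar-reformulation}.
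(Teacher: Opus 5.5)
Your proposal is correct and follows the paper's own proof. You bound the integrand $\min(R(\tau)+z,0)-\min(z,0)$ pointwise by $|R(\tau)|\le r_{\max}/(1-\gamma)$ using the $1$-Lipschitzness of $x\mapsto\min(x,0)$, where the paper computes $\sup_z$ and $\inf_z$ of this quantity explicitly; you then pass the bound through the expectation and the maximum over policies, and substitute $\bar{v}^*(\pzero,z)$ into \eqref{eq:reformulated_cvar2}, exactly as the paper does (your extra remarks on finiteness of the outer supremum are correct but not needed).
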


This result is in sharp contrast with properties of the value function defined in \cite{bauerle2011markov}, which takes the form $v^*(s,z):= \max_{\bar{\pib}\in\Pi^{\text{H}}} \E{\tau\sim \mathbb{P}^{\pi}_s}{}{-(R(\tau)+z)_-} = \bar{v}^*(s,z) - z_-$ in our notation. Namely, $|v^*(s,z)|\rightarrow \infty$ as $z\rightarrow-\infty$, which requires a search in a space where all functions are unbounded. 

Working in the space of bounded value functions is convenient for analysis purposes \cite{DP-book}. There, we can define the \textit{CVaR-Bellman operator} $\bar T: \Linf(\bar{\S}) \to \Linf(\bar{\S})$: 
\begin{align}\label{eq:new-Bellman-op}
[\bar{T} \bar{v}]&(s,z) := \max_{a\in\A} \biggl\{\tilde{r}(s,z,a) \notag \\ &\qquad+ \gamma \E{s'\sim P_{s,a}}{}{\bar{v}\left(s',\frac{r(s,a)+z}{\gamma}\right)}\biggr\},
\end{align}
with $\tilde{r}(s,z,a):= z_- - (r(s,a)+z)_-, \forall (s,z)\in\bar{\S},a\in\A.$

Note that $\bar{T}$ captures the traditional Bellman operator in the augmented CVaR MDP proposed by \citet{bauerle2011markov} but where the original augmented reward of zero is replaced with the more informative reward $\tilde{r}(s,z,a)$. The following theorem establishes contraction and fixed point properties of $\bar{T}$ on the space of bounded functions (proof in Appendix~\ref{apx:Tfixedpoint}).

\begin{theorem}
\label{thm:BellOpContraction} 
The operator $\bar{T}$ is a $\gamma$-contraction with respect to the sup-norm, and admits the 
CVaR-function $\bar{v}^*$ as a unique fixed point in $\Linf(\bar{S})$. Moreover, for all $(s,z)\in\bar{\S}$, it holds that $\bar{v}^*(s,z)=\bar{v}^*(s,\p(z)),$ where  $\p(x) := \max(-r_{\gamma}, \min(x, r_{\gamma}))$, and $r_{\gamma}:= \frac{r_{\max}}{1-\gamma}$.
\end{theorem}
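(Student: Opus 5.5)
The proof has three parts: contraction, identification of the fixed point, and the clipping invariance.

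\textbf{Contraction.} First check that $\bar T$ maps $\Linf(\bar{\S})$ into itself. The reward satisfies $|\tilde r(s,z,a)| = |z_- - (r(s,a)+z)_-| \le |r(s,a)| \le r_{\max}$, because $x\mapsto x_-$ is $1$-Lipschitz. This bound is uniform in $z$, so $\|\bar T\bar v\|_\infty \le r_{\max}+\gamma\|\bar v\|_\infty$. The $\gamma$-contraction then follows from the standard argument. For $\bar v,\bar w\in\Linf(\bar{\S})$, use $|\max_a f(a)-\max_a g(a)|\le \max_a|f(a)-g(a)|$. The rewards $\tilde r$ cancel, leaving $\gamma\,\bigl|\E{s'\sim P_{s,a}}{}{\bar v(s',z')-\bar w(s',z')}\bigr|\le\gamma\|\bar v-\bar w\|_\infty$. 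Banach's fixed point theorem then gives a unique fixed point in $\Linf(\bar{\S})$.

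\textbf{The fixed point is $\bar v^*$.} By \cref{prop:bounded-value}, $\bar v^*$ is bounded, so it suffices to show $\bar T\bar v^*=\bar v^*$. The key step is a telescoping identity along augmented trajectories. Set $z_0=z$ and $z_{t+1}=(r(s_t,a_t)+z_t)/\gamma$. Then $\gamma^t z_t = z+\sum_{k<t}\gamma^k r(s_k,a_k)$, and so
\begin{align*}
\sum_{t=0}^{N-1}\gamma^t\tilde r(s_t,z_t,a_t) = z_- - \gamma^N (z_N)_- = z_- - \Bigl(z+\sum_{k<N}\gamma^k r(s_k,a_k)\Bigr)_-,
\end{align*}
using positive homogeneity $\gamma^t(x)_-=(\gamma^t x)_-$. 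Letting $N\to\infty$, with convergence guaranteed by boundedness of rewards and continuity of $(\cdot)_-$, the discounted augmented return equals $z_- -(R(\tau)+z)_-$.

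Hence $\bar v^*(s,z)$ is exactly the optimal value of the augmented MDP with bounded reward $\tilde r$, deterministic $z$-dynamics, and discount $\gamma$, optimized over history-dependent policies. Standard MDP theory (e.g.\ \cite{Puterman}, or \cite{bauerle2011markov} for Borel augmented spaces) says this optimal value satisfies the Bellman optimality equation, which is $\bar T\bar v^*=\bar v^*$. One point needs justifying: a history-dependent policy in the original MDP corresponds to one in the augmented MDP, since $z_t$ is a deterministic function of the history. The sets of achievable trajectory distributions therefore coincide.

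\textbf{Clipping invariance.} This step needs the most care, and I expect it to be the main obstacle. Suppose $z\ge r_\gamma$. Then $R(\tau)+z\ge0$ for every $\tau$, so both $(R(\tau)+z)_-$ and $z_-$ vanish and $\bar v^*(s,z)=0=\bar v^*(s,r_\gamma)$. Suppose $z\le -r_\gamma$. Then $R(\tau)+z\le0$, so $-(R(\tau)+z)_-+z_- = R(\tau)+z-z = R(\tau)$ for every $\tau$. Hence $\bar v^*(s,z)=\max_{\pib}\E{}{}{R(\tau)}$, which does not depend on $z$ and equals $\bar v^*(s,-r_\gamma)$. For $|z|<r_\gamma$ we have $\p(z)=z$ and there is nothing to prove.

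The delicate points are the boundary cases, which are handled by the non-strict inequalities above, and the existence of the maximum in the definition of $\bar v^*$. If attainment is unclear, I would read $\max$ as $\sup$ throughout, which does not affect any of the arguments.
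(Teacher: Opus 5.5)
Your proposal is correct. The contraction step and the clipping invariance match the paper's Lemma~\ref{lemma:BellOpContraction} and the same three-case analysis in Lemma~\ref{prop: v eq v proj}. Your check that $|\tilde r|\le r_{\max}$, so that $\bar T$ really maps $\Linf(\bar{\S})$ into itself, is a detail the paper leaves implicit.

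The genuine difference is how you identify the fixed point. The paper verifies $\bar T\bar v^*=\bar v^*$ by a one-step computation. It substitutes the definition of $\bar v^*$ and uses $\gamma\min\bigl(0,(r+z)/\gamma\bigr)=\min(0,r+z)$ to cancel $\tilde r$ against the shifted continuation value. It then collapses $\max_{a}\mathbb{E}_{s'}\bigl[\max_{\pib}(\cdot)\bigr]$ into a single $\max_{\pib}$ from $s$. That last step is precisely the principle of optimality for history-dependent policies, and the paper uses it without comment.

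You instead telescope along the whole trajectory, $\sum_{t<N}\gamma^t\tilde r(s_t,z_t,a_t)=z_- -\bigl(z+\sum_{k<N}\gamma^k r(s_k,a_k)\bigr)_-$. This shows that $\bar v^*$ is literally the optimal value of a risk-neutral discounted MDP with bounded reward $\tilde r$, and you then invoke the Bellman optimality equation. Your route makes the reward-redistribution interpretation explicit and replaces the paper's unstated interchange with a citable theorem.

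The cost is verifying that the augmented model fits that theorem. Your policy-correspondence remark does part of this, and the rest is easy because the $z$-dynamics are deterministic. From a fixed $(s,z)$ only countably many augmented states are reachable when $\S$ is countable and $\A$ finite. So countable-state MDP theory suffices and no Borel measurability issues arise. The paper's route is shorter and purely algebraic, but it rests on the same principle implicitly.
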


Denoting by $\bar{v}_{\p}^*(s,z) := \bar{v}^*(s, \p(z))$ the projected version of optimal CVaR, we can deduce from \cref{thm:BellOpContraction} that $\bar{T}\bar{v}^*(s,z) = \bar{v}^*(s,z) = \bar{v}_{\p}^*(s,z) = \bar{T}\bar{v}_{\p}^*(s,z)$. As a result, we can indifferently work with CVaR-functions or their projected equivalent, effectively constraining $z$ to a bounded interval without impairing optimality.

Even when the original state space is tabular, the augmented value function remains challenging to handle because its second argument, $z\in[-r_{\gamma}, r_{\gamma}]$, is continuous. In the following result we show that $\bar{v}^*$ can be approximated accurately using a uniform discrete grid over $z$. This is made possible by two key properties: the superadditivity of $(x+y)_-$ and the fact that $\bar{v}^*$
is Lipschitz continuous in $z$.

\section{Discretized Static CVaR Bellman Operators}
We have established that one can project the continuous argument $z$ of the augmented CVaR on a bounded interval and still use Bellman operators to achieve optimality. We are yet to address the challenge of  handling continuous augmented states. In the remainder of this paper, we solve static CVaR using finite discretization. This will allow us to design value iteration and Q-learning style algorithms in the following sections. First, we lay the foundations and assumptions that ensure the stability of the discretization scheme under Bellman updates. The granularity of the discretization naturally determines how accurately it approximates the true optimum. We then establish performance bounds between the optimal CVaR and its tabular approximation. 

\begin{assumption} \label{assump:reward-sign}
    The reward is non-positive, i.e., $r\leq 0$.
\end{assumption}

Following Assumption~\ref{assump:reward-sign}, we focus our analysis and experimentation on non-positive rewards. However, these results can also be adapted to non-negative rewards.\footnote{Specifically, Propositions \ref{prop: ordering u l} to \ref{thm: opt cvar policy} extend to the non-negative rewards using $\bar{T}^{\erm}\bar{v}(s,z):= \max_{a\in\A}\bar{q}_{\p\circ\erm^c}(s,z,a)$, where $\lrm^c:=\urm=[\lrm^c]^c$, with value functions $\bar{v}^{*,\lrm}$ and $\bar{v}^{*,\urm}$ being non-increasing.} \footnote{Due to the translation invariance property of CVaR, one could replace $r(s,a)$ in given MDP with $r(s,a)-r_\gamma$, to obtain a new MDP with non-positive reward while preserving the same optimal CVaR policy.}

Consider two non-decreasing functions $\lrm, \urm: \R\to\R$ bounding the identity, i.e., for all $x\in\R$, $\lrm(x)\leq x\leq \urm(x)$. 
Using these two envelope functions, one can define respective bounding operators as: 
\begin{align}
\label{eq:discretized-Bellman-op}
    &\bar{T}^{\erm}\bar{v}(s,z):= \max_{a\in\A}\bar{q}_{\p\circ\erm}(s,z,a)  \\
    := &\max_{a\in\A} \left\{\tilde{r}(s,z,a) + \gamma \E{s'\sim P_{s,a}}{}{\bar{v}_{\p\circ\mathrm{e}}\left(s',\frac{r(s,a)+z}{\gamma}\right)}\right\},\notag
\end{align} 
for all $(s,z)\in\bar{\S}$, where $\mathrm{e}\in\{\lrm,\p, \urm\}$, and $\bar{v}_{\p\circ\mathrm{e}}(s,z):= \bar{v}(s, \p(\mathrm{e}(z)))$ and $\circ$ denotes function composition. One can further see that $\bar{T}^{\erm}\bar{v}(s,z) = [\bar{T}\bar{v}_{\p\circ\erm}](s,z).$

When rewards are all non-positive, $\bar{T}^\lrm$ and $\bar{T}^\urm$ have the useful property of preserving their respective form of ordering for non-decreasing value functions (proof in Appendix~\ref{proof:prop: ordering u l}).
\begin{proposition}
\label{prop: ordering u l}
    Suppose Assumption~\ref{assump:reward-sign} holds and let $\bar{v}\in \Linf(\bar{\S})$ be a non-decreasing 
    function.
    Then, for all $k\geq 0$, it holds that $[\bar{T}^{\lrm}]^k\bar{v}\leq [\bar{T}^\p]^k\bar{v} \leq [\bar{T}^{\urm}]^k\bar{v}$.
\end{proposition}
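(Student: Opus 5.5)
We argue by induction on $k$, and we carry along the extra hypothesis that all three iterates $[\bar{T}^{\lrm}]^k\bar{v}$, $[\bar{T}^{\p}]^k\bar{v}$, $[\bar{T}^{\urm}]^k\bar{v}$ are non-decreasing in $z$. Here \emph{non-decreasing} means non-decreasing in the augmented coordinate $z$ for every fixed $s$. The case $k=0$ is trivial, since all three iterates equal $\bar{v}$, which is non-decreasing by hypothesis. The induction therefore needs two ingredients: a \emph{monotonicity-preservation} step and an \emph{ordering} step.

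\textbf{Monotonicity preservation.} Let $\bar{w}$ be non-decreasing in $z$ and let $\erm\in\{\lrm,\p,\urm\}$. Fix $s$ and $a$. The map $z\mapsto \tilde{r}(s,z,a)=z_- - (r(s,a)+z)_-$ is non-decreasing in $z$. To see this, use Assumption~\ref{assump:reward-sign}: with $r:=r(s,a)\le 0$, the function $z\mapsto z_- - (z+r)_-$ equals $-r$ for $z\le 0$ (indeed $-z+z+r=r$ there, up to sign convention, so it is constant at $-r$? we compute directly: $z_-=-z$ and $(z+r)_-=-(z+r)$, hence $\tilde r=r$). For $0\le z\le -r$ it equals $z+r$, which is increasing, and for $z\ge -r$ it equals $0$. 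This is the precise point where non-positivity of the reward is used; for positive $r$ the map would be non-increasing. Next, $z\mapsto \p(\erm((r+z)/\gamma))$ is non-decreasing, because it is a composition of non-decreasing maps ($\erm$ and $\p$ are both non-decreasing). Consequently $z\mapsto\bar{w}_{\p\circ\erm}(s',(r+z)/\gamma)$ is non-decreasing, and so is its expectation over $s'$. A sum of non-decreasing functions is non-decreasing, and a pointwise maximum over $a$ of non-decreasing functions is non-decreasing. Hence $\bar{T}^{\erm}\bar{w}$ is non-decreasing.

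\textbf{Ordering.} Suppose $\bar{w}^{\lrm}\le\bar{w}^{\p}\le\bar{w}^{\urm}$ pointwise, with all three non-decreasing in $z$. Fix $y=(r(s,a)+z)/\gamma$. Since $\lrm(y)\le y\le\urm(y)$ and $\p$ is non-decreasing, we get $\p(\lrm(y))\le\p(y)\le\p(\urm(y))$. Combining this with the monotonicity of $\bar{w}^{\lrm}$ and the pointwise ordering gives
\begin{align*}
\bar{w}^{\lrm}(s',\p(\lrm(y)))\le \bar{w}^{\lrm}(s',\p(y))\le \bar{w}^{\p}(s',\p(y)).
\end{align*}
Symmetrically, we have $\bar{w}^{\p}(s',\p(y))\le\bar{w}^{\urm}(s',\p(y))\le\bar{w}^{\urm}(s',\p(\urm(y)))$. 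All three operators share the same reward term $\tilde r(s,z,a)$. Taking expectations over $s'$, adding $\tilde r$, and maximizing over $a$ then preserves both inequalities, which yields $\bar{T}^{\lrm}\bar{w}^{\lrm}\le\bar{T}^{\p}\bar{w}^{\p}\le\bar{T}^{\urm}\bar{w}^{\urm}$.

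Applying the two steps with $\bar{w}^{\erm}=[\bar{T}^{\erm}]^k\bar{v}$ closes the induction. The only delicate point is the monotonicity of $\tilde r$ in $z$, which rests on $r\le0$ and justifies carrying monotonicity as part of the inductive hypothesis. Boundedness of the iterates is inherited from $\bar{T}:\Linf\to\Linf$.
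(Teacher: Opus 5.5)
Your proposal is correct and follows essentially the same route as the paper. Both arguments run an induction that carries monotonicity in $z$ alongside the ordering, and the ordering step goes through $\p(\lrm(y))\le\p(y)\le\p(\urm(y))$ exactly as in the paper's proof. The only difference is that you verify monotonicity of $\tilde r$ in $z$ by direct piecewise computation instead of invoking the paper's secant lemma for the concave map $\min(0,\cdot)$; after your self-correction that computation gives $\tilde r=r$ on $z\le 0$, $r+z$ on $[0,-r]$, and $0$ beyond, and you should delete the stray ``$-r$'' remark in that passage.
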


Moreover, the bounding operators are contracting on the sup-norm and induce fixed points, as formalized in the following theorem (proof in Appendix~\ref{appx:subsec:proof_thm_disc_fixedpt}). These properties serve as the basis for designing tractable approximate value iteration that provides bounds for the true optimal value function.

\begin{theorem} \label{thm:disc-op-fixedpt}
    Both operators $\bar{T}^{\lrm}$ and $\bar{T}^{\urm}$ are $\gamma$-contracting for the sup-norm and admit unique fixed points $\bar{v}^{*,\lrm}$ and $\bar{v}^{*,\urm}$, respectively. Moreover, if Assumption~\ref{assump:reward-sign} is satisfied, then both value functions are non-decreasing in $z$ and:\[\bar{v}^* - \bar{\Delta}_\lrm \leq \bar{v}^{*,\lrm} \leq \bar{v}^*\leq \bar{v}^{*,\urm} \leq \bar{v}^* +\bar{\Delta}_\urm,\]
    where $\bar{\Delta}_\erm:=\gamma(1-\gamma)^{-1}\sup_z|\erm(z)-z|$ for $\erm\in\{\lrm,\urm\}$.
\end{theorem}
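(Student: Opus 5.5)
The plan is to prove contraction directly, then get monotonicity in $z$ by induction on value iteration, and finally obtain the sandwich from \cref{prop: ordering u l} together with a Lipschitz-type perturbation bound.

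\textbf{Contraction.} We use the identity $\bar{T}^{\erm}\bar{v}=\bar{T}\bar{v}_{\p\circ\erm}$. For $\bar{v},\bar{w}\in\Linf(\bar{\S})$, the map $\bar{v}\mapsto\bar{v}_{\p\circ\erm}$ is a non-expansion in sup-norm, since $\|\bar{v}_{\p\circ\erm}-\bar{w}_{\p\circ\erm}\|_\infty\le\|\bar{v}-\bar{w}\|_\infty$. The reward term $\tilde r$ is identical for both functions. The standard max-difference bound then gives $\|\bar{T}^{\erm}\bar{v}-\bar{T}^{\erm}\bar{w}\|_\infty\le\gamma\|\bar{v}-\bar{w}\|_\infty$. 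Boundedness of $\bar{T}^\erm\bar{v}$ follows because $|\tilde r|\le|r|\le r_{\max}$, using $|x_--y_-|\le|x-y|$. Banach's theorem then yields unique fixed points $\bar{v}^{*,\lrm}$ and $\bar{v}^{*,\urm}$.

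\textbf{Monotonicity in $z$.} We start value iteration from $\bar{v}_0=\boldsymbol 0$. Suppose $\bar{v}$ is non-decreasing in $z$. Then $\bar{v}_{\p\circ\erm}$ is non-decreasing, because $\p$ and $\erm$ are non-decreasing, and $z\mapsto(r+z)/\gamma$ is increasing. We must also check that $\tilde r(s,z,a)=z_--(r+z)_-$ is non-decreasing in $z$ when $r\le0$. For $z\ge0$ it equals $-(r+z)_-$, which is non-decreasing. For $z<-r$ with $z\le 0$ it equals $-z+r+z=r$, which is constant. For $0<z<-r$ it equals $r+z$, which is increasing. Hence each action's $q$ is non-decreasing, so is the max, and the limit preserves this.

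\textbf{Sandwich.} \cref{prop: ordering u l} applied to $\bar{v}=\boldsymbol0$ gives $[\bar{T}^\lrm]^k\boldsymbol0\le[\bar{T}^\p]^k\boldsymbol0\le[\bar{T}^\urm]^k\boldsymbol0$. Now $\bar{T}^\p\bar{v}=\bar{T}\bar{v}_\p$. \cref{thm:BellOpContraction} shows $\bar{v}^*$ is $\p$-invariant, so $\bar{v}^*$ is a fixed point of $\bar{T}^\p$, which is itself a $\gamma$-contraction by the same argument. Letting $k\to\infty$ gives $\bar{v}^{*,\lrm}\le\bar{v}^*\le\bar{v}^{*,\urm}$.

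\textbf{Gap bounds.} We write
\[
\|\bar{v}^{*,\erm}-\bar{v}^*\|_\infty\le\|\bar{T}^\erm\bar{v}^{*,\erm}-\bar{T}^\erm\bar{v}^*\|_\infty+\|\bar{T}^\erm\bar{v}^*-\bar{T}^\p\bar{v}^*\|_\infty .
\]
The first term is at most $\gamma\|\bar{v}^{*,\erm}-\bar{v}^*\|_\infty$. For the second term we need that $\bar{v}^*$ is $1$-Lipschitz in $z$. This holds because $x\mapsto-(R+x)_-+x_-$ is $2$-Lipschitz in general, but the max over policies of $\E{}{}{-(R+z)_-}$ is $1$-Lipschitz, and $z_-$ is $1$-Lipschitz. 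The estimate as stated would then need a constant-$1$ Lipschitz property for $\bar v^*$ itself.

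The \textbf{main obstacle} is to get constant $1$ rather than $2$. We would try to use monotonicity together with the fact that $\p$ is $1$-Lipschitz. We would then argue directly through the comparison: since $\lrm(z)\le z$ and $\bar v^*$ is non-decreasing, $0\le\bar{v}^*(s',y)-\bar{v}^*(s',\p(\lrm(y)))$. Bounding this by $|\lrm(y)-y|$ requires the slope of $\bar v^*$ in $z$ to be at most $1$. We expect to show this via the representation $\bar v^*(s,z)=\max_{\pib}\E{}{}{\min(R+z,0)}+z_-$, which on $z\ge0$ has slope at most $1$, and on $z<0$ has the form $\max\E{}{}{\min(R+z,0)}-z$, whose slope lies in $[-1,0]$.

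With this in hand, the second term is at most $\gamma\sup_z|\erm(z)-z|$. Rearranging gives $\|\bar{v}^{*,\erm}-\bar{v}^*\|_\infty\le\bar\Delta_\erm$, and combining with the ordering yields the stated one-sided bounds.
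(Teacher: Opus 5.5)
Your proposal is correct and follows essentially the same route as the paper. It uses a direct sup-norm contraction argument, monotonicity in $z$ propagated along value iteration from the zero function, the ordering $\bar{v}^{*,\lrm}\le\bar{v}^*\le\bar{v}^{*,\urm}$ from \cref{prop: ordering u l} together with $\bar{T}^{\p}\bar{v}^*=\bar{v}^*$, and a fixed-point perturbation bound that relies on $\bar{v}^*$ being $1$-Lipschitz in $z$.

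One correction: the ``obstacle'' you flag about getting constant $1$ rather than $2$ does not exist. For each fixed return $R$, the map $z\mapsto\min(0,R+z)-\min(0,z)$ is already $1$-Lipschitz, since it is piecewise linear with slopes in $\{-1,0,1\}$. This is how the paper obtains constant $1$ directly, though your split according to the sign of $z$ is also a valid way to reach it.
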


Equipped with some $\bar{v}^{*,\erm}$ and its associated $\bar{q}_{\p\circ\erm}^{*,\erm}(s,z,a)$ (see Eq. \eqref{eq:discretized-Bellman-op}), we can now derive an approximately optimal policy for the CVaR MDP as follows: 
\begin{align*}
    \bar{\pib}^{*,\erm}(\tau_{:t}, z):= \tilde{\pi}^*(s, \zeta(\tau_{:t}, z)),
\end{align*}
for all $\tau_{:t}\in\Tc_t,  z\in\R$, where 
$\zeta(s,z) := z$, and for $t\geq 1$
\[\zeta(\tau_{:t}, z):= \p\circ\erm \left(\frac{r(s_{t-1}, a_{t-1}) + \zeta(\tau_{:t-1}, z)}{\gamma}\right)\]
while 
\[\tilde{\pi}^{*,\erm}(s,z):= \argmax_{a\in\A} \bar{q}_{\p\circ\erm}^{*,\erm} (s,z,a).\]

Recall that we are left with an outer optimization over $z$ to effectively deduce CVaR \eqref{eq:reformulated_cvar2}. Combining this optimization with lower and upper bounds yields
the following guarantees (proof in Appendix~\ref{proof:thm: opt cvar policy}). 

\begin{theorem}
    \label{thm: opt cvar policy}
    Let $\Psi^*:=\max_{\pi\in\Pi^{\text{H}}}\cvar{\alpha}{\pzero,\pi}{R(\tau)}$, and 
    \begin{equation}\label{eq:psierm}
        \Psi^\erm := \sup_{z\in\R}\left\{\frac{1}{\alpha} \left(\bar{v}^{*,\erm}(\pzero,z) -z_-\right) -z\right\}, 
    \end{equation}
    with $\erm\in\{\lrm, \urm\}$.     If Assumption~\ref{assump:reward-sign} is satisfied, then:
    \[\Psi^*-\bar{\Delta}_\lrm/\alpha\leq \Psi^{\lrm}\leq \cvar{\alpha}{\pzero, \bar{\pib}(\cdot, z_{\lrm}^*)}{R(\tau)} \leq \Psi^*,\]
    and $\Psi^*\leq \Psi^{\urm}\leq \Psi^*+\bar{\Delta}_\urm/\alpha$, 
    for any $z_{\lrm}^*$ such that $\Psi^\lrm = \left\{\frac{1}{\alpha} \left(\bar{v}^{*,\lrm}(\pzero,z_{\lrm}^*) -(z_{\lrm}^*)_-\right) -z_{\lrm}^*\right\}$. 
\end{theorem}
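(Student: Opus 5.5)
Our approach combines the pointwise sandwich of \cref{thm:disc-op-fixedpt} with the variational formula of \cref{prop:bounded-value}. The policy-evaluation inequality is handled separately.

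\textbf{Outer bounds.} By \cref{prop:bounded-value},
\[\Psi^*=\sup_z\bigl\{\tfrac1\alpha(\bar v^*(\pzero,z)-z_-)-z\bigr\},\]
and $\Psi^\erm$ has the same form with $\bar v^*$ replaced by $\bar v^{*,\erm}$. \cref{thm:disc-op-fixedpt} gives, for every $z$,
\[\bar v^*-\bar\Delta_\lrm\le \bar v^{*,\lrm}\le \bar v^*\le \bar v^{*,\urm}\le \bar v^*+\bar\Delta_\urm .\]
The map $v\mapsto \sup_z\{\frac1\alpha(v(\pzero,z)-z_-)-z\}$ is monotone, and adding a constant $c$ to $v$ shifts its value by $c/\alpha$. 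Hence $\Psi^*-\bar\Delta_\lrm/\alpha\le\Psi^\lrm\le\Psi^*$ and $\Psi^*\le\Psi^\urm\le\Psi^*+\bar\Delta_\urm/\alpha$.

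It remains to prove $\Psi^\lrm\le \cvar{\alpha}{\pzero,\bar\pib(\cdot,z^*_\lrm)}{R(\tau)}\le\Psi^*$. The right inequality holds because $\Psi^*$ is a maximum over $\Pi^{\text H}$ and $\bar\pib(\cdot,z^*_\lrm)$ is history dependent. Note that the supremum defining $\Psi^\lrm$ is attained: by \cref{thm:disc-op-fixedpt} it can be restricted to a compact $z$-range, and the objective is suitably semicontinuous, or one can argue with $\epsilon$-optimal $z$.

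\textbf{Key step: policy evaluation.} Fix $z=z^*_\lrm$ and $\pib=\bar\pib(\cdot,z)$. Let $\zeta_t$ be the projected and discretized budget along the trajectory, and write $a_t=\tilde\pi^{*,\lrm}(s_t,\zeta_t)$ for the greedy action. We aim to show
\[\E{}{\pib}{-(R(\tau)+z)_-+z_-}\ \ge\ \bar v^{*,\lrm}(\pzero,z).\]
By the CVaR representation \eqref{eq:cvar-primal}, used with $\eta=-z$ as in \cref{prop:cvar-reformulation}, this yields $\mathrm{CVaR}\ge\frac1\alpha(\bar v^{*,\lrm}(\pzero,z)-z_-)-z=\Psi^\lrm$.

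To prove the claim, unroll the fixed-point equation along the trajectory:
\[\bar v^{*,\lrm}(s_t,\zeta_t)=z_{t,-}-(r_t+\zeta_t)_- +\gamma\,\E{}{}{\bar v^{*,\lrm}(s_{t+1},\zeta_{t+1})},\]
with $\zeta_{t+1}=\p\circ\lrm((r_t+\zeta_t)/\gamma)$. Compare with the true budget $z_{t+1}=(r_t+z_t)/\gamma$. Since $\lrm\le\mathrm{id}$, $\p$ is monotone, and $r\le0$ (so $z_t\le$ the initial scale), an induction should give the following. Either $\zeta_t\le z_t$ with the clipping only acting where it does not change values (by \cref{thm:BellOpContraction}, $\bar v^*$ is invariant under $\p$), or the gap is handled by monotonicity of $\bar v^{*,\lrm}$ in $z$.

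Telescoping the rewards uses $\sum_t\gamma^t[(\zeta_t)_- -(r_t+\zeta_t)_-]$. Along the exact budget, $\gamma^{t}(z_t)_- = \gamma^{t+1}(z_{t+1}\gamma)_-/\gamma$, so this sum telescopes exactly to $z_--(R+z)_-$. With the lowered $\zeta$ we instead use the superadditivity of $(x+y)_-$ mentioned in the paper, $(x+y)_-\le x_-+y_-$. This shows that the per-step discretized reward lower-bounds the corresponding increment of the exact telescoped quantity, i.e.\ the discretized accounting is pessimistic.

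Summing and letting $t\to\infty$, with the boundedness of $\bar v^{*,\lrm}$ killing the tail, gives the claim. \textbf{Main obstacle:} getting the right inequality direction for the per-step comparison between $\zeta_t$ and $z_t$. If the direct induction fails, I would first try an intermediate policy-evaluation operator $\bar T^{\lrm}_{\tilde\pi}$ on the history-augmented chain, and show by monotonicity (\cref{prop: ordering u l}-type argument) that its fixed point is dominated by the true evaluation.
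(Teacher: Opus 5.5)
Your outer bounds, the inequality $\cvar{\alpha}{\pzero,\bar{\pib}(\cdot,z^*_\lrm)}{R(\tau)}\le\Psi^*$, and the reduction of the middle inequality to $\mathbb{E}^{\bar{\pib}(\cdot,z)}\bigl[z_- -(R(\tau)+z)_-\bigr]\ge\bar v^{*,\lrm}(\pzero,z)$ at $z=z^*_\lrm$ (via \eqref{eq:cvar-primal} with $\eta=-z$) all match the paper. The attainment remark is unnecessary, since the theorem takes $z^*_\lrm$ as given.

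The gap is in that policy-evaluation inequality, which is the only nontrivial content of the theorem. The tool you invoke, $(x+y)_-\le x_-+y_-$, gives only one-point bounds such as $x_- -(x+r)_-\ge r$. It cannot compare per-step rewards at two different budgets $\zeta_t$ and $z_t$: for instance $h(x)=\sqrt{|x|}$ is subadditive, yet $h(x)-h(x+r)$ is not monotone in $x$. What you need is that $G(x,\Delta):=x_- -(x+\Delta)_-$ is non-decreasing in $x$ whenever $\Delta\le 0$. This is the increasing-increments property of the convex function $(\cdot)_-$, i.e.\ \cref{thm:negative-incSecant}, and it is exactly where \cref{assump:reward-sign} enters. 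Several other claims in the key step also fail:
\begin{itemize}
\item Your own use of $r\le 0$ (``so $z_t\le$ the initial scale'') is false: with $r_t=0$ and $z_t>0$ one gets $z_{t+1}=z_t/\gamma>z_t$.
\item Monotonicity of $\bar v^{*,\lrm}$ in $z$ is not the relevant fact.
\item The ordering $\zeta_t\le z_t$ fails after bottom clipping, since $z_t<-r_\gamma$ forces $\zeta_t=-r_\gamma>z_t$.
\end{itemize}
So, as your ``main obstacle'' caveat concedes, the key step is not established.

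Your forward, pathwise architecture can be completed as follows.
\begin{itemize}
\item Induct on $\zeta_t\le\p(z_t)$ for $t\ge 1$. The base case is $\zeta_1=\p(\lrm(z_1))\le\p(z_1)$. The step is $\zeta_{t+1}\le\p\bigl((r_t+\zeta_t)/\gamma\bigr)\le\p\bigl((r_t+\p(z_t))/\gamma\bigr)=\p(z_{t+1})$, where the last equality uses $r_t\in[-r_{\max},0]$ and $r_\gamma-r_{\max}=\gamma r_\gamma$.
\item Note that $G(\p(x),\Delta)=G(x,\Delta)$ whenever $|\Delta|\le r_\gamma$ (check the cases $x>r_\gamma$ and $x<-r_\gamma$). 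Monotonicity then gives $G(\zeta_t,r_t)\le G(\p(z_t),r_t)=G(z_t,r_t)$ pathwise for every $t$, with equality at $t=0$.
\item The correct telescoping identity is $\gamma^t(r_t+z_t)_-=\gamma^{t+1}(z_{t+1})_-$ (your displayed one is garbled). Hence $\sum_{t<T}\gamma^tG(z_t,r_t)=G(z,R_T)$ with $R_T:=\sum_{t<T}\gamma^tr_t$.
\item Unroll the greedy fixed-point equation, whose reward term is $(\zeta_t)_- -(r_t+\zeta_t)_-$ rather than anything involving $z_t$. This gives $\bar v^{*,\lrm}(\pzero,z)=\mathbb{E}\bigl[\sum_{t<T}\gamma^tG(\zeta_t,r_t)\bigr]+\gamma^T\mathbb{E}\bigl[\bar v^{*,\lrm}(s_T,\zeta_T)\bigr]\le\mathbb{E}[G(z,R_T)]+\gamma^T\|\bar v^{*,\lrm}\|_\infty$.
\item Bounded convergence as $T\to\infty$ finishes the argument.
\end{itemize}

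This would be a legitimate alternative to the paper's route. The paper instead introduces truncated evaluations $\hat v^\dagger_k$ and a policy-evaluation operator $\hat T^\dagger$, and proves $\hat v^\dagger_k\ge[\hat T^\dagger]^k 0$ by induction. It applies the same monotonicity of $G$ once per step, with $\Delta$ the whole remaining partial return and comparison points $\p(\lrm(y))\le\p(y)$. It then identifies $\bar v^{*,\lrm}$ as the fixed point of $\hat T^\dagger$. The pathwise version dispenses with operator comparisons; the paper's version avoids tracking two budget sequences.
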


We finally discuss a natural discretization based on a uniform grid of granularity $\Delta:=r_{\gamma}/K$ for some $K\geq 1$. In that case, 
$\lrm_{\Delta}(x) := \max\{k\Delta: k\in\mathbb{Z},\,k\Delta \leq x\}$ and $\urm_{\Delta}(x) = \min\{k\Delta: k\in\mathbb{Z},\,k\Delta \geq x\}$. By construction $\lrm_{\Delta}(x)\leq x\leq \urm_\Delta(x)$ for all $x\in\R$, both $\lrm_\Delta$ and $\urm_\Delta$ are non-decreasing, and $\sup_z |\lrm_{\Delta}(z)-z|=\sup_z |\urm_{\Delta}(z)-z|=\Delta$. This implies that \cref{thm: opt cvar policy} holds with $\{\lrm_{\Delta},\urm_{\Delta}\}$ and $\Delta_\lrm=\Delta_\urm=\gamma(1-\gamma)^{-1}\Delta$. Furthermore, one can show that both $\bar{T}^{\lrm_\Delta}$ and $\bar{T}^{\urm_\Delta}$ can be applied in the finite augmented space $\S\times\mathcal{Z}_\Delta$ with $\mathcal{Z}_\Delta:=\{\Delta k: k\in\mathbb{Z}\}\cap [-r_\gamma,r_\gamma]\supseteq \{-r_\gamma,r_\gamma\}$ instead of $\bar{\S}=\S\times\R$ (proof in Appendix \ref{proof:prop:discreteMapping}).

\begin{proposition}\label{thm:discreteMapping}
For $\erm = \{\lrm_\Delta, \urm_\Delta\}$, let $\hat{v}^{*,\erm}$ be the unique fixed point of $\bar{T}^{\erm}$ in $\Linf(\S\times\mathcal{Z}_\Delta)$, then $\bar{v}^{*,\erm}(s,z)=\bar{T}^{\erm}\hat{v}^{*,\erm}(s,z)$ for all $(s,z)\in\bar{\S}$.
\end{proposition}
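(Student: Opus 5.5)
The key observation is that $\bar{T}^{\erm}$ never evaluates its argument off the grid: for any $\bar{v}$ and any $(s,z)\in\bar{\S}$, the quantity $\bar{T}^{\erm}\bar{v}(s,z)$ depends on $\bar{v}$ only through the values $\bar{v}(s',\p(\erm(z')))$ with $z'=(r(s,a)+z)/\gamma$. For $\erm\in\{\lrm_\Delta,\urm_\Delta\}$, $\erm(z')\in\{\Delta k:k\in\mathbb{Z}\}$. The projection $\p$ clips to $[-r_\gamma,r_\gamma]$, and since $r_\gamma=K\Delta$ is itself a grid point, $\p$ maps grid points to grid points. Hence $\p\circ\erm(\R)\subseteq\mathcal{Z}_\Delta$.

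Consequently, the formula \eqref{eq:discretized-Bellman-op} defines a map $\Linf(\S\times\mathcal{Z}_\Delta)\to\Linf(\bar{\S})$. Restricting its output to $\S\times\mathcal{Z}_\Delta$ gives an operator on $\Linf(\S\times\mathcal{Z}_\Delta)$. This is the sense in which $\hat v^{*,\erm}$ is understood. That restricted operator is a $\gamma$-contraction in sup-norm by exactly the argument of \cref{thm:disc-op-fixedpt}. The reward term $\tilde r$ cancels in differences, the max over actions is nonexpansive, and the expectation is bounded by $\gamma\sup|\bar v_1-\bar v_2|$ over grid points. Banach's theorem then yields the unique fixed point $\hat v^{*,\erm}$.

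I would then define the extension $w:=\bar{T}^{\erm}\hat v^{*,\erm}\in\Linf(\bar{\S})$ and check two facts. First, $w$ agrees with $\hat v^{*,\erm}$ on $\S\times\mathcal{Z}_\Delta$, because there it equals the restricted operator applied to its own fixed point. Second, $w$ is a fixed point of $\bar{T}^{\erm}$ on all of $\bar{\S}$. Indeed, $\bar{T}^{\erm}w(s,z)$ only queries $w$ at grid points $(s',\p\circ\erm(z'))$, where $w=\hat v^{*,\erm}$, so $\bar{T}^{\erm}w=\bar{T}^{\erm}\hat v^{*,\erm}=w$. By uniqueness of the fixed point of $\bar{T}^{\erm}$ in $\Linf(\bar{\S})$ (\cref{thm:disc-op-fixedpt}), $w=\bar v^{*,\erm}$, which is the claim.

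The main obstacle is the bookkeeping in the first step: verifying that $\p\circ\erm$ lands exactly in $\mathcal{Z}_\Delta$, including the endpoints. This relies on $\Delta=r_\gamma/K$, so that $\pm r_\gamma$ are grid points and clipping cannot produce an off-grid value. I would also make explicit that $\hat v^{*,\erm}$ denotes the fixed point of the restricted operator. Boundedness of $w$ follows from $|\tilde r|\le r_{\max}$ and the boundedness of $\hat v^{*,\erm}$.
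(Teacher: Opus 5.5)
Your proposal is correct and follows the paper's proof essentially step for step. You check that $\p\circ\erm$ maps $\mathbb{R}$ into $\mathcal{Z}_\Delta$ because $\pm r_\gamma=\pm K\Delta$ are grid points, take $\hat{v}^{*,\erm}$ as the fixed point of the restricted $\gamma$-contraction, set $w:=\bar{T}^{\erm}\hat{v}^{*,\erm}$ on all of $\bar{\S}$, note that $\bar{T}^{\erm}w$ only queries $w$ at grid points where $w=\hat{v}^{*,\erm}$, and conclude $w=\bar{v}^{*,\erm}$ by uniqueness (your explicit boundedness check via $|\tilde r|\le r_{\max}$ is a small detail the paper leaves implicit).
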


\section{Approximate Static CVaR Algorithms}
\label{sec:Algorithms}

Theorem \ref{thm: opt cvar policy} and Proposition \ref{thm:discreteMapping} provide the means to approximate, from above or below, the optimal solution of the CVaR MDP using an augmented tabular MDP with risk neutral objective. Such MDP takes the form $\tilde{\mathcal{M}}:=(\S\times\mathcal{Z}_\Delta, \A, \tilde{P}, \tilde{r}, (\pzero,\nu),\gamma)$ with 
\begin{align*}
\tilde{P}((s',z') &| (s,z), a) :=\\
&\quad P(s'|s,a) \mathbf{1}\left(z' = \p\left(\erm\left(\frac{r(s, a)+z}{\gamma}\right)\right)\right),    
\end{align*}
where $\erm\in\{\lrm_\Delta,\urm_\Delta\}$ depending on whether a lower or upper bounding approximation, respectively, is needed. This augmented MDP formulation motivates the use of Q-value iteration and Q-learning algorithms depending on whether one knows the underlying MDP model $\mathcal{M}$ or can only interact with this environment. In this section, we propose a variant of each of these algorithms in order to identify the Q-value function $\hat{q}^{*,\erm}$ defined for all $(s,z)\in\S\times\mathcal{Z}_\Delta$ as:
\begin{align*}
&\hat{q}^{*,\erm}(s,z,a):=\tilde{r}(s,z,a) \\
&\quad+\gamma \E{s'\sim P_{s,a}}{}{\hat{v}^{*,\erm}\left(s',\p\left(\erm\left(\frac{r(s, a)+z}{\gamma}\right)\right)\right)},
\end{align*}
which can in turn be used to produce $\hat{v}^{*,\erm}:=\max_{a\in\A}\hat{q}^{*,\erm}(s,z,a)$. In a model-based setting, this gives access to  $\bar{v}^{*,\erm}=\bar{T}^{\erm}\hat{v}^{*,\erm}$, $z_\erm^*$, $\bar{\pib}^{*,\erm}(\cdot,)$, and $\Psi^\erm$ with the guarantees offered by \cref{thm: opt cvar policy}. Alternatively, in a model-free setting, one can derive approximations of $\bar{v}^{*,\erm}(s,z)\approx\hat{v}^{*,\erm}(s,\p(\erm(z)))$, $z_\erm^*\approx\argmax_{z\in\mathcal{Z}_\Delta}\left\{\frac{1}{\alpha} \left(\hat{v}^{*,\erm}(\pzero,z) -(z)_-\right) -z\right\}$ and associated approximate $\bar{\pib}^{*,\erm}$ and $\Psi^\erm$.

\subsection{Static Risk-Sensitive Value Iteration}

We present in Algorithm~\ref{alg:cvar-value-iteration} an implementation of Q-value iteration for the risk-neutral augmented MDP $\tilde{\mathcal{M}}$. The algorithm simultaneously estimates the globally optimal Q-function $ \bar{q}^{*,\erm}$ for all initial states $s \in \mathcal{S}$ and all possible budget levels $z \in \mathcal{Z}_\Delta$. (Proof in Appendix \ref{apx:proof:VIconverge}).
\begin{proposition}\label{prop:VIconverge}
    The function $\hat{q}_k^\erm$ returned by Algorithm~\ref{alg:cvar-value-iteration} satisfies: $\|\hat{q}^{*,\erm}-\hat{q}_k^\erm\|_\infty\leq \gamma(1-\gamma)^{-1}\epsilon$.
\end{proposition}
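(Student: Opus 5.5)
We have not seen Algorithm~\ref{alg:cvar-value-iteration}. We assume it performs standard Q-value iteration on the finite augmented MDP $\tilde{\mathcal{M}}$ and stops at the first $k$ with $\|\hat{q}_k^\erm-\hat{q}_{k-1}^\erm\|_\infty\leq\epsilon$. The update is
\[
\hat{q}_{k}^\erm(s,z,a)=\tilde{r}(s,z,a)+\gamma\,\mathbb{E}_{s'\sim P_{s,a}}\Big[\max_{a'}\hat{q}_{k-1}^\erm\big(s',\p(\erm(\tfrac{r(s,a)+z}{\gamma})),a'\big)\Big].
\]
The plan is the classical stopping-criterion argument for a $\gamma$-contraction. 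We apply it to the Q-operator $\bar{H}^\erm$ on $\Linf(\S\times\mathcal{Z}_\Delta\times\A)$ whose fixed point is $\hat{q}^{*,\erm}$.

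First we check that the update is well defined on the finite grid. Because $\p\circ\erm$ maps $\R$ into $\mathcal{Z}_\Delta$, the next augmented state $z'$ lies in $\mathcal{Z}_\Delta$. Also, $\tilde{r}$ is bounded: $|\tilde r|\le |r|\le r_{\max}$ by the 1-Lipschitzness of $x\mapsto x_-$.

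Next we show that $\bar{H}^\erm$ is a $\gamma$-contraction in sup-norm. This follows exactly as in \cref{thm:disc-op-fixedpt}. The reward term cancels in the difference $\bar{H}^\erm q-\bar{H}^\erm q'$. Then we use $|\max_{a'}q-\max_{a'}q'|\le\|q-q'\|_\infty$ and the fact that the expectation is an average. Its unique fixed point is $\hat{q}^{*,\erm}$: by the definition of $\hat{q}^{*,\erm}$ via $\hat{v}^{*,\erm}=\max_a\hat{q}^{*,\erm}$, and the fact that $\hat{v}^{*,\erm}$ is the fixed point of $\bar T^\erm$ on the grid (\cref{thm:discreteMapping}), $\hat q^{*,\erm}$ satisfies the Q-fixed-point equation.

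The main step is the bound itself. Since $\hat{q}_k^\erm=\bar{H}^\erm\hat{q}_{k-1}^\erm$ and $\hat{q}^{*,\erm}=\bar{H}^\erm\hat{q}^{*,\erm}$, the triangle inequality and contraction give
\[
\|\hat q^{*,\erm}-\hat q_k^\erm\|_\infty\le\gamma\|\hat q^{*,\erm}-\hat q_{k-1}^\erm\|_\infty\le\gamma\|\hat q^{*,\erm}-\hat q_k^\erm\|_\infty+\gamma\|\hat q_k^\erm-\hat q_{k-1}^\erm\|_\infty.
\]
Rearranging gives $\|\hat q^{*,\erm}-\hat q_k^\erm\|_\infty\le\gamma(1-\gamma)^{-1}\epsilon$. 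Termination in finitely many steps follows because the successive differences decay as $\gamma^k$.

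The main obstacle is matching the bound to the algorithm's actual stopping rule. If the algorithm instead stops on a value-function difference $\|\hat v_k-\hat v_{k-1}\|_\infty\le\epsilon$, we first note that $\|\hat q_{k+1}-\hat q_k\|_\infty\le\gamma\|\hat v_k-\hat v_{k-1}\|_\infty$. We would then adjust which iterate is returned so that the same rearrangement yields the stated constant. If the returned $\hat{q}_k^\erm$ is the one computed from $\hat v_{k-1}$, the bound above applies verbatim.
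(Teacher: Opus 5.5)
Your proof is correct and follows essentially the same argument as the paper. Both use that the Q-operator $\bar{\mathfrak{T}}^\erm$ is a $\gamma$-contraction with fixed point $\hat{q}^{*,\erm}$, together with the stopping rule $\|\hat{q}_k^\erm-\hat{q}_{k-1}^\erm\|_\infty<\epsilon$ on the last computed iterate. That is exactly what Algorithm~\ref{alg:cvar-value-iteration} checks, so your hedging paragraph is unnecessary. The only cosmetic difference is that the paper telescopes $\sum_{k'}\gamma^{k'}\|\bar{\mathfrak{T}}^\erm\hat{q}_k^\erm-\hat{q}_k^\erm\|_\infty$, whereas you rearrange the one-step triangle inequality directly and so avoid passing to a limit.
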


Once $\hat{q}^{*,\erm}$ (and implicitly $\bar{v}^{*,\erm}$ and $\bar{q}^{*,\erm}$) is obtained, the outer optimization over $z$ (see Eq. \eqref{eq:psierm} and Algorithm \ref{alg:outer-optimization}) can be done using global optimization on $z$. The policy execution pseudo-code is shown in Algorithm~\ref{alg:policy-execution} in Appendix~\ref{Appx:Algorithms}.  

\begin{algorithm}[tb]
  \caption{Static CVaR Value Iteration}
  \label{alg:cvar-value-iteration}
  \begin{algorithmic}[1]
    \REQUIRE
    Grid $\mathcal Z_\Delta$, projection and rounding maps $\p$ and $\erm\in\{\lrm_\Delta,\urm_\Delta\}$,
    discount $\gamma\in(0,1)$, convergence tolerance $\delta>0$.
    \STATE{\bfseries Initialize:} $\hat{q}^{\erm}_0(s,z,a)=0$ for all $(s,z,a)$
    \STATE Set iteration counter $k\gets 0$
    \REPEAT
    \STATE $\delta \gets 0$
    \FOR{each $s\in\mathcal S$, $a\in\mathcal A$, $ z \in \mathcal Z_\Delta$}
    \STATE $z' \gets \p\circ\erm\!\left(\gamma^{-1}(r(s,a)+ z)\right)$
        \STATE $\tilde r \gets z_- - (r(s,a)+ z)_- $
        \STATE $
        \begin{aligned}
        \hat{q}^{\erm}_{k+1}&(s, z,a)\gets \\
        &\tilde r + \gamma \sum_{s'\in\mathcal S} P(s'|s,a)\max_{a'\in\mathcal A}\hat{q}^{\erm}_k( s', z',a')
        \end{aligned}
        $
        \STATE $\delta \gets \max\{\delta,\;|\hat{q}^{\erm}_{k+1}(s,  z,a)-\hat{q}^{\erm}_k(s,  z,a)|\}$
    \ENDFOR
    \STATE $k \gets k+1$
    \UNTIL{$\delta < \epsilon$}
    \STATE {\bfseries Return:} $\hat{q}^{\erm}_k$
  \end{algorithmic}
\end{algorithm}

\begin{algorithm}[tb]
\caption{Static CVaR Q-Learning}
\label{alg:staticCVaR-qlearning}
\begin{algorithmic}[1]
\REQUIRE Grid $\mathcal Z_\Delta$, projection and rounding maps $\p$ and $\erm\in\{\lrm_\Delta,\urm_\Delta\}$,
discount $\gamma\in(0,1)$, initial state distribution $\nu$, episodes $M$, exploration schedule $\{\varepsilon_k\}$, step-size schedule via visitation counts $\beta(n)$.
\STATE{\bfseries Initialize:} $\hat{q}^{\erm}_0(s,z,a)=0$ for all $(s,z,a)$ and $N_0(s,a)=0$ for all $(s,a)$
\STATE Set global-step counter $k\gets 0$
\FOR{$i\gets 1$ to $M$}
    \STATE Sample initial state $s_k \sim \nu$
    \STATE Initial budget $z_k \sim \text{Unif}(\mathcal Z_\Delta)$
    \WHILE{$s$ is not terminal}
        \STATE Choose $a_k$ $\varepsilon_k$-greedy w.r.t. $\{\hat{q}^{\erm}_k(s_k,z_k,a)\}_{a\in\mathcal A}$
        \STATE Execute $a_k$, observe $(r_k, s_k')$ with $s_k'\sim P_{s_k,a_k}$
        \STATE $\hat{q}^{\erm}_{k+1} \gets \hat{q}^{\erm}_k$, \quad $N_{k+1} \gets N_k$
        \STATE $N_{k+1}(s,a)\gets N_k(s,a)+1$
        \FOR{$\tilde z \in \mathcal Z_\Delta$}
            \STATE $\tilde z' \gets \p\circ\erm\!\left(\gamma^{-1}(r_k+\tilde z)\right)$
            \STATE $\tilde r \gets \tilde z_- - (r_k + \tilde z)_-$
            \STATE $\hat{q}^{\erm}_{k+1}(s_k,\tilde z,a_k) \gets \hat{q}^{\erm}_k(s_k,\tilde z,a_k) + \beta(N_k(s_k,a_k))\Big(\tilde r + \gamma \max_{a'} \hat{q}^{\erm}_k(s_k',\tilde z',a') - \hat{q}^{\erm}_k(s_k,\tilde z,a_k)\Big)$
        \ENDFOR
        \STATE Update budget: $z_{k+1} \gets \p\circ\erm\!\left(\gamma^{-1}(r_k+z_k)\right)$
        \STATE $s_{k+1}\gets s_k', \quad k\gets k+1$
    \ENDWHILE
\ENDFOR
\STATE \textbf{Return} $\hat{q}^{\erm}_k$
\end{algorithmic}
\end{algorithm}

\subsection{Static Risk-Sensitive Q-Learning}
We now propose a model-free Q-learning algorithm \cite{q-learning} for static CVaR that learns the optimal action-value function from samples. The pseudo-code is shown in Algorithm~\ref{alg:staticCVaR-qlearning}. A distinctive feature of this approach is that we employ a block-update for $z$ for improved data-efficiency. In standard Q-learning approaches  \cite{q-learning, tsitsiklis-q-learn}, it is common to draw one transition sample from the environment and update its Q-value using a TD-update. In the augmented-state setting, given a $(s,z,a,r)$ sample, we can deterministically compute the next $z'$ reached from any $z\in\mathcal{Z}_\Delta$ using $\p(\erm(\gamma^{-1}(r+z))$. Notice however that the ``trajectory" of $z$ under any policy is determined by how $z_0$ is initialized (here from a uniform distribution). Since Q-learning is off-policy, once a single episode from the environment is collected, we can leverage the idea of data-relabeling \cite{andrychowicz2017hindsight, eysenbach2020rewriting} to generate new samples for the training, without interacting with the environment. The key idea is that we can replay the episode under the assumption that it started from a different initial $z$ value. This leads to the block update scheme as shown in Algorithm~\ref{alg:staticCVaR-qlearning} for which \cref{thm:static-cvar-qlearning-convergence} provides sufficient conditions under which it converges to $\hat{q}^{*,\erm}$ (proof in Appendix~\ref{apx:proof:static-cvar-qlearning-convergence}).  

\begin{theorem} \label{thm:static-cvar-qlearning-convergence}
Assume the following conditions hold:
\begin{enumerate}
\item[\textbf{(1)}] \textbf{(Robbins--Monro Step-sizes).} 
The step-size schedule $\beta(n)$ satisfies:
\[
\sum_{n=0}^\infty \beta(n)=\infty,
\qquad
\sum_{n=0}^\infty \beta(n)^2<\infty.
\]
\item[\textbf{(2)}] \textbf{(Sufficient Exploration).}
Every nominal pair $(s,a)\in\mathcal S\times\mathcal A$ is visited infinitely often by the
$\varepsilon$-greedy policy of Algorithm~\ref{alg:staticCVaR-qlearning}, i.e.,
\[
\lim_{k\rightarrow\infty} N_k(s,a) =  \infty\text{ almost surely, } \forall (s,a)\in\mathcal S\times\mathcal A.
\]
\end{enumerate}
Then, with probability one, the iterates $\{\hat{q}^{\erm}_k\}_{k=0}^\infty$ produced by Algorithm~\ref{alg:staticCVaR-qlearning} converge in sup-norm to $\hat{q}^{*,\erm}$ as $k\rightarrow\infty$.
\end{theorem}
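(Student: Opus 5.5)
We prove the result with the standard stochastic-approximation argument for asynchronous Q-learning, following \citet{tsitsiklis-q-learn}, applied to the finite augmented MDP $\tilde{\mathcal{M}}$ on $\S\times\mathcal{Z}_\Delta\times\A$. The block update over $\tilde z$ will be handled through a single observation: every component $(s,\tilde z,a)$ with the same nominal pair $(s,a)$ is updated at exactly the same times and with the same step size $\beta(N_k(s,a))$. So visiting $(s,a)$ infinitely often (condition (2)) means every augmented triple $(s,\tilde z,a)$ is updated infinitely often. By condition (1), each component's step-size sequence then satisfies the Robbins--Monro conditions.

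First we rewrite the update in the form
\[
\hat q_{k+1}(x)=(1-\beta_k(x))\hat q_k(x)+\beta_k(x)\bigl([H\hat q_k](x)+w_k(x)\bigr),
\]
with $x=(s,\tilde z,a)$. Here $H$ is the Q-version of $\bar T^{\erm}$ restricted to the grid:
\[
[Hq](s,\tilde z,a)=\tilde r(s,\tilde z,a)+\gamma\,\mathbb{E}_{s'\sim P_{s,a}}\Bigl[\max_{a'}q\bigl(s',\p\circ\erm(\gamma^{-1}(r(s,a)+\tilde z)),a'\bigr)\Bigr].
\]
The noise term is $w_k(x)=\tilde r+\gamma\max_{a'}\hat q_k(s_k',\tilde z',a')-[H\hat q_k](x)$.

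Next we verify three things.
\begin{enumerate}
\item[(i)] $H$ is a $\gamma$-contraction in sup-norm on this finite space. The proof is the usual one, or it can be taken from \cref{thm:disc-op-fixedpt} together with \cref{thm:discreteMapping}. Its unique fixed point is $\hat q^{*,\erm}$ by definition.
\item[(ii)] $\mathbb{E}[w_k(x)\mid\mathcal F_k]=0$. This holds because $r_k=r(s_k,a_k)$ is deterministic given $(s_k,a_k)$, so $\tilde z'$ and $\tilde r$ are $\mathcal F_k$-measurable and the only randomness is $s_k'\sim P_{s_k,a_k}$. This Markov property holds whatever the $\varepsilon$-greedy behavior and whatever the current budget $z_k$, which is why the off-policy relabeling is legitimate.
\item[(iii)] $\mathbb{E}[w_k(x)^2\mid\mathcal F_k]\le C(1+\|\hat q_k\|_\infty^2)$. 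This follows because $|\tilde r|\le r_{\max}$ (since $(\cdot)_-$ is $1$-Lipschitz) and the grid is finite.
\end{enumerate}

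With (i)--(iii) in hand, the theorem of \citet{tsitsiklis-q-learn}, in its asynchronous form, gives almost sure boundedness of the iterates and then $\hat q_k\to\hat q^{*,\erm}$ almost surely. On a finite space this convergence is equivalent to sup-norm convergence.

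The main obstacle is the step-size bookkeeping. Many components are updated simultaneously, and the step size is indexed by the nominal counter $N_k(s,a)$ rather than by a per-component counter. We must check that the asynchronous framework allows this. It does: the framework only needs $\beta_k(x)$ to be $\mathcal F_k$-measurable, zero when $x$ is not updated, and summing to infinity with finite sum of squares along each component. All three follow from condition (1) and the one-to-one correspondence between updates of $(s,\tilde z,a)$ and increments of $N(s,a)$. A minor secondary point is that the episodic loop must still yield infinitely many global steps, which condition (2) supplies.
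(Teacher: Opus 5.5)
Your proposal is correct and follows essentially the same route as the paper: both cast the block update as an asynchronous stochastic-approximation iteration driven by the Q-version of $\bar{T}^{\erm}$ on $\S\times\mathcal{Z}_\Delta\times\A$. Both then verify zero-mean noise, conditional variance bounded by $A+B\|\hat q_k\|_\infty^2$, sup-norm contraction of $H$, and the Robbins--Monro conditions via the one-to-one correspondence between updates of $(s,\tilde z,a)$ and increments of $N_k(s,a)$; the only difference is that you invoke \citet{tsitsiklis-q-learn} where the paper invokes Proposition~4.4 of \citet{Bertsekas1996}, which is the same framework.
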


Once the optimal action-value function $\hat{q}^{*,\erm}$ is learned, an approximate outer optimization (Algorithm~\ref{alg:outer-optimization}) 
can be applied to obtain $\hat{z}^{*,\erm}$, which together define an approximately optimal CVaR policy through the policy execution described in Algorithm~\ref{alg:policy-execution}.

\begin{figure}[tb]
    \centering  
    \includegraphics[width=0.44\linewidth]{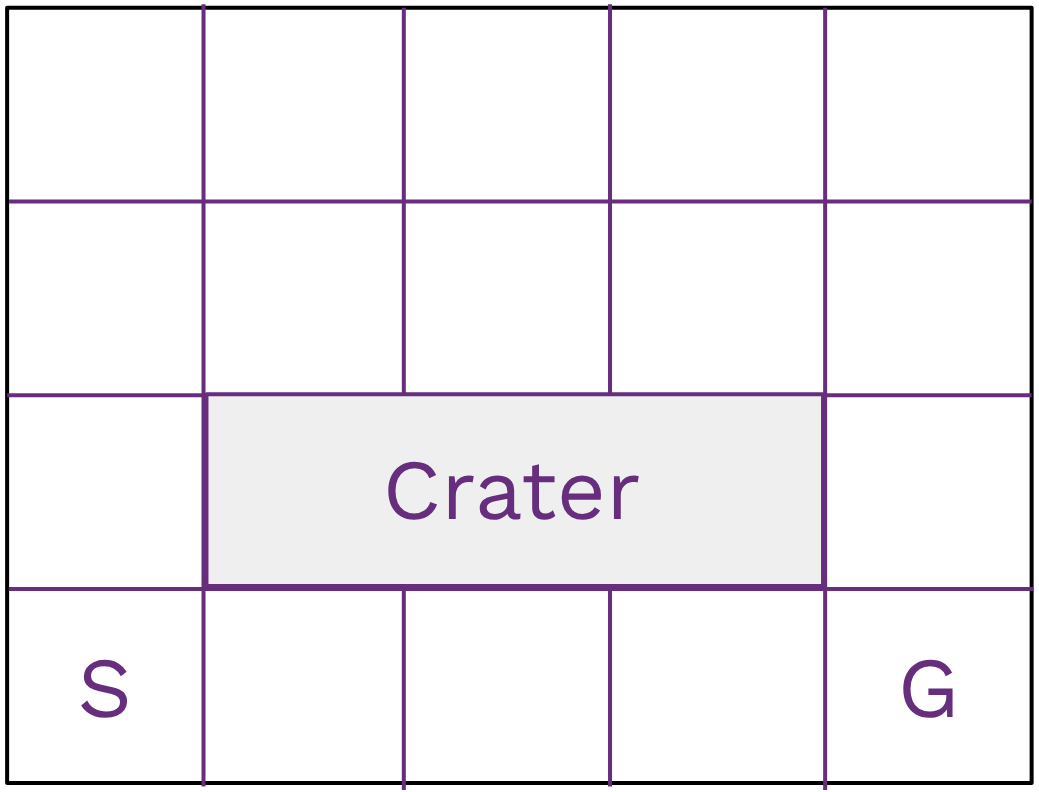}    \caption{Stochastic Gridworld Environment.}   \label{fig:crater-env}
\end{figure}

\begin{figure*}[t]
    \centering
    \begin{subfigure}[t]{0.85\textwidth}
        \centering        \includegraphics[width=\textwidth]{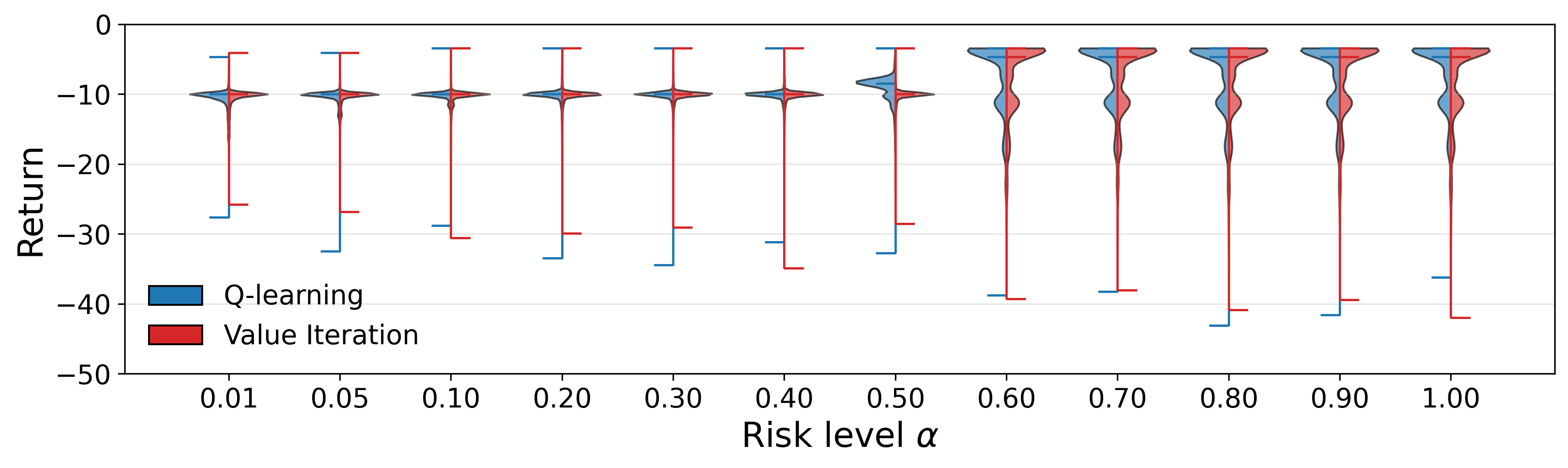}
        \caption{Return distribution under optimal CVaR policies computed using risk-sensitive Q-learning and Q-VI algorithms.}
        \label{fig:violin}
    \end{subfigure}

    \vspace{4pt} 

    \begin{subfigure}[t]{0.31\textwidth}
        \centering        \includegraphics[width=\linewidth,height=3.5cm]{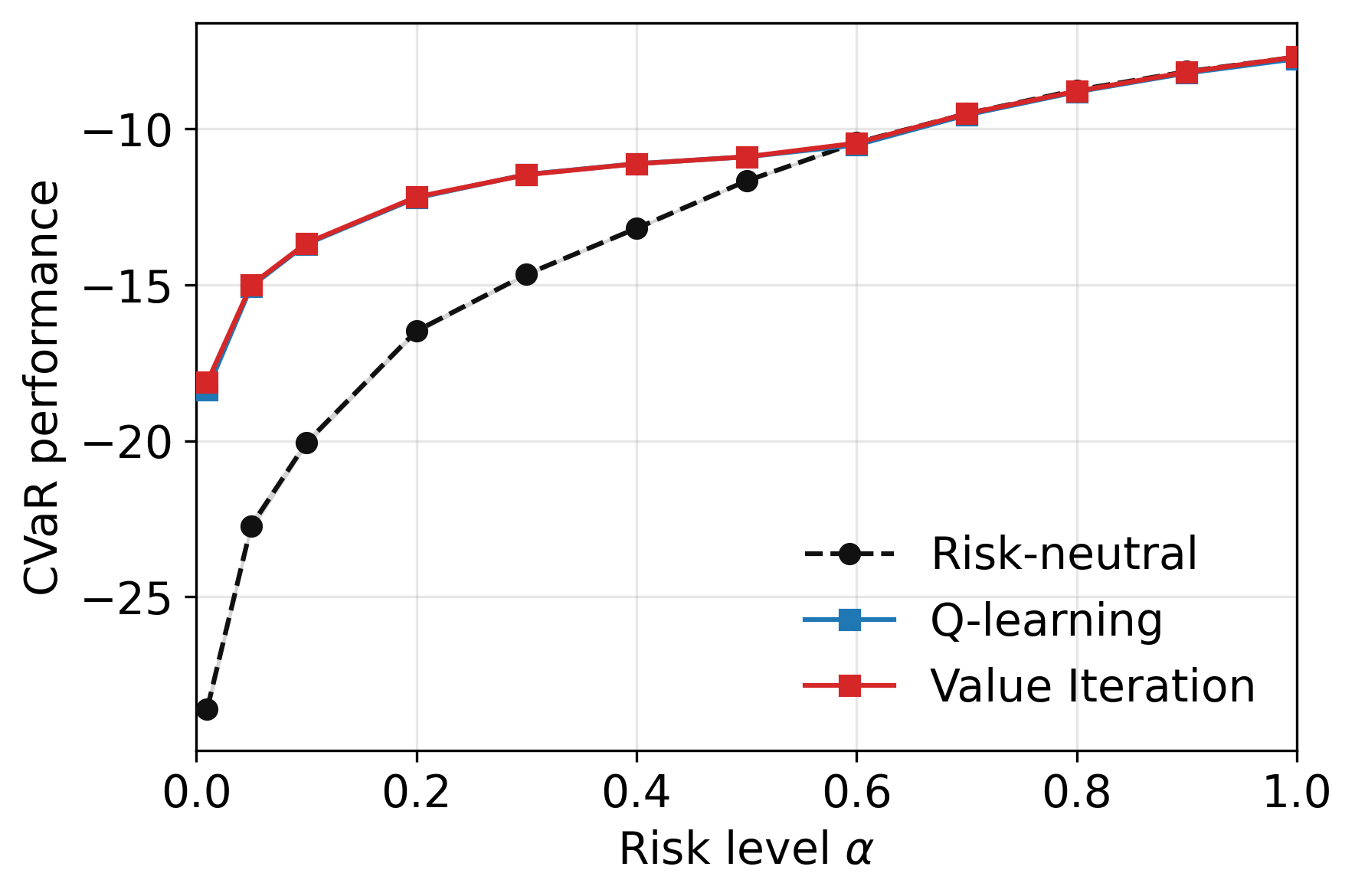}
        \caption{Empirical CVaR performance.}
        \label{fig:cvar_comparison}
    \end{subfigure}
    \hspace{0.2cm}
    \begin{subfigure}[t]{0.31\textwidth}
        \centering
        \includegraphics[width=\linewidth,height=3.5cm]{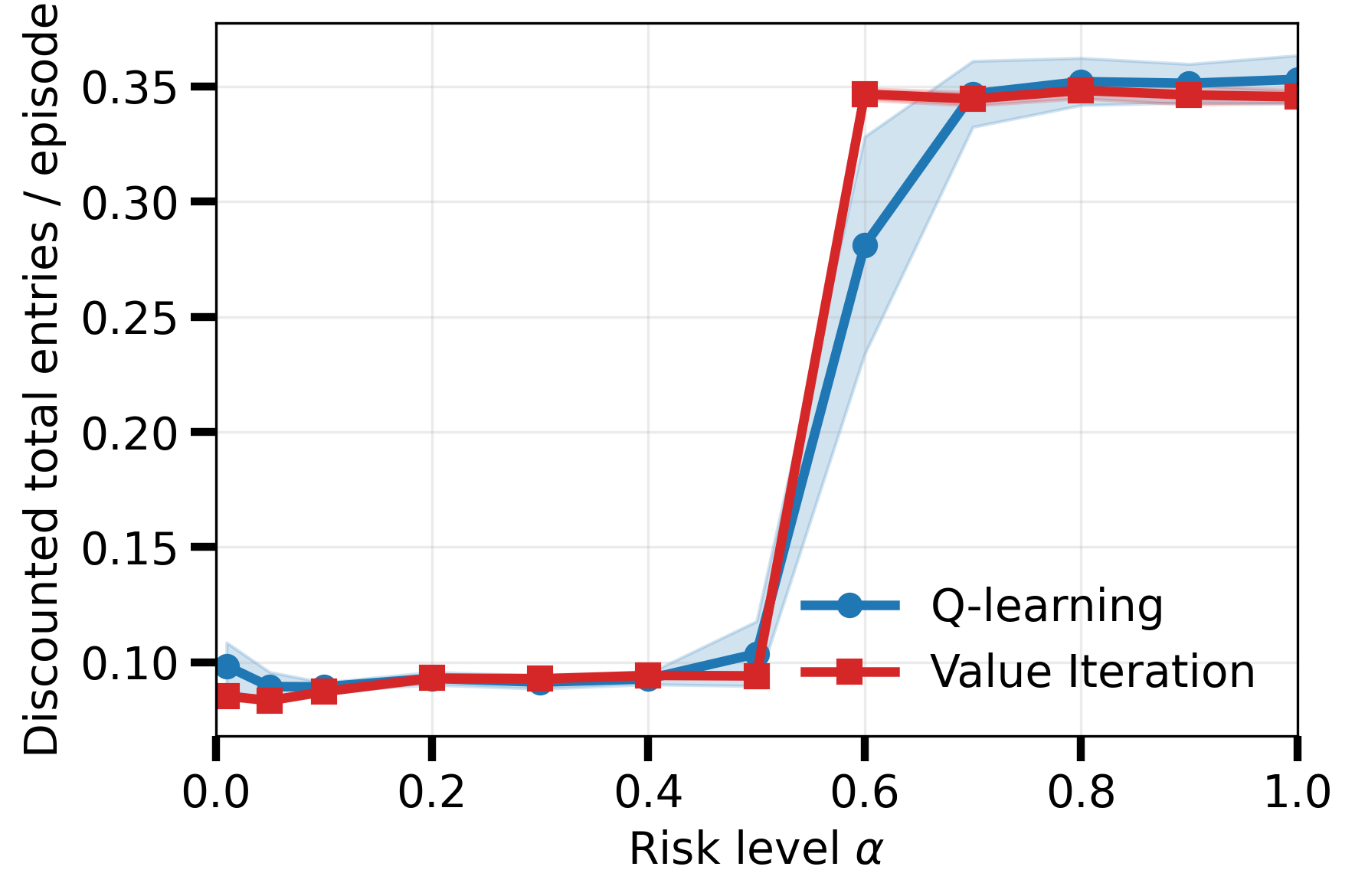}
        \caption{Discounted total crater entries per episode averaged over $10$k episodes.} 
        \label{fig:crater_entries}
    \end{subfigure}
    \hspace{0.2cm}
    \begin{subfigure}[t]{0.31\textwidth}
        \centering
        \includegraphics[width=\linewidth,height=3.5cm]{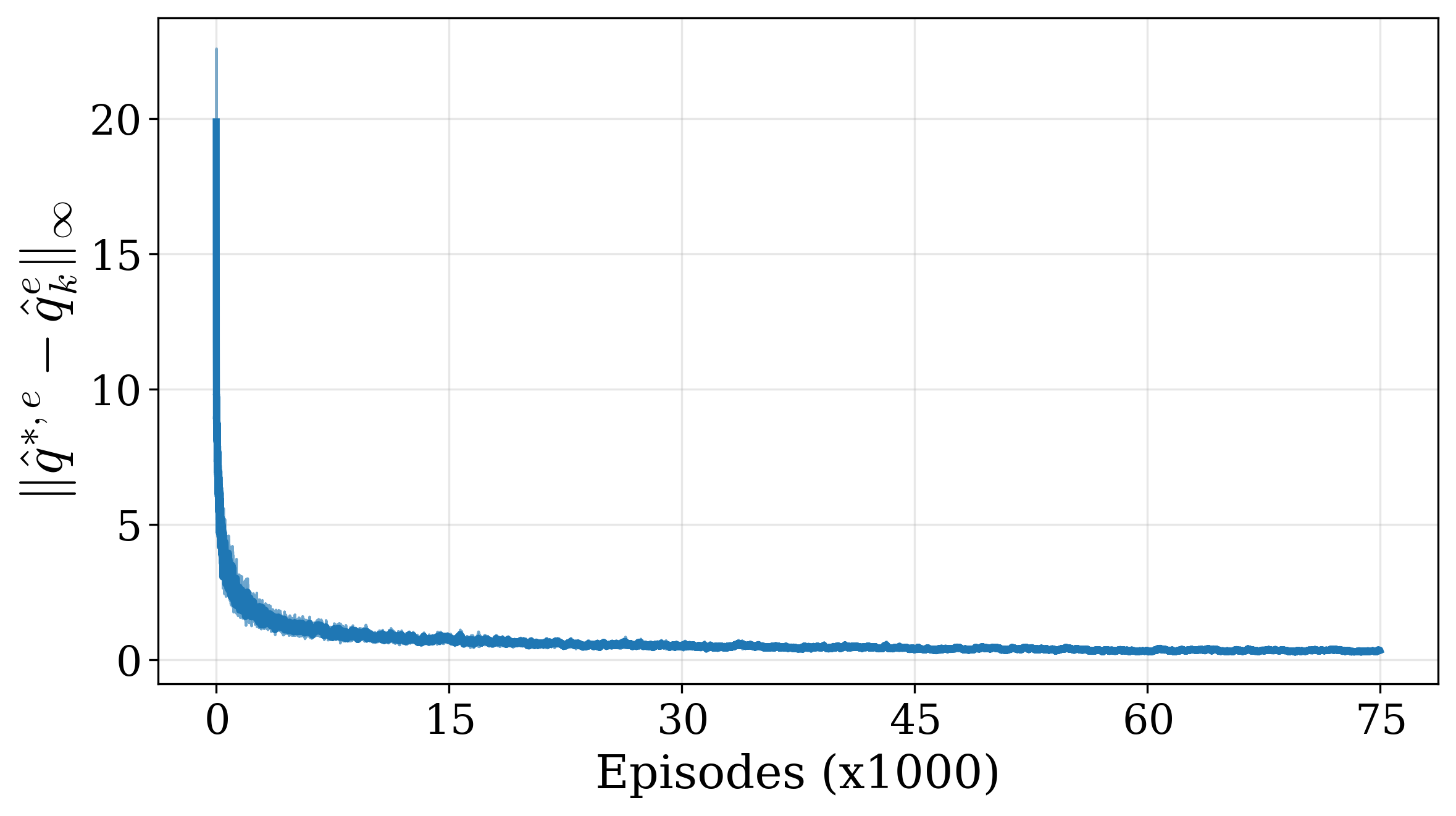}
        \caption{Q-learning training curve.}
        \label{fig:q_learning_error}
    \end{subfigure}

    \caption{Performance comparison between the proposed static CVaR Q-learning and Q-VI algorithms for various risk-levels $\alpha$.}
    \label{fig:crater-plots}
\end{figure*}

\begin{figure*}[h]
    \centering
    \begin{subfigure}{0.24\textwidth}
        \centering
        \includegraphics[width=\linewidth, height=3.5cm]{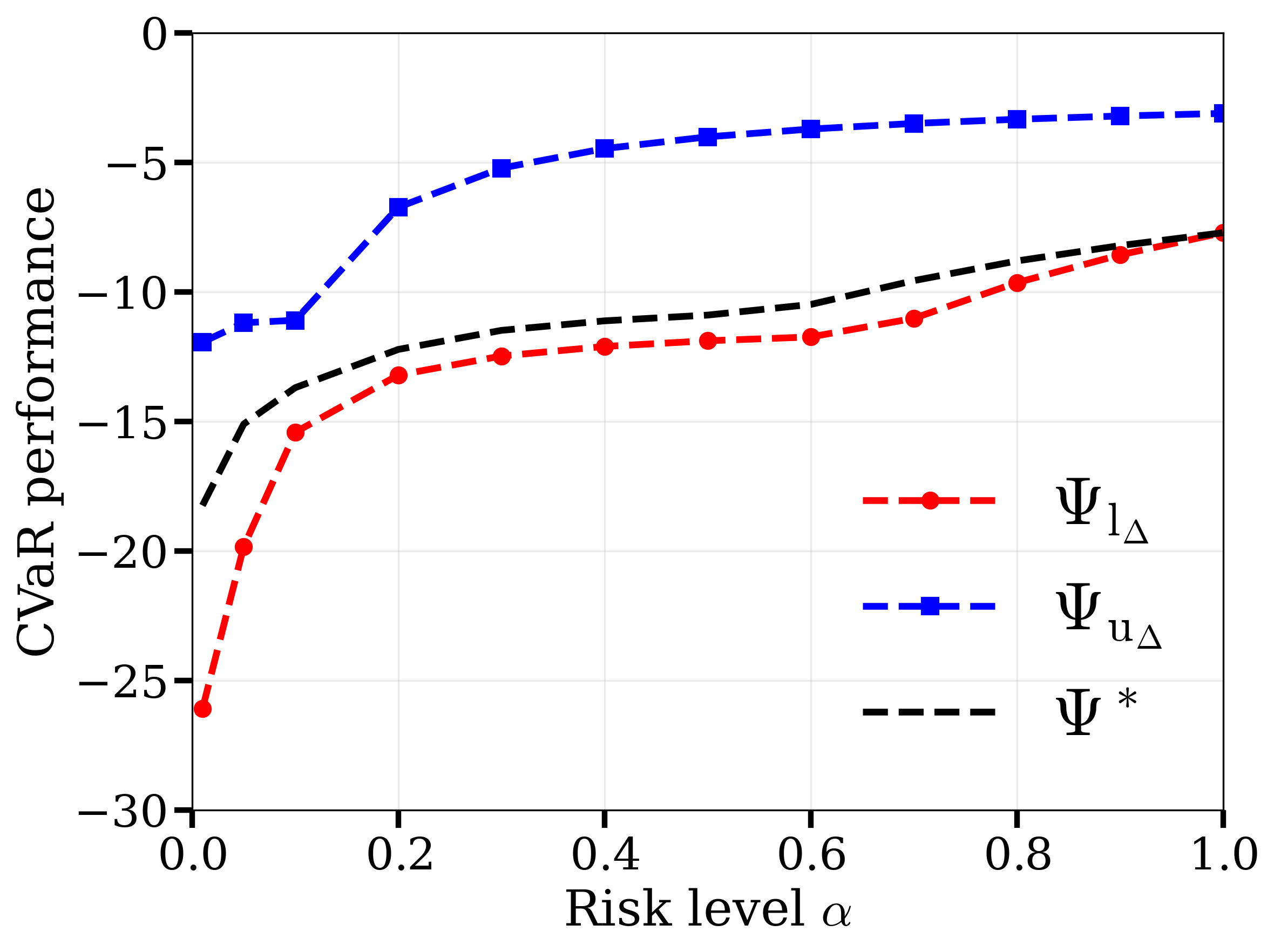}
        \caption{$100$ bins}
        \label{fig:b100}
    \end{subfigure}\hfill
    \begin{subfigure}{0.24\textwidth}
        \centering
        \includegraphics[width=\linewidth, height=3.5cm]{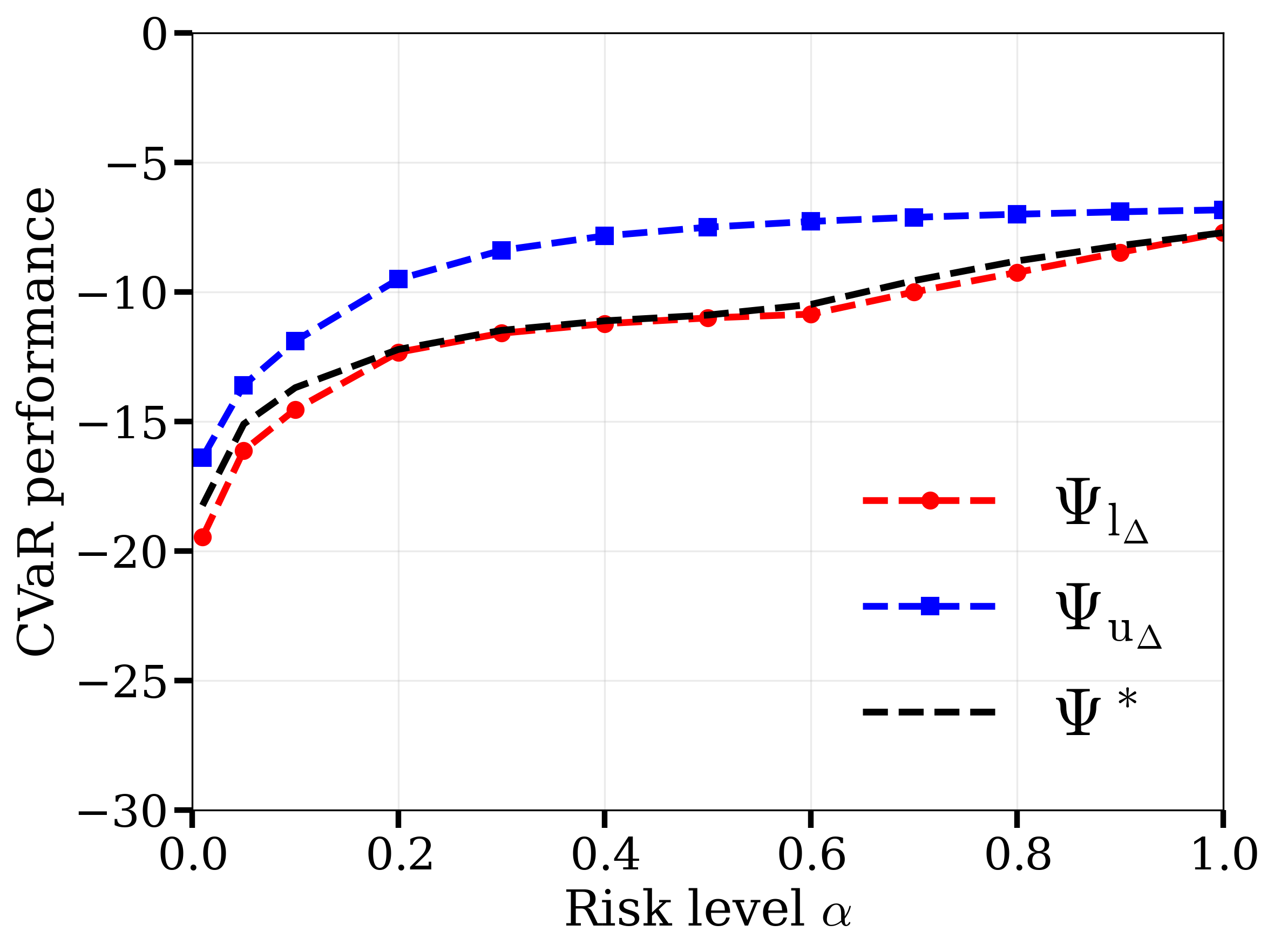}
        \caption{$500$ bins}
        \label{fig:b500}
    \end{subfigure}\hfill
    \begin{subfigure}{0.24\textwidth}
        \centering
        \includegraphics[width=\linewidth, height=3.5cm]{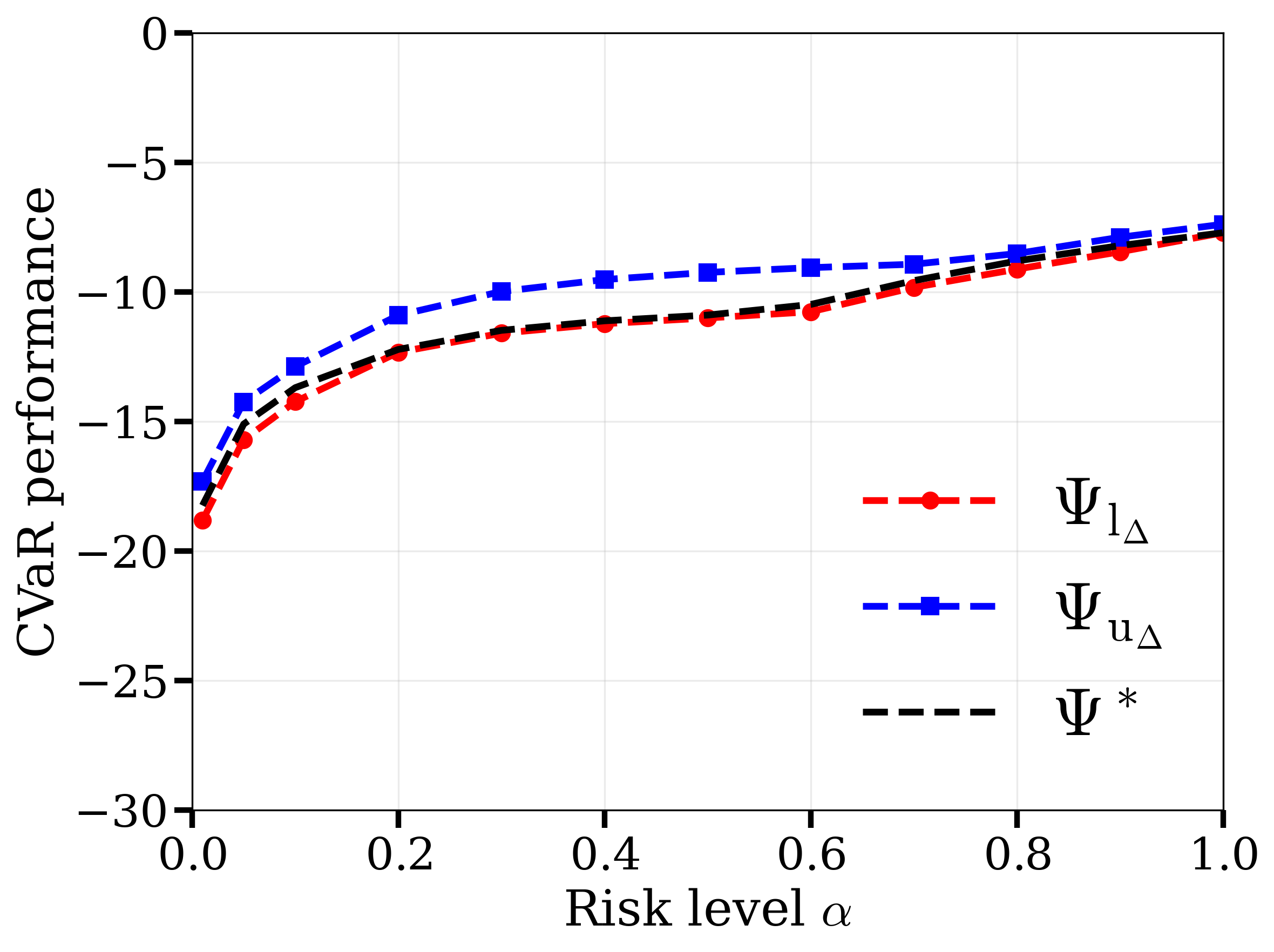}
        \caption{$1000$ bins}
        \label{fig:b1000}
    \end{subfigure}\hfill
    \begin{subfigure}{0.24\textwidth}
        \centering
        \includegraphics[width=\linewidth, height=3.5cm]{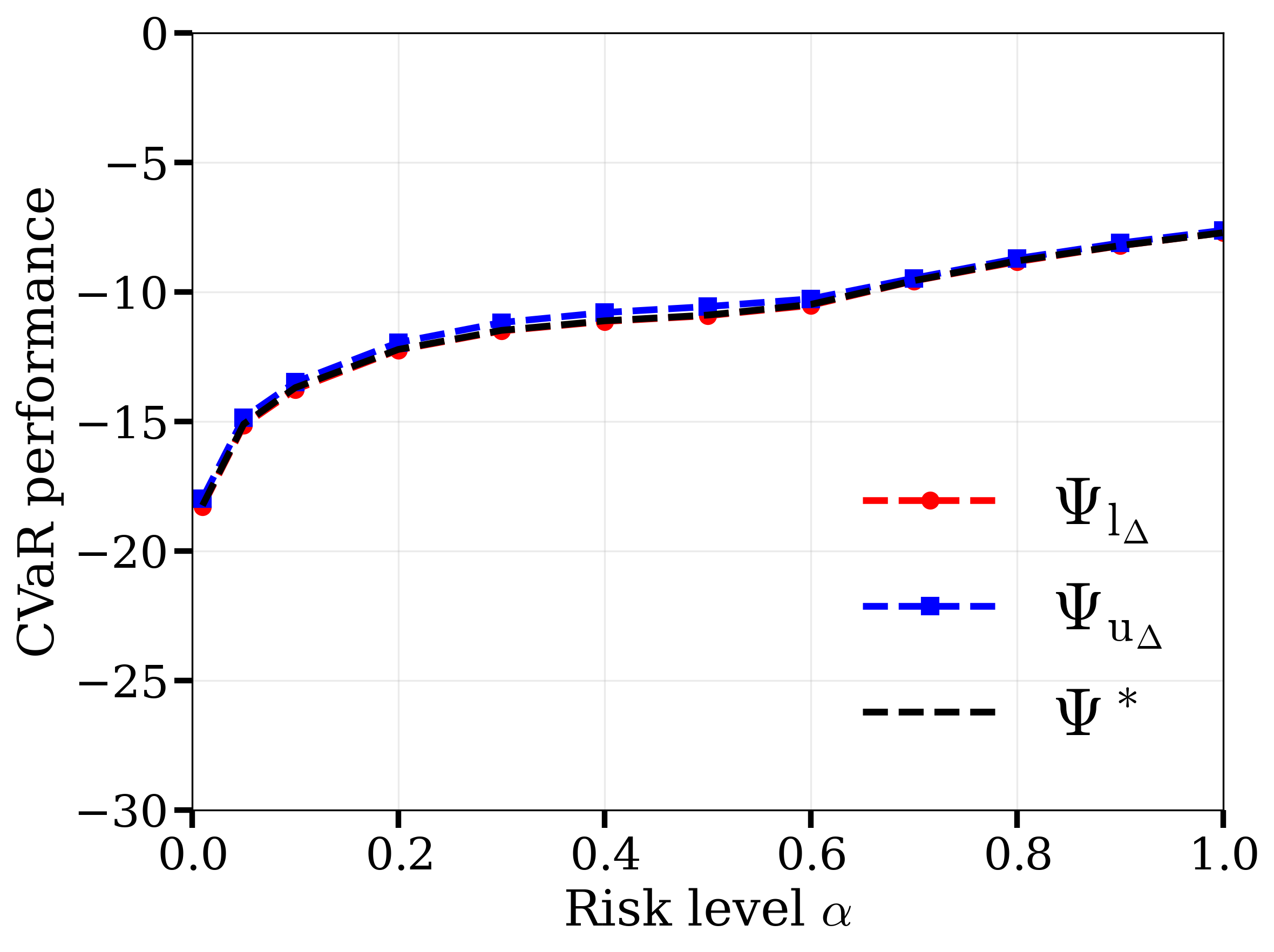}
        \caption{$5000$ bins}
        \label{fig:b5000}
    \end{subfigure}

    \caption{Comparing the effect of augmented state discretization resolution on the accuracy of CVaR performance. As resolution increases (left to right), the performance curves under the upper and lower bounding Bellman operators converge to the true value.}
\label{fig:discretization-comparison}
\end{figure*}

\section{Empirical Results}
We validate the proposed static CVaR algorithms on a stochastic gridworld, where a robot has to traverse a 2D terrain map (\cref{fig:crater-env}) starting at a safe position ``S" and safely reaching the goal position ``G" while being fuel-efficient. The robot can move in four cardinal directions; due to noise in sensing and control, there is a non-zero chance that an action leads to a move along an unintended direction. Each step receives a penalty of $-1$ to indicate the fuel usage. Gray cells denote craters with uneven terrain that may impair the robot and yield a penalty of $-10$. The goal state is absorbing, i.e. once the robot enters state ``G", it remains there in perpetuity with a $0$ reward. This reward scheme fulfills the non-positive reward requirements of Assumption~\ref{assump:reward-sign}. The objective is to compute a \emph{safe}, crater-free path to the goal while being \emph{fuel efficient}. 

Both, static CVaR Q-value iteration (Q-VI) and Q-learning algorithms compute the optimal action-value function for all discrete budget levels simultaneously, and therefore need to be run only once to estimate the optimal Q-function $\hat{q}^{*,\erm}$ (\cref{sec:Algorithms}). Using a grid size of $|\mathcal{Z}_\Delta| = 5$k, we run both algorithms with $10$ independent seeds and compute the corresponding optimal starting budget for a spectrum of $\alpha$ values from near-robust ($\alpha=0.01$) to risk-neutral ($\alpha=1.0$) (pseudo-code in Algorithm~\ref{alg:outer-optimization}). For each risk-averse policy induced by an optimal starting budget, we collect $10$k rollouts with Algorithm~\ref{alg:policy-execution}, and use them to compare performance of both algorithms for varying risk-levels (\cref{fig:crater-plots}).

\cref{fig:violin} compares the empirical return distributions. Notice that, for both methods the lower tail of the distribution, representing the unsafe (high penalty) trajectories, shrinks as $\alpha: 1 \to 0$, indicating an increasingly risk-averse behavior. \cref{fig:cvar_comparison} compares the empirical CVaR$_\alpha$ performance under each method versus the CVaR$_\alpha$ performance under a risk-neutral policy at the same risk-level $\alpha$. For small $\alpha$, both risk-averse agents outperform the risk-neutral policy. \cref{fig:crater_entries} reports the \emph{discounted} total number of crater entries along each trajectory. Under static risks, performance depends on the full return, which implies that due to discounting, crater entries later in the episode will yield a lower penalty than earlier ones. We see fewer crater entries, as risk-aversion increases. Both our methods agree closely across most $\alpha$, except around $\alpha \in [0.5,0.6]$ where Q-learning behavior deviates slightly and shows higher variance. This region coincides with a policy switch where the agent starts to prefer the longer route to the goal instead of the bottom corridor. As risk aversion increases, the agent increasingly prefers a longer, safer route that reduces crater-entry risk. \cref{fig:q_learning_error} compares difference between the learned value function $\hat{q}^e_k$ at training iteration $k$ and the optimal Q-value estimated by static CVaR Q-VI. We see that after $50$k training iterations, static CVaR Q-learning has converged.

Finally, to validate our theoretical results corresponding to the performance accuracy under the discretization, we run value iteration with the upper and lower Bellman operators $T^\erm$ and compare their performances, labeled $\psi_{\urm_\Delta}$ and $\psi_{\lrm_\Delta}$ respectively, in \cref{fig:discretization-comparison}. The optimal performance value $\psi^*$, is computed by running Q-VI with $10$k discrete $z$ values and plot the corresponding performance. Results in \cref{fig:discretization-comparison} empirically validate the claims in \cref{thm: opt cvar policy}; showing that the performance under the upper/lower bound operators maintain their order and performance gap between them vanishes with finer discretization. (Experiment details are provided in Appendix~\ref{Appendix:EmpiricalResults}).

\section{Conclusion and Future Directions}
This paper revisits a foundational result in infinite horizon static CVaR MDPs \cite{bauerle2011markov} and shows that several long-standing limitations like sparse rewards and restricted convergence guarantees, are not inherent to the static CVaR problem, but a consequence of the particular modeling choice. We propose an \emph{exact} reformulation of the static CVaR objective, and derive a new Bellman operator that distributes CVaR-relevant information across time and eliminates the artificial sparsity introduced by prior formulations. Our operator enjoys global convergence guarantees without requiring carefully engineered initializations or admissibility constraints. Building on these theoretical results, we develop two risk-averse approximate DP algorithms, analogous to standard VI and Q-learning. To operate on finite MDPs, we leverage a uniform discretization scheme and provide error bounds that quantify the impact of value-function approximation on the CVaR performance. We also prove the convergence of the static CVaR Q-learning algorithm under standard assumptions.

A limitation of our work is that we focus on tabular settings for our empirical analysis. We note however that the static CVaR Bellman operator $\bar{T}$ applies more generally and lays the groundwork for developing scalable risk-sensitive RL algorithms with function approximation. A key advantage of combining our algorithms with function approximators, such as deep neural networks, is that it will allow to directly represent the continuous augmented state with no restrictions on the sign of the reward functions.

\section*{Impact Statement}
This paper presents work whose goal is to advance the field of reinforcement learning. There are many potential societal consequences of our work, none of which we feel must be specifically highlighted here.

\section*{Acknowledgments}
The authors would like to thank Harley Wiltzer and Borna Sayedana for their feedback on an early draft of this paper. We would also like to thank Nima Akbarzadeh, Kaustubh Mani, Niki Howe, Ryan D'Orazio, Arnav Jain, Yudong Luo, Tianwei Ni and Glen Berseth for helpful discussions and insightful suggestions. Aneri Muni was partially funded by IVADO through its Machine Learning regroupement. Erick Delage was partially supported by the Canadian Natural Sciences and Engineering Research Council [Grant RGPIN-2022-05261] and by the Canada Research Chair program [950-230057]. 

\bibliography{example_paper}
\bibliographystyle{icml2026}

\newpage
\appendix 
\onecolumn

\section{Illustration of Various Reward Schemes}
\label{Appendix:mdp_variants}

\begin{figure}[h]
  \centering
  \scalebox{0.75}{%
    \input{illustration_detailed.tikz}%
  }
  \caption{Illustration of the reward schemes produced along a trajectory under the augmented MDP  of \citet{bauerle2011markov} versus ours.}
  \label{fig:illustration_detailed}
\end{figure}
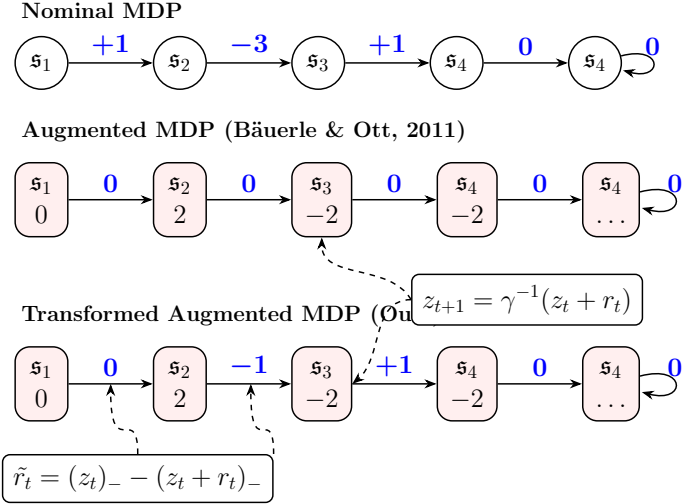

\cref{fig:illustration_detailed} compares a trajectory under the nominal MDP, augmented MDP proposed by \cite{bauerle2011markov}, which we will refer to as B-AMDP and finally our transformed augmented MDP, which we call T-AMDP. (\cref{fig:illustration_detailed} is a more detailed version of \cref{fig:illustration}). We consider a simple four state MDP where $\mathfrak{s}_{4}$ is an absorbing state, i.e. once the agent enters this state, it remains there in perpetuity, receiving a reward of 0. The numbers in \textcolor{blue}{blue} represent the per-step rewards. The \textcolor{pink}{pink} states represent the augmented states where the second component is the augmented dimension $z_t$, which keeps track of the cumulated reward scaled by $1/\gamma^t$ when initialized at $z_0:=0$, along the trajectory.  

As explained in Section \ref{subsec:bauerle-aug-state}, under B-AMDP, there are no per-stage rewards except for the end of the horizon ${\hat{\mathcal T}}$, where reward is given by $-(z_{\hat{\mathcal T}}+r_{\hat{\mathcal T}})_-$. In the infinite-horizon case, $\hat{\mathcal{T}} \to \infty$, thus the agent in B-AMDP effectively never gets the reward feedback. 

B-AMDP's augmented state $z$ is updated according to the rule: $z_{t+1} = \gamma^{-1}(z_t + r_t)$. We use the same state-augmentation scheme as that of \citet{bauerle2011markov}, however, under our CVaR reformulation (Proposition~\ref{prop:cvar-reformulation}), the per-step reward in the T-AMDP is given by $\tilde{r}_t :=  - (z_t + r_t)_{-} + (z_t)_{-}$.

\cref{table:calc} presents the calculations corresponding to the augmented state and reward values shown in \cref{fig:illustration_detailed}, we let initial state $\bar{s}_0:=\mathfrak{s}_1$, the initial augmented state $z_0 := 0$, and the discount factor $\gamma := 0.5$. We denote the reward under B-AMDP as $r^\textrm{B}$ and the rewards under T-AMDP as $\tilde r$.

\begin{table}[h]
\caption{Details about the trajectory presented in \cref{fig:illustration_detailed}. \label{table:calc}}
\centering
\begin{tabular}{|c c c c c c c|} 
 \hline
 $\mathbf{t}$ & $\mathbf{s_t}$ & $\mathbf{r_t}$ & $\mathbf{z_{t}}$ & $\mathbf{z_{t+1}}$ & $\mathbf{r^{\textrm{B}}_t}$ & $\mathbf{\tilde{r}_t}$ \\ [0.5ex] 
 \hline \hline
 0 & $\mathfrak{s}_0$ & $+1$ & 0 & $(0.5)^{-1}(0 + 1) = 2$  & $0$ &  $\min(0, 0+1) - \min(0, 0) = 0$ \\ 
 \hline
 1 & $\mathfrak{s}_1$ & $-3$ & $2$ & $(0.5)^{-1}(2 -3) = -2$ & 0 & $\min(0, 2-3) - \min(0, 2)=-1$ \\
 \hline
 2 & $\mathfrak{s}_2$ & $+1$ & $-2$ & $(0.5)^{-1}(-2 + 1) = -2$ & 0 & $\min(0, -2+1) - \min(0, -2)=+1$ \\
 \hline
 3 & $\mathfrak{s}_{4}$ & $0$ & $-2$ & $(0.5)^{-1}(-2+0) = -4$ & 0 & $\min(0, -2+0) - \min(0, -2)=0$ \\
 \hline
 \dots & $\mathfrak{s}_{4}$ & $0$ & $\gamma^{-(t-4)}(-4)$ & $\gamma^{-(t-3)}(-4)$ & 0 & 0 \\ [1ex] 
 \hline
\end{tabular}
\end{table}

\section{Proofs of Theoretical Results}

\subsection{Derivation of Equation \eqref{eq:cvar-primal} from \cite{rockafellar2000optimization}}\label{apx:CVaRAdapt}

In \cite{rockafellar2000optimization}, the authors define a CVaR measure applied on losses $Y$ and that quantifies the amount of risk exposure as:
\[\cvarRock{\beta}{}{Y}:= \inf_s s+ \frac{1}{1-\beta}\E{}{}{(Y-s)^+},\]
where $\beta\in[0,1]$ is the level of risk aversion. We rather employ a formulation where $X$ is a return, the measure quantifies risk protection, and $1-\alpha$ captures the risk aversion. Hence,
\[\cvar{\alpha}{}{X}:=-\cvarRock{1-\alpha}{}{-X}=- \inf_s s+ \frac{1}{\alpha}\E{}{}{(-X-s)^+}= \sup_s - s - \frac{1}{\alpha}\E{}{}{(-X-s)^+}= \sup_\eta \eta - \frac{1}{\alpha}\E{}{}{(\eta-X)^+}.\]

\subsection{Proof of Proposition~\ref{prop:cvar-reformulation}}
\label{proof:prop:cvar-reformulation}
\begin{proof}
We write the problem: 
\begin{align*}    \max_{\pib\in\Pi^{\text{H}}}\cvar{\alpha}{\pzero,\pib}{R(\tau)} \text{ with } R(\tau) = \sum_{t=0}^{\infty} \gamma^t r(s_t, a_t ).   
\end{align*}
Based on the CVaR convex form \eqref{eq:cvar-primal}, we can write:
\begin{align*} 
\label{eq:cvar_dynamic_prog}
        \max_{\pib\in\Pi^{\text{H}}}\cvar{\alpha}{\pzero,\pib}{R(\tau)} 
        &=\max_{\pib\in\Pi^{\text{H}}} \sup_{\eta \in\mathbb{R}}\left\{\eta -\frac{1}{\alpha}\E{\tau\sim\Pr^{\pib}_\pzero}{}{(\eta -R(\tau))_+}\right\}\\
        &= \max_{\pib\in\Pi^{\text{H}}} \sup_{z \in \R} \left\{ - z - \frac{1}{\alpha} \E{\tau\sim\Pr^{\pib}_\pzero}{}{(-z-R(\tau))_+} \right\} && [\text{Set } z := - \eta]\\
        &= \max_{\pib\in\Pi^{\text{H}}} \sup_{z \in \R} \left\{ - z +  \frac{1}{\alpha} \E{\tau\sim\Pr^{\pib}_\pzero}{}{\min (0, R(\tau) + z)}\right\} && [\text{Use } \max(a,b)=-\min(-a,-b)] \\
        &= \sup_{z \in \R} \max_{\pib\in\Pi^{\text{H}}} \left\{ - z +  \frac{1}{\alpha} \E{\tau\sim\Pr^{\pib}_\pzero}{}{-(R(\tau) + z)_-} \right\}  &&[\min(a,0) = -a_-]\\
        &= \sup_{z \in \R} \{ - z +  \frac{1}{\alpha} \max_{\pib\in\Pi^{\text{H}}} \E{\tau\sim\Pr^{\pib}_\pzero}{}{-(R(\tau) + z)_-} \} \\
        &= \sup_{z \in \R} \left\{ - z +  \frac{1}{\alpha} \left\{ \max_{\pib\in\Pi^{\text{H}}} \E{\tau\sim\Pr^{\pib}_\pzero}{}{-(R(\tau) + z)_- }+ z_- - z_- \right\} \right\} \\
        &= \sup_{z \in \R} \left\{ - z +  \frac{1}{\alpha} \left(-z_- + \max_{\pib\in\Pi^{\text{H}}} \left\{\E{\tau\sim\Pr^{\pib}_\pzero}{}{-(R(\tau) + z)_-} + z_-\right\}\right) \right\},
\end{align*}
so we established the two relations. 
\end{proof}

\subsection{Proof of \cref{thm:BauerleFP0}}
\label{proof:thm:BauerleFP0}
\begin{proof}
    For completeness, we start by confirming the $\gamma$-contraction property of $T$. Namely, given any $v_1,v_2:\bar{\S}\rightarrow\mathbb{R}$, we have that for all $(s,z)\in\bar{\S}$:
    \begin{align*}
        [Tv_1](s,z)-[Tv_2](s,z) &= \max_{a \in \A} \gamma \E{s' \sim P_{s,a}}{}{ v_1\left(s', \frac{r(s,a) + z}{\gamma} \right)} - \max_{a \in \A} \gamma \E{s' \sim P_{s,a}}{}{ v_2\left(s', \frac{r(s,a) + z}{\gamma} \right)}\\
        &\leq \max_{a \in \A} \gamma \E{s' \sim P_{s,a}}{}{ v_1\left(s', \frac{r(s,a) + z}{\gamma} \right)} - \gamma \E{s' \sim P_{s,a}}{}{ v_2\left(s', \frac{r(s,a) + z}{\gamma} \right)}\\
        &= \max_{a \in \A} \gamma \E{s' \sim P_{s,a}}{}{ v_1\left(s', \frac{r(s,a) + z}{\gamma} \right) - v_2\left(s', \frac{r(s,a) + z}{\gamma} \right)}\\
        &\leq \max_{a \in \A} \gamma \E{s' \sim P_{s,a}}{}{ \|v_1-v_2\|_\infty}=\gamma \|v_1-v_2\|_\infty.
    \end{align*}
A similar argument confirms also that:
\[ [Tv_2](s,z)-[Tv_1](s,z) \leq  \gamma \|v_1-v_2\|_\infty\,,\,\forall(s,z)\in\bar{\S},\]
hence confirming the $\gamma$-contraction property under the sup-norm : $\|Tv_2-Tv_1\|_\infty\leq \gamma\|v_2-v_1\|_\infty$.

One can also easily confirm that $\bar{v}_0(s,z):=0$ is a fixed point of $T$:
\[[T\bar{v}_0](s,z) = \max_{a \in \A} \gamma \E{s' \sim P_{s,a}}{}{ \bar{v}_0\left(s', \frac{r(s,a) + z}{\gamma} \right)}=\max_{a \in \A} \gamma \E{s' \sim P_{s,a}}{}{0} = \bar{v}_0(s,z)\,,\,\forall(s,z)\in\bar{\S}.\]

To show that 0 is the only fixed point in $\Linf$, let $\hat{v}$ be any fixed point in $\Linf$. Then, we must have that
\[\|\hat{v}-\bar{v}_0\|_\infty = \|T\hat{v}-T\bar{v}_0\|_\infty \leq \gamma \|\hat{v}-\bar{v}_0\|_\infty\;\Rightarrow\;(1-\gamma)\|\hat{v}-\bar{v}_0\|_\infty \leq 0\;\Rightarrow\;\|\hat{v}-\bar{v}_0\|_\infty \leq 0,\]
since $0\leq\gamma<1$ and $\|\hat{v}-\bar{v}_0\|_\infty\leq \|\hat{v}\|_\infty+\|\bar{v}_0\|_\infty<\infty$.
\end{proof}


\subsection{Proof of Proposition~\ref{prop:bounded-value}}
\label{proof:prop:bounded-value}
\begin{proof} 
Let's first prove that $|\bar{v}^*(s,z)|\leq \frac{r_{\max}}{1-\gamma}, \quad\forall (s,z)\in\bar{\S}$. For an infinite-horizon trajectory $\tau\in\Tc$, define: 
\begin{align*}    
f(\tau,z):=-(R(\tau)+z)_- + z_-=\min\left(R(\tau)+z,0\right) - \min(z,0)= \begin{cases}
    \min\left(R(\tau)+z,0\right) \text{ if } z\geq 0,\\
    \min\left(R(\tau),-z\right) \text{, otherwise.}
    \end{cases}
\end{align*}
This implies that
\begin{align*}
f(\tau,z)&\leq \sup_{z\in\mathbb{R}} f(\tau,z) \\
&= \max\left(\sup_{z\geq 0} \left\{\min(R(\tau)+z,0)\right\},\, \sup_{z<0} \left\{\min(R(\tau),-z)\right\}\right)\\
&= \max(0,\, R(\tau)) \\
&\leq \left|R(\tau)\right| \leq \sum_{t=0}^\infty \gamma^t|r(s_t,a_t)|\leq \frac{1}{1-\gamma}r_{\max}.
\end{align*}
On the other hand,
\begin{align*}
f(\tau,z)&\geq \inf_{z\in\mathbb{R}}  f(\tau,z) \\
&= \min\left(\inf_{z\geq 0} \{\min(R(\tau)+z,0)\},\, \inf_{z<0} \{\min(R(\tau),-z)\}\right)\\
&= \min\left\{\min(R(\tau),0),\, \min(R(\tau),0)\right\}\\
&\geq -|R(\tau)| \geq -\sum_{t=0}^\infty \gamma^t|r(s_t,a_t)|\geq -\frac{1}{1-\gamma}r_{\max}.
\end{align*}
We thus conclude that $|f(\tau, z)|\leq \frac{1}{1-\gamma}r_{\max}$. Since $\bar{v}^*(s,z)= \max_{\pib\in\Pi^{\text{H}}} \E{\tau\sim \mathbb{P}^{\pi}_s}{}{-(R(\tau)+z)_- + z_-} = \max_{\pib\in\Pi^{\text{H}}}\E{\tau\sim \mathbb{P}^{\pi}_s}{}{f(\tau,z)}$, it results that: 
\[|\bar{v}^*(s,z)|=|\max_{\pib\in\Pi^{\text{H}}}\E{\tau\sim \mathbb{P}^{\pi}_s}{}{f(\tau,z)}|\leq \max_{\pib\in\Pi^{\text{H}}}\E{\tau\sim \mathbb{P}^{\pi}_s}{}{|f(\tau,z)|}\leq \frac{1}{1-\gamma}r_{\max}.\]
\smallskip
We now prove the second statement in the proposition.
Based on Eq.~\eqref{eq:reformulated_cvar2}, the optimal CVaR is:
\begin{align*}    
\max_{\pib\in\Pi^{\text{H}}}\cvar{\alpha}{\pzero,\pib}{R(\tau)} 
&= \sup_{z \in \R} \left\{ - z -\frac{z_-}{\alpha}  + \frac{1}{\alpha} \max_{\pib\in\Pi^{\text{H}}} \left\{\E{\tau\sim\Pr^{\pib}_\pzero}{}{-(R(\tau) + z)_-} + z_-\right\} \right\}\\
&= \sup_{z \in \R} \left\{ - z -\frac{z_-}{\alpha}  + \frac{1}{\alpha} \bar{v}^*(\pzero,z) \right\}.
\end{align*}    
\end{proof}


\subsection{Proof of \cref{thm:BellOpContraction}}\label{apx:Tfixedpoint}

This proof will follow easily after establishing the contraction property of $T$, that $\bar{v}^*$ is a fixed point, and that it is constant outside the interval $[-r_\gamma, r_\gamma]$. This is divided into three lemmas.

\begin{lemma}\label{lemma:BellOpContraction} The operator $\bar{T}$ is a $\gamma$-contraction with respect to the sup-norm.
\end{lemma}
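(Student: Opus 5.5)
The plan is to reproduce, almost verbatim, the argument used for $T$ in the proof of \cref{thm:BauerleFP0}. The key observation is that the extra term $\tilde{r}(s,z,a)=z_- - (r(s,a)+z)_-$ in \eqref{eq:new-Bellman-op} does not depend on the value function. It therefore cancels whenever we take differences of $\bar{T}$ applied to two functions. Before doing so, we check that $\bar{T}$ indeed maps $\Linf(\bar{\S})$ into itself. Since $|x_- - y_-|\le |x-y|$, we have $|\tilde{r}(s,z,a)|\le |r(s,a)|\le r_{\max}$. Hence $\|\bar{T}\bar{v}\|_\infty\le r_{\max}+\gamma\|\bar{v}\|_\infty<\infty$ for every bounded $\bar{v}$.

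Next, fix $\bar{v}_1,\bar{v}_2\in\Linf(\bar{\S})$ and $(s,z)\in\bar{\S}$. Write $q_i(a):=\tilde{r}(s,z,a)+\gamma\,\E{s'\sim P_{s,a}}{}{\bar{v}_i\left(s',\frac{r(s,a)+z}{\gamma}\right)}$. We use the elementary inequality $\max_a q_1(a)-\max_a q_2(a)\le \max_a\{q_1(a)-q_2(a)\}$. In the difference $q_1(a)-q_2(a)$ the $\tilde{r}$ terms cancel, leaving
\[
\gamma\,\E{s'\sim P_{s,a}}{}{\bar{v}_1\left(s',\tfrac{r(s,a)+z}{\gamma}\right)-\bar{v}_2\left(s',\tfrac{r(s,a)+z}{\gamma}\right)}\le \gamma\|\bar{v}_1-\bar{v}_2\|_\infty .
\]
Exchanging the roles of $\bar{v}_1$ and $\bar{v}_2$ gives the symmetric bound. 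Taking the supremum over $(s,z)$ then yields $\|\bar{T}\bar{v}_1-\bar{T}\bar{v}_2\|_\infty\le\gamma\|\bar{v}_1-\bar{v}_2\|_\infty$.

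If $\A$ is not finite, the maxima should be read as suprema. The same inequality $\sup q_1-\sup q_2\le\sup(q_1-q_2)$ holds, and both suprema are finite by the boundedness shown above, so nothing else changes.

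The only points needing care are therefore (i) that $\tilde{r}$ is bounded, so $\bar{T}$ is well defined on $\Linf$, and (ii) that $\tilde{r}$ is independent of $\bar{v}$, so it cancels. Neither is a real obstacle; the argument is routine once these are noted.
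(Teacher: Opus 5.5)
Your proposal is correct and follows the paper's argument: bound $\max_a q_1(a)-\max_a q_2(a)$ by $\max_a\{q_1(a)-q_2(a)\}$, cancel the value-independent term $\tilde{r}(s,z,a)$, bound the remaining expectation by $\gamma\|\bar{v}_1-\bar{v}_2\|_\infty$, and then symmetrize. You also verify that $|\tilde{r}(s,z,a)|\le r_{\max}$, so that $\bar{T}$ maps $\Linf(\bar{\S})$ into itself; this is a useful detail that the paper leaves implicit.
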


\begin{proof}

For any functions $v_1, v_2\in \Linf(\bar{\S})$, we have:
\begin{align*}
    [\bar{T}v_1](s,z)-[\bar{T}v_2](s,z)
    =& \max_{a\in\mathcal{A}} \left\{\tilde{r}(s,z,a) + \gamma \E{s'\sim P_{s,a}}{}{v_1\left(s',\frac{r(s,a)+z}{\gamma}\right)}\right\} \\
    &\qquad - \max_{a\in\mathcal{A}} \left\{\tilde{r}(s,z,a) + \gamma \E{s'\sim P_{s,a}}{}{v_2\left(s',\frac{r(s,a)+z}{\gamma}\right)}\right\}\\
    \leq& \max_{a\in\mathcal{A}} \left\{\tilde{r}(s,z,a) + \gamma \E{s'\sim P_{s,a}}{}{v_1\left(s',\frac{r(s,a)+z}{\gamma}\right)} \right.\\
    &\qquad \left. - \tilde{r}(s,z,a) + \gamma \E{s'\sim P_{s,a}}{}{v_2\left(s',\frac{r(s,a)+z}{\gamma}\right)}\right\}\\
    =& \max_{a\in\mathcal{A}}  \gamma \mathbb{E}_{s'\sim P_{s,a}}\left[v_1\left(s',\frac{r(s,a)+z}{\gamma}\right) - v_2\left(s',\frac{r(s,a)+z}{\gamma}\right)\right]\\
    \leq& \max_{a\in\mathcal{A}}  \gamma \mathbb{E}_{s'\sim P_{s,a}}\left[\;\left|v_1(s',\frac{r(s,a)+z}{\gamma}) - v_2\left(s',\frac{r(s,a)+z}{\gamma}\right)\right|\;\right]\\
    \leq& \max_{a\in\mathcal{A}} \gamma \E{s'\sim P_{s,a}}{}{\|v_1 - v_2\|_\infty} = \gamma \|v_1 - v_2\|_\infty.    
\end{align*}
This also similarly applies to:
\[[\bar{T}v_2](s,z)-[\bar{T}v_1](s,z)\leq \gamma \|v_1 - v_2\|_\infty,\]
hence $\|\bar{T}v_2-\bar{T}v_1\|_\infty \leq \gamma \|v_1 - v_2\|_\infty$.
\end{proof}

\begin{lemma} \label{lemma:operator-fixed-point}
    The optimal value $\bar{v}^*$ is a fixed point of operator $\bar{T}$.
\end{lemma}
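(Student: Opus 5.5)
The plan is to relate $\bar{v}^*$ to the Bäuerle value function $v^*(s,z)=\bar{v}^*(s,z)-z_-$ and to use the one-step decomposition of the return. For any trajectory $\tau=(s_0,a_0,s_1,\dots)$ write $R(\tau)=r(s_0,a_0)+\gamma R(\tau_{1:})$, where $\tau_{1:}$ is the shifted trajectory. Then
\[
-(R(\tau)+z)_- = -\Bigl(\gamma\bigl(R(\tau_{1:})+z'\bigr)\Bigr)_- = -\gamma\bigl(R(\tau_{1:})+z'\bigr)_-,
\qquad z':=\frac{r(s_0,a_0)+z}{\gamma},
\]
using positive homogeneity of $x\mapsto x_-$ with $\gamma>0$. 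For $\gamma=0$ the claim is immediate, since the expectation term vanishes and $\bar{v}^*(s,z)=\max_a \tilde r(s,z,a)$. Adding and subtracting, I would get the pathwise identity
\[
-(R(\tau)+z)_-+z_- = \tilde{r}(s_0,z,a_0) + \gamma\Bigl[-\bigl(R(\tau_{1:})+z'\bigr)_- + z'_-\Bigr],
\]
because $\tilde r(s,z,a)=z_- -(r(s,a)+z)_-$ and $\gamma z'_-=(r(s,a)+z)_-$. This identity is the reward-redistribution telescoping, and it reduces the lemma to a standard dynamic programming principle for history-dependent policies.

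Next I would take expectations under $\Pr^{\pib}_s$ and condition on $(a_0,s_1)$. The continuation policy $\pib'$, defined by $\pi'_t(\tau'_{:t})=\pi_{t+1}((s,a_0)\oplus\tau'_{:t})$, is again history dependent. Hence for each $\pib$,
\[
\E{\tau\sim\Pr^{\pib}_s}{}{f(\tau,z)} = \sum_a \pi_0(a|s)\Bigl(\tilde r(s,z,a)+\gamma\,\E{s_1\sim P_{s,a}}{}{\E{\tau'\sim \Pr^{\pib'}_{s_1}}{}{f(\tau',z'_a)}}\Bigr),
\]
where $f$ is as in the proof of Proposition~\ref{prop:bounded-value}. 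Both sides of the fixed-point equation then follow by two inequalities.

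For $\le$, I bound the inner expectation by $\bar{v}^*(s_1,z'_a)$ and the mixture over $a$ by the max, which gives $\bar{v}^*\le \bar{T}\bar{v}^*$. For $\ge$, given $\epsilon>0$, I pick the maximizing action $a^*$ in $\bar{T}\bar{v}^*(s,z)$. Then, for each $s_1$, I pick an $\epsilon$-optimal history-dependent policy $\pib^{s_1}$ for $\bar{v}^*(s_1,z'_{a^*})$. I concatenate them by playing $a^*$ first and then $\pib^{s_1}$ according to the observed $s_1$. This is admissible in $\Pi^{\text{H}}$, and it yields $\bar{v}^*(s,z)\ge \bar{T}\bar{v}^*(s,z)-\gamma\epsilon$. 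Letting $\epsilon\to0$ closes the argument. Boundedness from Proposition~\ref{prop:bounded-value} guarantees that all quantities are finite, so the max in the definition can be read as a sup.

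The main obstacle is the measure-theoretic bookkeeping in the $\ge$ direction. Selecting $\epsilon$-optimal policies for each $s_1$ needs measurability if $\S$ is uncountable. In the countable or finite case the selection is trivial. Otherwise I would first establish that $\bar{v}^*$ coincides with the Bäuerle value shifted by $z_-$, i.e. $\bar{v}^*=v^*+z_-$, and invoke the known Bellman equation $v^*=Tv^*$ from \citet{bauerle2011markov}, which holds on their admissible class. The pathwise identity then transfers it directly to $\bar{T}$: indeed $\bar{T}(v+z_-)=Tv+z_-$ is an algebraic identity obtained from $\tilde r$ and $\gamma z'_-=(r+z)_-$. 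I would therefore present this transfer as the clean route, with the direct argument above as a self-contained alternative.
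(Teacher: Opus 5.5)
Your proof is correct and follows the paper's route. The same cancellation $\gamma\,(z')_- = (r(s,a)+z)_-$ between $\tilde{r}$ and the continuation term reduces the claim to the dynamic programming principle for $v^*=\bar{v}^*-z_-$, and your ``transfer'' identity $\bar{T}(v+z_-)=Tv+z_-$ is exactly the paper's computation, except that the paper asserts the interchange $\max_a \mathbb{E}_{s'}[\max_{\pib}\cdots]=\max_{\pib}\mathbb{E}_s[\cdots]$ in one line while your $\le$/$\ge$ argument with concatenated $\epsilon$-optimal history-dependent policies actually justifies it.
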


\begin{proof}
Recall that $\tilde{r}(s,z,a)=z_- - (r(s,a)+z)_-=\min\left(0,r(s,a)+z\right) - \min(0,z).$ 
By definition of $\bar{T}$, for all $(s,z)\in\bar{\S}$, one can verify that
\begin{align*}
[\bar{T} \bar{v}^*](s,z) &= \max_{a\in\mathcal{A}} \left\{\tilde{r}(s,z,a) + \gamma \mathbb{E}_{s'\sim P_{s,a}}\left[\bar{v}^*\left(s',\frac{r(s,a)+z}{\gamma}\right)\right]\right\}\\
&= \max_{a\in\mathcal{A}} \left\{\tilde{r}(s,z,a) + \gamma \mathbb{E}_{s'\sim P_{s,a}}\left[\max_{\pib\in\Pi^{\text{H}}} \mathbb{E}_{\tau\sim \mathbb{P}^{\pi}_{s'}}\left[\min\left(0,R(\tau)+\frac{r(s,a)+z}{\gamma}\right)\right]\right.\right.\\
&\qquad\left.\left.-\min\left(0,\frac{r(s,a)+z}{\gamma}\right)\right]\right\}\\
&= \max_{a\in\mathcal{A}} \bigg\{\min(0,r(s,a)+z) - \min(0,z)+ \\
&\left. \mathbb{E}_{s'\sim P_{s,a}}\left[\max_{\pib\in\Pi^{\text{H}}} \mathbb{E}_{\tau\sim \mathbb{P}^{\pi}_{s'}}\left[\min(0,\gamma R(\tau)+{r(s,a)+z})\right]- \gamma \min\left(0,\frac{r(s,a)+z}{\gamma}\right)\right]\right\}\\
&= \max_{a\in\mathcal{A}}\left\{\mathbb{E}_{s'\sim P_{s,a}}\left[\max_{\pib\in\Pi^{\text{H}}} \mathbb{E}_{\tau\sim \mathbb{P}^{\pi}_{s'}}\left[\min(0,\gamma R(\tau)+{r(s,a)+z})\right]\right]- \min(0,z)\right\}\\
&= \max_{\pib\in\Pi^{\text{H}}} \mathbb{E}_{\tau\sim \mathbb{P}^{\pi}_s}[\min(0, R(\tau)+z)] - \min(0,z)\\
&= \bar{v}^*(s,z).
\end{align*}
\end{proof}

\begin{lemma}
\label{prop: v eq v proj}
    For all $(s,z)\in\bar{\S}$, it holds that $\bar{v}^*(s,z)=\bar{v}^*(s,\p(z))$.
\end{lemma}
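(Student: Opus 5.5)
The plan is to work pathwise with the function $f(\tau,z):=-(R(\tau)+z)_-+z_-=\min(0,R(\tau)+z)-\min(0,z)$ introduced in the proof of \cref{prop:bounded-value}. It suffices to show that $f(\tau,z)=f(\tau,\p(z))$ for every infinite trajectory $\tau$ and every $z\in\R$. Indeed, $\bar{v}^*(s,z)=\max_{\pib\in\Pi^{\text{H}}}\E{\tau\sim\Pr^{\pib}_s}{}{f(\tau,z)}$, and the trajectory distribution $\Pr^{\pib}_s$ does not depend on $z$. A pathwise identity in $z$ therefore passes directly through the expectation and then through the maximum over policies. The only fact about $\tau$ we need is the bound $|R(\tau)|\leq \sum_t\gamma^t|r(s_t,a_t)|\leq r_\gamma$, which was already used in the proof of \cref{prop:bounded-value}.

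\textbf{Case $|z|\leq r_\gamma$.} Here $\p(z)=z$, so there is nothing to prove.

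\textbf{Case $z>r_\gamma$.} Then $\p(z)=r_\gamma\geq 0$, so $\min(0,z)=\min(0,r_\gamma)=0$. Since $R(\tau)\geq -r_\gamma$, we get $R(\tau)+z> 0$ and $R(\tau)+r_\gamma\geq 0$. Both minima $\min(0,R(\tau)+z)$ and $\min(0,R(\tau)+r_\gamma)$ therefore equal $0$, giving $f(\tau,z)=0=f(\tau,r_\gamma)$.

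\textbf{Case $z<-r_\gamma$.} Use the representation $f(\tau,z)=\min(R(\tau),-z)$ for $z<0$, derived in the proof of \cref{prop:bounded-value}. Since $-z>r_\gamma\geq R(\tau)$, we get $f(\tau,z)=R(\tau)$. Likewise, with $\p(z)=-r_\gamma\leq 0$, we have $f(\tau,-r_\gamma)=\min(R(\tau),r_\gamma)=R(\tau)$. If $r_\gamma=0$, this falls under the formula $\min(0,R+z)-\min(0,z)$ at $z=0$, and both sides are $0=R(\tau)$ because $R\equiv 0$.

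Taking expectations under $\Pr^{\pib}_s$ and then the maximum over $\pib\in\Pi^{\text{H}}$ yields $\bar{v}^*(s,z)=\bar{v}^*(s,\p(z))$.

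There is no real obstacle here. The only care needed is at the boundary points $z=\pm r_\gamma$ and in the degenerate case $r_{\max}=0$. It also matters that the bound $|R(\tau)|\leq r_\gamma$ holds surely for every trajectory, not just in expectation; otherwise the identity could not be applied pathwise before taking expectations.
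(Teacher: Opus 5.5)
Your proposal is correct and follows the paper's proof. It uses the same three-case split of $z$ relative to $\pm r_\gamma$ and the same sure bound $|R(\tau)|\le r_\gamma$, which makes both sides vanish when $z\ge r_\gamma$ and reduces both to the risk-neutral integrand $R(\tau)$ when $z\le -r_\gamma$. The only cosmetic difference is that you prove the identity pathwise for $f(\tau,\cdot)$ before taking expectations and the maximum, whereas the paper carries the same computation inside $\max_{\pib}$ of the expectation.
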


\begin{proof}
Recall that 
\[\bar{v}^*(s,z):= \max_{\pib\in\Pi^{\text{H}}} \E{\tau\sim \mathbb{P}^{\pi}_s}{}{-(R(\tau)+z)_- + z_-}=\max_{\pib\in\Pi^{\text{H}}} \E{\tau\sim\mathbb{P}^{\pi}_{s}}{}{\min(0,R(\tau)+z)} - \min(0,z),\]
and that for all $\pib\in\Pi^{\text{H}}$ and initial state $s$, $R(\tau)\in[-r_\gamma,r_\gamma]$ almost surely.

We proceed case by case in each of the following three scenarios: 

\underline{Case 1: $z\in[-r_{\gamma}, r_{\gamma}]$.} By definition,
    \[\p(z) = \max(-(1-\gamma)^{-1}r_{\max}, \min(z, (1-\gamma)^{-1}r_{\max}))= z,\]
    so necessarily, $\bar{v}^*(s,\p(z))=\bar{v}^*(s,z)$.

\underline{Case 2: $z\geq r_{\gamma}$.} 
    Then, $\p(z) = r_{\gamma}$ and
    \begin{align*}
    \bar{v}^*(s,\p(z)) &= \bar{v}^*(s,r_{\gamma})\\
    &=\max_{\pib\in\Pi^{\text{H}}} \E{\tau\sim\mathbb{P}^{\pi}_{s}}{}{\min(0,R(\tau)+r_{\gamma})} - \min(0,r_{\gamma})\\
    &=0 \\
    &=\max_{\pib\in\Pi^{\text{H}}} \E{\tau\sim\mathbb{P}^{\pi}_{s}}{}{\min(0,R(\tau)+z)} - \min(0,z)=\bar{v}^*(s,z),
    \end{align*}
    where the second equality follows from $R(\tau)+r_{\gamma}\geq 0$ almost surely and $r_\gamma\geq 0$, and the last two inequalities hold due to $z\geq r_{\gamma}\geq 0$ and $R(\tau)+z\geq R(\tau)+r_{\gamma}\geq 0$ almost surely for all $\pib\in\Pi^{\text{H}}$.

\underline{Case 3: $z\leq -r_{\gamma}$.} Then $\p(z) = -r_{\gamma}$  so that:
    \begin{align*}
    \bar{v}^*(s,\p(z)) &= \bar{v}^*(s,-r_{\gamma})\\
    &=\max_{\pib\in\Pi^{\text{H}}} \E{\tau\sim\mathbb{P}^{\pi}_{s}}{}{\min(0,R(\tau)-r_{\gamma})} - \min(0,-r_{\gamma})\\
    &=\max_{\pib\in\Pi^{\text{H}}} \E{\tau\sim\mathbb{P}^{\pi}_{s}}{}{R(\tau)-r_{\gamma}} + r_{\gamma}\\
    &=\max_{\pib\in\Pi^{\text{H}}} \E{\tau\sim\mathbb{P}^{\pi}_{s}}{}{R(\tau)} \\   
    &=\max_{\pib\in\Pi^{\text{H}}} \E{\tau\sim\mathbb{P}^{\pi}_{s}}{}{R(\tau)+z} -z\\    
    &=\max_{\pib\in\Pi^{\text{H}}} \E{\tau\sim\mathbb{P}^{\pi}_{s}}{}{\min(0,R(\tau)+z)} - \min(0,z)=\bar{v}^*(s,z),
    \end{align*}
    where the second equality follows from $R(\tau)-r_{\gamma}\leq 0$ almost surely, and the last two equalities hold due to both $z\leq -r_{\gamma} \leq 0$ and $R(\tau)+z\leq R(\tau)-r_{\gamma}\leq 0$ almost surely for all $\pi\in\Pi$.
\end{proof}

We can easily further confirm our claim that $\bar{v}^*$ is the only fixed point by arguing that any fixed point $\hat{v}$ of $\bar{T}$ must satisfy:
\[\|\hat{v}-\bar{v}^*\|_\infty = \|T\hat{v}-T\bar{v}^*\|_\infty \leq \gamma \|\hat{v}-\bar{v}^*\|_\infty\;\Rightarrow\;(1-\gamma)\|\hat{v}-\bar{v}^*\|_\infty \leq 0\;\Rightarrow\;\|\hat{v}-\bar{v}^*\|_\infty \leq 0,\]
since $0\leq\gamma<1$.

\subsection{Proof of Proposition~\ref{prop: ordering u l}}
\label{proof:prop: ordering u l}

Our proof relies on two useful lemmas.

\subsubsection{Two Useful Lemmas}

We start with a simple Lemma about convex functions.

\begin{lemma}\label{thm:negative-incSecant}
    Given a convex function $f(x)$, for all $\Delta\leq 0$ and $x_1\leq x_2$, we have that:
    \[f(x_1)-f(x_1 + \Delta) \leq f(x_2)-f(x_2 + \Delta).\]
    Alternatively, if $f(x)$ is concave, then 
   \[f(x_1)-f(x_1 + \Delta) \geq f(x_2)-f(x_2 + \Delta).\]
\end{lemma}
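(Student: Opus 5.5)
The statement to prove is the elementary secant-monotonicity property of convex functions. I will write the claimed difference as an increment and compare slopes. With $\Delta=0$ both sides vanish, so assume $\Delta<0$. Define $g(x):=f(x)-f(x+\Delta)$. The claim is that $g$ is non-decreasing on $\R$ when $f$ is convex, and non-increasing when $f$ is concave. Dividing by $-\Delta>0$ gives
\[
\frac{g(x)}{-\Delta}=\frac{f(x)-f(x+\Delta)}{x-(x+\Delta)} ,
\]
which is the slope of the chord of $f$ over the interval $[x+\Delta,x]$, an interval of fixed length $|\Delta|$. So it suffices to show that chord slopes of a convex function over intervals of equal length are non-decreasing as the interval is translated to the right.

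To do this I will invoke the three-chord (slope-monotonicity) inequality for convex functions: for any $a<b$ and $c<d$ with $a\le c$ and $b\le d$,
\[
\frac{f(b)-f(a)}{b-a}\le \frac{f(d)-f(c)}{d-c}.
\]
I will apply it with $a=x_1+\Delta$, $b=x_1$, $c=x_2+\Delta$ and $d=x_2$; the hypotheses $a\le c$ and $b\le d$ both follow from $x_1\le x_2$. The inequality then says $g(x_1)/(-\Delta)\le g(x_2)/(-\Delta)$, and multiplying through by $-\Delta>0$ gives the convex case.

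The only step needing care is the general form of slope monotonicity, since the intervals $[a,b]$ and $[c,d]$ may overlap or be disjoint. I will derive it from the standard fact that $s(u,v):=\frac{f(v)-f(u)}{v-u}$ is non-decreasing in each argument, which in turn follows directly from the definition of convexity by writing the middle point as a convex combination of the outer two. With that fact, $s(a,b)\le s(a,d)\le s(c,d)$: the first step increases the second argument from $b$ to $d$, and the second increases the first argument from $a$ to $c$. When $a=d$, one of these slopes is undefined, so I will instead route through $s(c,b)$ if $c<b$, or through $s(b,c)$ if $b\le c$. I expect the main obstacle to be handling these degenerate orderings cleanly rather than any substantive difficulty.

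For the concave case, $-f$ is convex, so applying the convex result to $-f$ and multiplying by $-1$ reverses the inequality, which gives exactly the stated bound.
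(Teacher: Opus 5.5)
Your proof is correct. It uses the same ingredient as the paper: the three-chord inequality for convex $f$, applied to two chords of equal length $|\Delta|$, with the concave case obtained by passing to $-f$. Where you differ is the intermediate chord.

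The paper splits into two configurations. In the disjoint one, $x_1-|\Delta|\le x_1\le x_2-|\Delta|\le x_2$; in the overlapping one, $x_1-|\Delta|\le x_2-|\Delta|\le x_1\le x_2$. In each it compares the two chords through the ``gap'' chord joining $x_1$ and $x_2-|\Delta|$. You instead go through the spanning chord $[x_1+\Delta,x_2]$, which shares an endpoint with each of the two chords. So the single chain $s(a,b)\le s(a,d)\le s(c,d)$ covers every configuration.

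Your route is uniform and never degenerates. Both steps are just the three-point inequality on the ordered triples $a<b\le d$ and $a\le c<d$. By contrast, the paper's middle chord has zero length when $x_2-|\Delta|=x_1$. The paper also divides by $|\Delta|$ without setting aside $\Delta=0$, which you do handle.

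The degenerate case $a=d$ you flag cannot occur, since $a=x_1+\Delta<x_1\le x_2=d$. Your fallback through $s(c,b)$ or $s(b,c)$ is exactly the paper's gap chord, and it is never needed. It would in fact itself degenerate when $b=c$.
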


\begin{proof}
We start by showing that given $x_1\leq x_2\leq x_3$, we must have that:
\[s_1:=\frac{f(x_2)-f(x_1)}{x_2-x_1}\leq s_2:=\frac{f(x_3)-f(x_1)}{x_3-x_1}\leq s_3:=\frac{f(x_3)-f(x_2)}{x_3-x_2}.\]
One can first use the convexity of $f(x)$ to confirm:
\begin{align*}
f(x_2)-f(x_1) &= f\left(\frac{x_2-x_1}{x_3-x_1}x_3+\left(1-\frac{x_2-x_1}{x_3-x_1}\right)x_1\right)-f(x_1)\\
&\leq \frac{x_2-x_1}{x_3-x_1}f(x_3)+\left(1-\frac{x_2-x_1}{x_3-x_1}\right)f(x_1)-f(x_1)\\
&=     \frac{x_2-x_1}{x_3-x_1}(f(x_3)-f(x_1)).
\end{align*}
Hence, $s_1\leq s_2$. Now, given that $s_2(x_3-x_1)=s_1(x_2-x_1)+s_3(x_3-x_2)$, we must have:
\begin{align*}
    &s_2(x_3-x_1)= s_1(x_2-x_1)+s_3(x_3-x_2)\leq  s_2(x_2-x_1)+s_3(x_3-x_2)\\&\;\Rightarrow\;s_2(x_3-x_2)\leq s_3(x_3-x_2)\;\Rightarrow\;s_2\leq s_3.
\end{align*}
Similarly
\begin{align*}
    &s_1(x_3-x_1) \leq s_2 (x_3-x_1) = s_1(x_2-x_1)+s_3(x_3-x_2) \\&\;\Rightarrow\;s_1(x_3-x_2)\leq s_3(x_3-x_2) \;\Rightarrow\; s_1\leq s_3. 
\end{align*}

\noindent Now getting back at our claim, one can identify two cases. In a first case, we have that \[x_1 - |\Delta| \leq x_1 \leq x_2 - |\Delta| \leq x_2.\]

\noindent Applying the ordering of secants established above twice over this sequence, we get:
\[\frac{f(x_1)-f(x_1  - |\Delta|)}{|\Delta|}\leq \frac{f(x_2 - |\Delta|)-f(x_1)}{x_2 -|\Delta| -x_1}\leq  \frac{f(x_2)-f(x_2 - |\Delta|)}{|\Delta|},\]
concluding that $f(x_1)-f(x_1 - |\Delta|) \leq f(x_2)-f(x_2 - |\Delta|)$ since $|\Delta|\geq 0$.

\noindent Alternatively, in the second case we have $x_1 - |\Delta| \leq x_2 - |\Delta| \leq x_1 \leq x_2$. A similar argument leads to: 
\[\frac{f(x_1)-f(x_1-|\Delta|)}{|\Delta|}\leq \frac{f(x_1)-f(x_2-|\Delta|)}{x_1-x_2+|\Delta|}\leq  \frac{f(x_2)-f(x_2-|\Delta|)}{|\Delta|}.\]
Hence, we have again that
$f(x_1)-f(x_1 - |\Delta|) \leq f(x_2)-f(x_2 - |\Delta|)$. 

\noindent The result for concave functions follows from $g(x):=-f(x)$ being convex, thus:
\[g(x_1)-g(x_1 + \Delta) \leq g(x_2)-g(x_2 + \Delta),\]
which implies that 
\[-f(x_1)+f(x_1+\Delta) \leq -f(x_2)+f(x_2+\Delta),\]
and finally that 
 \[f(x_1)-f(x_1+\Delta) \geq f(x_2)-f(x_2+\Delta).\]
\end{proof}

We follow with a general lemma that will help verify that $\bar{T}^\erm$ preserves the monotonicity in $z$ of the value function.

\begin{lemma}\label{lem:g_monotone_neg_rewards}
Let $f:\mathbb R\to\mathbb R$ be non-decreasing and $v\in \Linf(\bar{\S})$ be a non-decreasing function in $z$. Define for any $(s,z,a)$,
\[g(s,z,a;v,f)
:=\Big(\min(0,r(s,a)+z)-\min(0,z)\Big)
+\gamma\,\E{s'\sim P_{s,a}}{}{
v\!\left(s',\,f\!\left(\frac{r(s,a)+z}{\gamma}\right)\right)
}.\]
If Assumption~\ref{assump:reward-sign} is satisfied, then for any $s\in\mathcal S$, $a\in\mathcal A$, $g(s,z,a,v,f)$ is non-decreasing in $z$.
\end{lemma}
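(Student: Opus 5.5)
Fix $s$, $a$, and two values $z_1\le z_2$, and write $r:=r(s,a)\le 0$. The function $g$ is the sum of two terms, and we show each is non-decreasing in $z$; $g(s,z_2,a;v,f)\ge g(s,z_1,a;v,f)$ then follows by adding the two inequalities.

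\emph{Continuation term.} Since $\gamma>0$, the map $z\mapsto (r+z)/\gamma$ is non-decreasing. Composing with the non-decreasing $f$ gives $f((r+z_1)/\gamma)\le f((r+z_2)/\gamma)$. Because $v(s',\cdot)$ is non-decreasing for every $s'$, we get pointwise in $s'$
\[
v\bigl(s',f((r+z_1)/\gamma)\bigr)\le v\bigl(s',f((r+z_2)/\gamma)\bigr).
\]
Taking the expectation over $s'\sim P_{s,a}$ and multiplying by $\gamma\ge0$ preserves this inequality. Boundedness of $v$ ensures the expectations are finite.

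\emph{Immediate reward term.} The term $\tilde r(s,z,a)=\min(0,r+z)-\min(0,z)$ is where Assumption~\ref{assump:reward-sign} enters, and it is the only delicate step. Set $h(x):=\min(0,x)$, which is concave, and apply Lemma~\ref{thm:negative-incSecant} with shift $\Delta:=r\le 0$. For $x_1=z_1\le x_2=z_2$, the concave case of the lemma gives
\[
h(z_1)-h(z_1+r)\ \ge\ h(z_2)-h(z_2+r).
\]
Rearranging, $h(z_1+r)-h(z_1)\le h(z_2+r)-h(z_2)$, i.e.\ $\tilde r(s,z_1,a)\le \tilde r(s,z_2,a)$.

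As a sanity check, the three regimes can be verified directly. If $z\ge -r$, then $\tilde r=-\min(0,z)=0$. If $0\le z\le -r$, then $\tilde r=r+z$, which is increasing. If $z\le 0$, then $\tilde r=r$. The resulting piecewise function $\tilde r(z)=\min(0,\max(r,\,r+z))$ is non-decreasing and is continuous at the breakpoints $z=0$ and $z=-r$.

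Without $r\le0$ the reward term can decrease in $z$. For example, with $r>0$ one has $\tilde r=r$ for $z\ge0$ but $\tilde r=-z$ on $[-r,0]$, which is decreasing. This is why the sign assumption is needed for the conclusion.
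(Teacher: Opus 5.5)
Your proof is correct and follows the paper's argument essentially step for step. You handle the reward term $\min(0,r+z)-\min(0,z)$ with the concave case of Lemma~\ref{thm:negative-incSecant} using the step $\Delta=r\le 0$, and the continuation term with the monotonicity of $f$ and $v(s',\cdot)$. One small slip sits in your closing remark, which the proof does not use: for $r>0$ the reward term equals $0$ (not $r$) when $z\ge 0$, and equals $r$ when $z\le -r$; your conclusion that it decreases on $[-r,0]$ still stands.
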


\begin{proof}
Fix $s\in\mathcal S$ and $a\in\mathcal A$, and let $z_1\le z_2$.
For brevity write $r=r(s,a)$. Then
\begin{align*}
g(s,z_2,a,v,f)-g(s,z_1,a,v,f)
&=\Big(\min(0,r+z_2)-\min(0,z_2)\Big)-\Big(\min(0,r+z_1)-\min(0,z_1)\Big)\\
&\quad +\gamma\,\E{s'\sim P_{s,a}}{}{
v\!\left(s',f\left(\frac{r+z_2}{\gamma}\right)\right)
-
v\left(s',f\left(\frac{r+z_1}{\gamma}\right)\right)
}.
\end{align*}

\noindent\textbf{(1)} The $\min$ function is concave. Since $r\le 0$, the map $z\mapsto \min(z+r)-\min(z)$ is non-decreasing in $z$ (by \cref{thm:negative-incSecant} applied to the concave function $\min$ with step $\Delta=r\le 0$). Hence
\[
\min(0,r+z_2)-\min(0,z_2)\ \ge\ \min(0,r+z_1)-\min(0,z_1).
\]

\noindent\textbf{(2)}
Since $z_1\le z_2$, we have $(r+z_1)/\gamma \le (r+z_2)/\gamma$.
Because $f$ is non-decreasing, $f((r+z_1)/\gamma) \le f((r+z_2)/\gamma)$. This further implies by monotonicity of $v$ in $z$ that:
\[\gamma \E{s'\sim P_{s,a}}{}{v\left(s',f\left(\frac{r+z_2}{\gamma}\right)\right)}\geq \gamma \E{s'\sim P_{s,a}}{}{v\left(s',f\left(\frac{r+z_1}{\gamma}\right)\right)
}.\]

\noindent Combining (1) and (2) yields $g(s,z_2,a,v,f)\geq g(s,z_1,a,v,f)$, which proves the claim.
\end{proof}

\subsubsection{Main proof}
\begin{proof}
\noindent Since $\lrm(y)\le y\le \urm(y)$ and $\p$ is non-decreasing, we have for all $y\in\R$:
\begin{equation}
\p(\lrm(y))\le \p(y)\le \p(\urm(y)).   \label{eq:Pfmonotone}
\end{equation}

Recall that: 
\begin{align*}
[\bar{T}^{\lrm}\bar{v}](s,z)&:=[\bar{T}\bar{v}_{\p\circ\lrm}](s,z)=\max_a \left(\min(0,r(s,a)+z) - \min(0,z)\right) + \gamma \E{s'\sim P_{s,a}}{}{\bar{v}\left(s',\p\left(\lrm\left(\frac{r(s,a)+z}{\gamma}\right)\right)\right)} \\
[\bar{T}^{\urm}\bar{v}](s,z)&:=[\bar{T}\bar{v}_{\p\circ\urm}](s,z)=\max_a \left(\min(0,r(s,a)+z) - \min(0,z)\right) + \gamma \E{s'\sim P_{s,a}}{}{\bar{v}\left(s',\p\left(\urm\left(\frac{r(s,a)+z}{\gamma}\right)\right)\right)},
\end{align*}
to which we add:
\[[\bar{T}^{\p}\bar{v}](s,z) :=[\bar{T}\bar{v}_{\p\circ\p}](s,z)=[\bar{T}\bar{v}_{\p}](s,z)=\max_a \left(\min(0,r(s,a)+z) - \min(0,z)\right) + \gamma \E{s'\sim P_{s,a}}{}{\bar{v}\left(s',\p\left(\frac{r(s,a)+z}{\gamma}\right)\right)}.\]

\noindent Define 
\[
\bar{v}_{k}^\lrm := [\bar{T}^{\lrm}]^k \bar{v},\qquad \bar{v}_{k}^{\p} := [\bar{T}^\p]^k \bar{v} \qquad 
\bar{v}_k^\urm := [\bar{T}^{\urm}]^k \bar{v} .
\]

\noindent We will prove inductively that the following two properties hold: \\
\noindent (A)  For all $k\ge 0$,
\[
\bar{v}_k^\lrm(s,z)\ \le\ \bar{v}_{k}^{\p}(s,z)\ \le\ 
\bar{v}_k^\urm(s,z)\qquad \forall (s,z)\in\bar{\S}.
\]
\noindent (B) For every $s\in\S$, each of $\bar{v}_k^\lrm(s,z),\;\bar{v}_{k}^{\p}(s,z)$, and $\bar{v}_k^\urm(s,z)$ is non-decreasing in $z$.

\noindent \textbf{Base Case (k = 0):} Given that $\bar{v}_0^\lrm=\bar{v}_0^\p=\bar{v}_0^\urm=\bar{v}$ and that $\bar{v}(s,z)$  is non-decreasing in $z$,  (A) and (B) hold trivially for the base case.

\noindent \textbf{Induction:} Assume that $ \ \bar{v}_k^\lrm(s,z)\ \le\ \bar{v}_k^\p(s,z)\ \le\ 
\bar{v}_k^\urm(s,z)\ \ \forall (s,z) \in\bar{\S} $ and that all three functions are non-decreasing in $z$. We now show that properties (A) and (B) also apply at $k+1$.

\noindent Starting with the envelope properties, we can see that for all $(s,z)\in\bar{\S}$:
\begin{align*}
    \bar{v}_{k+1}^\urm(s,z) &= [\bar{T}^\urm \bar{v}_k^\urm](s,z) = \max_a \left(\min(0,r(s,a)+z) - \min(0,z)\right) + \gamma \E{s'\sim P_{s,a}}{}{\bar{v}_k^\urm\left(s',\p\left(\urm\left(\frac{r(s,a)+z}{\gamma}\right)\right)\right)}\\
    &\geq  \max_a \left(\min(0,r(s,a)+z) - \min(0,z)\right) + \gamma \E{s'\sim P_{s,a}}{}{\bar{v}_k^\urm\left(s',\p\left(\frac{r(s,a)+z}{\gamma}\right)\right)}\\
    & \geq \max_a \left(\min(0,r(s,a)+z) - \min(0,z)\right) + \gamma \E{s'\sim P_{s,a}}{}{\bar{v}_k^\p\left(s',\left(\frac{r(s,a)+z}{\gamma}\right)\right)}=[\bar{T}^\p\bar{v}_k^\p](s,z) = \bar{v}_{k+1}^\p(s,z),
\end{align*}
where we first exploit \eqref{eq:Pfmonotone} and the monotonicity of $v_l^\urm$ with respect to $z$, and then $\bar{v}_k^\urm\geq \bar{v}_k^\p$. 

\noindent A similar argument applies for $\bar{v}_{k+1}^\lrm$. Namely,
    \begin{align*}
        \bar{v}_{k}^\lrm(s,z) &= \max_a \left(\min(0,r(s,a)+z) - \min(0,z)\right) + \gamma \E{s'\sim P_{s,a}}{}{\bar{v}_{k}^\lrm\left(s',\p\left(\lrm\left(\frac{r(s,a)+z}{\gamma}\right)\right)\right)}\\
        &\leq \max_a \left(\min(0,r(s,a)+z) - \min(0,z)\right) + \gamma \E{s'\sim P_{s,a}}{}{\bar{v}_{k}^\p\left(s',\p\left(\frac{r(s,a)+z}{\gamma}\right)\right)}\\
        &=[\bar{T}^\p\bar{v}_k^\p](s,z) = \bar{v}_{k+1}^\p(s,z).
    \end{align*}    

\noindent Thus we have that for $k+1$ property (A) holds.

\noindent We will now show that property (B) also holds at $k+1$ using Lemma \eqref{lem:g_monotone_neg_rewards}. Namely, since $\bar{v}_{k+1}^\urm$ and $\p\circ\urm$ are non-decreasing, we have that:
\begin{align*}
    \bar{v}_{k+1}^\urm(s,z_2)-\bar{v}_{k+1}^\urm(s,z_1) &=\max_{a_2} g(s,z_2,a_2;\bar{v}_{k}^\urm,\p\circ\urm) - \max_{a_1} g(s,z_1,a_1;\bar{v}_{k}^\urm,\p\circ\urm)\\
    &\geq \max_{a_2} g(s,z_2,a_2;\bar{v}_{k}^\urm,\p\circ\urm) - g(s,z_1,a_2;\bar{v}_{k}^\urm,\p\circ\urm) 
    \geq 0.
\end{align*}
Similarly for $\bar{v}_{k+1}^\lrm$,
\begin{align*}
    \bar{v}_{k+1}^\lrm(s,z_2)-\bar{v}_{k+1}^\lrm(s,z_1) &=\max_{a_2} g(s,z_2,a_2;\bar{v}_{k}^\lrm,\p\circ\lrm) - \max_{a_1} g(s,z_1,a_1;\bar{v}_{k}^\lrm,\p\circ\lrm)\\
    &\geq \max_{a_2} g(s,z_2,a_2;\bar{v}_{k}^\lrm,\p\circ\lrm) - g(s,z_1,a_2;\bar{v}_{k}^\lrm,\p\circ\lrm)
    \geq 0,
\end{align*}
and for $\bar{v}_{k+1}^\p$:
\begin{align*}
    \bar{v}_{k+1}^\p(s,z_2)-\bar{v}_{k+1}^\p(s,z_1) &=\max_{a_2} g(s,z_2,a_2;\bar{v}_{k}^\p,\p) - \max_{a_1} g(s,z_1,a_1;\bar{v}_{k}^\p,\p)\\
    &\geq \max_{a_2} g(s,z_2,a_2;\bar{v}_{k}^\p,\p) - g(s,z_1,a_2;\bar{v}_{k}^\p,\p)
    \geq 0.
\end{align*}
This completes our proof.
\end{proof}

\subsection{Proof of Theorem~\ref{thm:disc-op-fixedpt}}
\label{appx:subsec:proof_thm_disc_fixedpt}

\subsubsection{Useful Lemmas}

We start with two simple lemmas that investigate the magnitude of difference  $\p(\erm(y))-\erm(y)$ when $\erm\in\{\lrm,\urm\}$ and the continuity of $\bar{v}^*(s,z)$ in $z$.

\begin{lemma} \label{lemma:proj-error}
    Given the projection map $\p(y):=\max(-r_\gamma,\min(r_\gamma,y))$ and any function $f:\R\rightarrow \R$, we have that $\sup_{z\in\R}|\p(z)-\p(f(z))| \le \sup_{z\in\R}|z-\erm(z)|$.
\end{lemma}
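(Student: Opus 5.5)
The plan is a pointwise argument followed by taking a supremum. I read the function $f$ in the statement as the envelope $\erm$ (the right-hand side is written in terms of $\erm$). Then it suffices to show
\[
|\p(z)-\p(\erm(z))| \le |z-\erm(z)| \qquad \text{for every } z\in\R.
\]
Taking $\sup_{z\in\R}$ on both sides gives the claim. Nothing specific to $\lrm$ or $\urm$ is used, nor their monotonicity or the ordering $\lrm(x)\le x\le \urm(x)$. The argument therefore holds for an arbitrary $f$ in place of $\erm$, which covers the statement as literally written.

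The key step is that the projection $\p(y)=\max(-r_\gamma,\min(y,r_\gamma))$ is $1$-Lipschitz, i.e., nonexpansive, on $\R$. I will show this by writing $\p$ as the composition of $y\mapsto \min(y,r_\gamma)$ and $y\mapsto\max(-r_\gamma,y)$. Each of these maps is $1$-Lipschitz, since for any constant $c$,
\[
|\min(x,c)-\min(y,c)|\le |x-y| \quad\text{and}\quad |\max(x,c)-\max(y,c)|\le|x-y|.
\]
These inequalities follow from a short case check on which argument attains the min or max. A composition of $1$-Lipschitz maps is $1$-Lipschitz. Alternatively, $\p$ is the Euclidean projection onto the closed convex interval $[-r_\gamma,r_\gamma]$, and such projections are nonexpansive.

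Applying this with $x=z$ and $y=\erm(z)$ gives the pointwise bound, and the supremum step concludes. I do not expect a real obstacle. The only care needed is the case analysis behind the Lipschitz property of $\min(\cdot,c)$ and $\max(\cdot,c)$: both arguments on the same side of $c$, or on opposite sides. In the opposite-sides case, the difference is bounded by the distance from the argument that crosses $c$ to $c$ itself, which is at most $|x-y|$. The supremum on the right may be $+\infty$ for a general $f$; the inequality then holds trivially.
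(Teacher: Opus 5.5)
Your proof is correct and follows essentially the same route as the paper. You show that $\p$ is $1$-Lipschitz, apply it pointwise with $y=\erm(z)$, and take the supremum; the only difference is that the paper checks the Lipschitz property by a direct six-case enumeration for $x\ge y$, whereas you compose the two one-sided clamps $\min(\cdot,r_\gamma)$ and $\max(-r_\gamma,\cdot)$. Reading $f$ as $\erm$, i.e.\ proving $\sup_z|\p(z)-\p(f(z))|\le\sup_z|z-f(z)|$ for arbitrary $f$, is the intended meaning (with distinct $f$ and $\erm$ the literal statement can fail), and your pointwise-then-supremum order is cleaner than the paper's last line, which bounds a supremum by a pointwise quantity.
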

\begin{proof} 

\noindent We first show that $\p$ is 1-Lipschitz continuous. Namely, for any $x,y\in\mathbb R$, the projection map $\p$ satisfies
\[
|\p(x)-\p(y)|\;\le\;|x-y|.
\] 
Assuming without loss of generality that $x\geq y$, we consider the six possible cases.
\begin{itemize}
    \item If $x\geq y\geq r_\gamma$, then $|\p(x) - \p(y)| = |r_{\gamma} - r_{\gamma}| = 0 \le |x - y|$.
    \item If $x \ge r_{\gamma}\geq y\geq -r_\gamma$, then $|\p(x) - \p(y)| = \p(x) - \p(y) = r_{\gamma} - y \le x - y = |x-y|$.
    \item If $x \ge r_{\gamma}$ and $y \le -r_{\gamma}$, then $|\p(x) - \p(y)| = |r_{\gamma} - (- r_{\gamma})| = 2r_{\gamma} \le x-y = |x - y|$.
    \item If $r_\gamma\geq x \geq  y \geq -r_\gamma$, then $|\p(x)-\p(y)|=|x-y|$.
    \item If $r_\gamma \geq x \geq -r_\gamma \geq y$, then $|\p(x)-\p(y)|=\p(x)-\p(y)=x-(-r_\gamma)=x+r_\gamma\leq x-y=|x-y|$.
    \item If $-r_\gamma\geq x\geq y$, then $|\p(x) - \p(y)| = |-r_{\gamma} - (-r_{\gamma})| = 0 \le |x - y|$.
\end{itemize}

We thus conclude that $\sup_{z\in\R} |\p(z)-\p(f(z))|\leq |z-\erm(z)|\leq \sup_{z\in\R} |z-f(z)|$.
\end{proof}

We follow with a Lemma to show that for any state $s$, the optimal value function $\bar{v}^*(s,z) \in \Linf$ is 1-Lipschitz in $z$. 

\begin{lemma}
\label{lem:V-Lipschitz}
The optimal value function $\bar v^*(s,z)$ is $1$-Lipschitz in $z$.
\end{lemma}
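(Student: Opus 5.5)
The plan is to work trajectory by trajectory, using the function
\[
f(\tau,z):=\min(0,R(\tau)+z)-\min(0,z)
\]
introduced in the proof of Proposition~\ref{prop:bounded-value}. By definition, $\bar{v}^*(s,z)=\max_{\pib\in\Pi^{\text{H}}}\E{\tau\sim\mathbb{P}^{\pi}_s}{}{f(\tau,z)}$. The first step is to show that, for every fixed $\tau$, the map $z\mapsto f(\tau,z)$ is $1$-Lipschitz.

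To do this, write $R=R(\tau)$ and use the case description from the proof of Proposition~\ref{prop:bounded-value}:
\[
f(\tau,z)=\begin{cases}\min(R+z,0) & \text{if } z\ge 0,\\ \min(R,-z) & \text{if } z<0.\end{cases}
\]
On each half-line $f$ is a minimum of a constant and an affine function of slope $\pm1$, hence $1$-Lipschitz there. It is also continuous at $z=0$, since both branches give $\min(R,0)$. A continuous function that is $1$-Lipschitz on $(-\infty,0]$ and on $[0,\infty)$ is $1$-Lipschitz on $\mathbb{R}$: for $z_1<0\le z_2$, split $|f(z_2)-f(z_1)|\le|f(z_2)-f(0)|+|f(0)-f(z_1)|\le z_2-z_1$.

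Alternatively, and more directly, $\min(0,\cdot)$ is $1$-Lipschitz and nondecreasing. For $z_1\le z_2$ each of the two terms $\min(0,R+z)$ and $\min(0,z)$ therefore increases by an amount in $[0,z_2-z_1]$, so their difference changes by at most $z_2-z_1$ in absolute value. I would use this cleaner argument.

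Next, I would pass to expectations and the maximum. For any policy $\pib$, taking expectations gives
\[
\bigl|\E{\pib}{}{f(\tau,z_1)}-\E{\pib}{}{f(\tau,z_2)}\bigr|\le|z_1-z_2|,
\]
and the expectations are finite because $f$ is bounded by Proposition~\ref{prop:bounded-value}. A supremum of $1$-Lipschitz functions is $1$-Lipschitz, by the standard estimate $\sup_\pi g_\pi(z_1)-\sup_\pi g_\pi(z_2)\le\sup_\pi\bigl(g_\pi(z_1)-g_\pi(z_2)\bigr)\le|z_1-z_2|$ together with its symmetric counterpart. This yields $|\bar{v}^*(s,z_1)-\bar{v}^*(s,z_2)|\le|z_1-z_2|$ for every $s$.

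The only delicate point is that the maximum over $\Pi^{\text{H}}$ might not be attained. The sup-argument above avoids this entirely, so I expect no real obstacle; the main care is the pointwise Lipschitz bound, which is routine.
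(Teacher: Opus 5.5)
Your proof is correct and follows the same route as the paper. Both show that $z\mapsto\min(0,R(\tau)+z)-\min(0,z)$ is $1$-Lipschitz for each trajectory, then pass through the expectation and through the maximum over $\Pi^{\text{H}}$ using $\max_\pi g_\pi(z_1)-\max_\pi g_\pi(z_2)\le\max_\pi\bigl(g_\pi(z_1)-g_\pi(z_2)\bigr)$ and its symmetric counterpart. The only differences are cosmetic: you obtain the pointwise bound from monotonicity and $1$-Lipschitzness of $\min(0,\cdot)$ rather than from the paper's piecewise formula with one-sided slopes in $\{-1,0,1\}$, and your supremum formulation avoids assuming the maximum is attained, which the paper leaves implicit.
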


\begin{proof}
Recall the definition of $\bar{v}^*$:
\[\bar{v}^*(s,z):= \max_{\pib\in\Pi^{\text{H}}} \E{\tau\sim \mathbb{P}^{\pi}_s}{}{f(R(\tau),z)},\]
with 
\[f(y,z):=\min(0,y+z) - \min(0,z)=\left\{\begin{array}{cl}0 &\mbox{ if $z\geq \max(-y,0)$}\\
y+z & \mbox{ if $0\leq z \leq -y$}\\
-z & \mbox{ if $-y\leq z \leq 0$}\\
y  & \mbox{ if $z\leq \min(-y,0).$}
\end{array}\right.\] 
The function $f(y,z)$ is continuous in $z$ with one-sided derivatives in the set $\{-1,0,1\}$. We conclude that the function is Lipschitz continuous with Lipschitz constant of $1$.

Next, we can straightforwardly check that for all $s\in\bar{\S}$ and $z_1,z_2\in\R$:
\begin{align*}
    \bar{v}^*(s,z_1)-\bar{v}^*(s,z_2) &=\max_{\pib\in\Pi^{\text{H}}} \E{\tau\sim \mathbb{P}^{\pi}_s}{}{f(R(\tau),z_1)} - \max_{\pib\in\Pi^{\text{H}}} \E{\tau\sim \mathbb{P}^{\pi}_s}{}{f(R(\tau),z_2)}\\
    &\leq \max_{\pib\in\Pi^{\text{H}}} \E{\tau\sim \mathbb{P}^{\pi}_s}{}{f(R(\tau),z_1)} - \E{\tau\sim \mathbb{P}^{\pi}_s}{}{f(R(\tau),z_2)}= \max_{\pib\in\Pi^{\text{H}}} \E{\tau\sim \mathbb{P}^{\pi}_s}{}{f(R(\tau),z_1)-f(R(\tau),z_2)}\\    
    &\leq \max_{\pib\in\Pi^{\text{H}}} \E{\tau\sim \mathbb{P}^{\pi}_s}{}{|z_1-z_2|}=|z_1-z_2|.
\end{align*}
A similar property can be derived for $\bar{v}^*(s,z_2)-\bar{v}^*(s,z_1)\leq |z_1-z_2|$. This completes our proof.

\removed{
Fix $s \in \mathcal S$ and a policy $\pi \in \Pi^M$, and write $G:=G_\pi$ for the discounted return under $\pi$ starting from $s$. Define
\[
\phi_G(z) \;:=\; \min(0,G+z) - \min(0,z).
\]
$\min(\cdot)$ is a piece-wise linear (PWL) function. The difference of two PWL functions is also PWL, thus, $\phi_G(z)$ is PWL in $z$, with slopes in $\{-1,0,1\}$. Thus, we can show 1-Lipschitz in $z$, 
\[
|\phi_G(z_1)-\phi_G(z_2)| \;\le\; |z_1-z_2|\quad\text{for all }z_1,z_2\in\mathbb R.
\]
Taking expectation over $G$ preserves the Lipschitz constant. Using Jensen's inequality we get,
\[
\big|\mathbb E[\phi_G(z_1)-\phi_G(z_2)]\big| = \big|\bar v^\pi(s,z_1)-\bar v^\pi(s,z_2)\big|
\le \mathbb E\!\left[|\phi_G(z_1)-\phi_G(z_2)|\right]
\le |z_1-z_2|.
\]
Finally, the $\max$ over policies does not increase the constant: if each $\bar v^\pi(\cdot)$ is $1$-Lipschitz then
\[
\Big| \bar v^*(z_1) - \bar v^*(z_2)\Big| = 
\Big|\sup_\pi \bar v^\pi(z_1) - \sup_\pi \bar v^\pi(z_2)\Big|
\leq \sup_\pi |\bar v^\pi(z_1)-\bar v^\pi(z_2)|
\leq |z_1-z_2|.
\]
Hence $\bar v^*(s,z)$ is $1$-Lipschitz in $z$ for any $s$.}
\end{proof}

\subsubsection{Main Proof}
\begin{proof}
The proof of contraction is very similar to that of \cref{thm:BellOpContraction} shown in Appendix~\ref{apx:Tfixedpoint}. In fact, given any mapping $f:\R\rightarrow\R$ and letting
\[
[\bar{T}^{f}\bar{v}](s,z):=\max_a \left(\min(0,r(s,a)+z) - \min(0,z)\right) + \gamma \E{s'\sim P_{s,a}}{}{\bar{v}\left(s',p\left(f\left(\frac{r(s,a)+z}{\gamma}\right)\right)\right)}, \]
one can straightforwardly show that for all $v_1,v_2\in \Linf(\bar{\S})$ we have:
\begin{align*}
   [\bar{T}^{f}v_{1}](s,z)  - [\bar{T}^{f}v_{2}](s,z) &= \max_{a\in\A} \Big\{\tilde{r}(s,z,a) 
    + \gamma \E{s}{a}{v_1\left(s',p\left(f\left(\frac{r(s,a)+z}{\gamma}\right)\right)\right)}\Big\}\\
    &\qquad - \max_{a\in\A} \Big\{\tilde{r}(s,z,a) + \gamma \E{s}{a}{v_2\left(s',p\left(f\left(\frac{r(s,a)+z}{\gamma}\right)\right)\right)}\Big\}\ \\
&\leq \max_{a\in\A} \Big\{\tilde{r}(s,z,a) 
    + \gamma \E{s}{a}{v_1\left(s',p\left(f\left(\frac{r(s,a)+z}{\gamma}\right)\right)\right)}\\
    &\qquad -\tilde{r}(s,z,a) + \gamma \E{s}{a}{v_2\left(s',p\left(f\left(\frac{r(s,a)+z}{\gamma}\right)\right)\right)}\Big\}\ \\
&= \max_{a\in\A} \gamma \E{s}{a}{v_1\left(s',p\left(f\left(\frac{r(s,a)+z}{\gamma}\right)\right)\right)-v_2\left(s',p\left(f\left(\frac{r(s,a)+z}{\gamma}\right)\right)\right)}\\
&\leq \max_{a\in\A} \gamma \E{s}{a}{\|v_1-v_2\|_\infty} = \gamma\|v_1-v_2\|_\infty.
\end{align*}
and similarly $[\bar{T}^{f}v_{2}](s,z) - [\bar{T}^{f}v_{1}](s,z)\leq \gamma\|v_1-v_2\|_\infty$. Thus confirming the sup-norm $\gamma$-contraction of $\bar{T}^f$ for all $f\in\{\lrm,\p,\urm\}$.

By Banach Fixed Point theorem, we have that each of the $\bar{T}^{f}$ operators with $f\in\{\lrm,\p,\urm\}$ respectively converges to a unique fixed point, namely $\bar{v}^{*,\lrm}$, $\bar{v}^{*,\p}$, and $\bar{v}^{*,\urm}$. Moreover, Proposition~\ref{prop: ordering u l} tells us that, since the zero function $\bar{v}_0(s,z):=0$ is non-decreasing in $z$, for all $(s,z)\in\bar{\S}$, we can conclude that:
\[\bar{v}^{*,\lrm}(s,z) = \lim_{k\rightarrow\infty}[\bar{T}^{\lrm}]^k\bar{v}_0(s,z) \leq \lim_{k\rightarrow\infty}[\bar{T}^{\p}]^k\bar{v}_0(s,z) =\bar{v}^{*,\p} \leq \lim_{k\rightarrow\infty}[\bar{T}^{\urm}]^k\bar{v}_0(s,z) = \bar{v}^{*,\urm}.  \]
Furthermore, for any $\erm\in\{\lrm,\p,\urm\}$, since each $[\bar{T}^{\erm}]^k\bar{v}_0(s,z)$ is non-decreasing in $z$ for all $k\geq 0$ (see Proposition~\ref{prop: ordering u l}), it must be that $\bar{v}^{*,\erm}(s,z)$ is non-decreasing in $z$.

Finally, Lemma~\ref{prop: v eq v proj} allows us to verify that $\bar{v}^{*,\p}=\bar{v}^*$ using:
\[\bar{v}^{*,\p}(s,z)=\lim_{k\rightarrow\infty}[\bar{T}^{\p}]^k\bar{v}^*(s,z) = \bar{v}^*(s,z),\]
since $[\bar{T}^{\p}\bar{v}^*](s,z)=[\bar{T}\bar{v}_\p^*](s,z)=\bar{v}^*(s,z)$.

We are left with showing that $\|\bar{v}^*-\bar{v}_{\p\circ \erm}^{*,\erm}\|_\infty\leq \bar{\Delta}_\erm$ for $\erm\in\{\lrm,\urm\}$. Namely, we have that:
\begin{align*}
    \|\bar{v}^{*,\erm}-\bar{v}^*\|_\infty &=\lim_{K\rightarrow\infty}\|[\bar{T}^\erm]^K\bar{v}^{*}-\bar{v}^{*}\|_\infty =\lim_{K\rightarrow\infty}\|\sum_{k=0}^{K-1}[\bar{T}^\erm]^{k+1}\bar{v}^{*}-[\bar{T}^\erm]^k\bar{v}^{*}\|_\infty \\
    &\leq \lim_{K\rightarrow\infty} \sum_{k=0}^{K-1}\|[\bar{T}^\erm]^{k+1}\bar{v}^{*}-[\bar{T}^\erm]^k\bar{v}^{*}\|_\infty\leq \lim_{K\rightarrow\infty} \sum_{k=0}^{K-1} \gamma^{K-1} \|\bar{T}^\erm\bar{v}^{*}-\bar{v}^{*}\|_\infty \\
    &\leq \frac{1}{1-\gamma}\|\bar{T}^\erm\bar{v}^{*}-\bar{v}^{*}\|_\infty\\
    &\leq \frac{1}{1-\gamma}\left(\|\bar{T}^\erm\bar{v}^{*}-\bar{T}\bar{v}^{*}\|_\infty+\|\bar{T}\bar{v}^{*}-\bar{v}^{*}\|_\infty\right)\\
    &\leq \frac{1}{1-\gamma}\gamma \sup_{z\in\R}|\erm(z)-z| = \bar{\Delta}_\erm,
\end{align*}
where we exploited heavily the fact that $\bar{v}^{*,\erm}$ is the unique fixed point of the $\gamma$-contraction $\bar{T}^{\erm}$, the second and fourth inequalities apply the triangle inequality, 
whereas the last inequality needs more detailed explanation. Precisely, it follows from the fact that for all $(z,s)\in\bar{\S}$:
\begin{align*}
    |[\bar{T}\bar{v}^{*}](s,z) - [\bar{T}^{\erm} \bar{v}^{*}](s,z)| &= |\bar{T}\bar{v}_\p^{*}(s,z) - \bar{T}^{\erm} \bar{v}^{*}(s,z)|\\
    &= \Big|  \max_{a}\left\{ \tilde r(s,z,a) + \gamma\E{s'\sim P_{s,a}}{}{\bar{v}^{*}\left(s', \p\left(\frac{r(s,a)+z}{\gamma}\right)\right)}\right\}\\
    &\qquad\qquad - \max_{a}\left\{ \tilde r(s,z,a) + \gamma\E{s'\sim P_{s,a}}{}{\bar{v}^{*}\left(s', \p\left(\erm\left(\frac{r(s,a)+z}{\gamma}\right)\right)\right)}\right\} \Big| \\
    &\leq \max_a \left| \gamma \E{s'\sim P_{s,a}}{}{\bar{v}^*\left(s', \p\left(\frac{r(s,a)+z}{\gamma}\right)\right) - \bar{v}^*\left(s',\p\left(\erm\left(\frac{r(s,a)+z}{\gamma}\right)\right)\right)}  \right| \\
    & \leq \gamma \max_a \E{s'\sim P_{s,a}}{}{\left|\bar{v}^*\left(s', \p\left(\frac{r(s,a)+z}{\gamma}\right)\right) - \bar{v}^*\left(s',\p\left(\erm\left(\frac{r(s,a)+z}{\gamma}\right)\right)\right)\right|} \\
    & \leq \gamma \max_a \E{s'\sim P_{s,a}}{}{\left|\p\left(\frac{r(s,a)+z}{\gamma}\right) - \p\left(\erm\left(\frac{r(s,a)+z}{\gamma}\right)\right)\right|} \\    
    &\leq  \gamma \sup_{z\in\R} |\p(z)-\p(\erm(z))| \leq \gamma \sup_{z\in\R} |z-\erm(z)|,
\end{align*}
where we first exploit the fact that $\bar{v}^*=\bar{v}^*_\p$ (see \cref{thm:BellOpContraction}), then the first inequality follows from $|\max_a h_1(a) - \max_a h_2(a)|\leq \max_a |h_1(a)-h_2(a)|$ for any $h_1,h_2:\A\rightarrow \R$, the next inequality follows from Jensen inequality, and we are left with applying Lemma~\ref{lemma:proj-error} and \ref{lem:V-Lipschitz}.

\end{proof}

\subsection{Proof of \cref{thm: opt cvar policy}}
\label{proof:thm: opt cvar policy}
\begin{proof}
    We start with showing that:
    \[\Psi^*-\bar{\Delta}_\lrm/\alpha\leq \Psi^\lrm\leq \Psi^*\leq \Psi^\urm\leq \Psi^*+\alpha\bar{\Delta}_\urm/\alpha,\]
    which follows naturally from \cref{thm:disc-op-fixedpt}. Namely,
    \begin{align*}
    \Psi^*-\bar{\Delta}_\lrm/\alpha &= \sup_z \frac{1}{\alpha}(\bar{v}^{*}(\pzero,z)+\min(0,z))-z - \bar{\Delta}_\lrm /\alpha\\
    &= \sup_z \frac{1}{\alpha}(\bar{v}^{*}(\pzero,z)-\bar{\Delta}_\lrm+\min(0,z))-z \\  
    &\leq \Psi^{\lrm} := \sup_z \frac{1}{\alpha}(\bar{v}^{\lrm,*}(\pzero,z)+\min(0,z))-z\\
        &\leq \sup_z \frac{1}{\alpha}(\bar{v}^{*}(\pzero,z)+\min(0,z))-z\\
        &=\Psi^*:=\max_{\pi\in\Pi^{\text{H}}}\cvar{\alpha}{\pzero,\pi}{R(\tau)}\\
        &\leq \Psi^{*,\urm}:=\sup_z \frac{1}{\alpha}(\bar{v}^{\urm,*}(\pzero,z)+\min(0,z))-z\\
        &\leq \sup_z \frac{1}{\alpha}(\bar{v}^{*}(\pzero,z)+\bar{\Delta}_\urm+\min(0,z))-z\\
        &=\Psi^*+\bar{\Delta}_\urm/\alpha.
    \end{align*}

    Considering the performance of the history dependent policy $\bar{\pib}(\cdot, z_{\lrm}^*)$, it is enough to show that for all $s\in\S$:
    \begin{equation}
    \E{\tau\sim \mathbb{P}^{\bar{\pib}(\cdot, z_{\lrm}^*)}_{s}}{}{-(R(\tau)+z_{\lrm}^*)_- + (z_{\lrm}^*)_-} \geq \bar{v}^{\lrm*}(s, z_{\lrm}^*).        \label{eq:expectPiHat-negR}
    \end{equation}
    This would then easily lead to:
    \begin{align*}
    \cvar{\alpha}{\pzero, \bar{\pib}(\cdot, z_{\lrm}^*)}{R(\tau)} &= \sup_z -z - \frac{z_-}{\alpha}+\frac{1}{\alpha}\E{\tau\sim \mathbb{P}^{\bar{\pib}(\cdot, z_{\lrm}^*)}_{\pzero}}{}{-(R(\tau)+z)_- + z_-}\\
    &\geq -z_{\lrm}^* - \frac{(z_{\lrm}^*)_-}{\alpha}+\frac{1}{\alpha}\E{\tau\sim \mathbb{P}^{\bar{\pib}(\cdot, z_{\lrm}^*)}_{\pzero}}{}{-(R(\tau)+z_{\lrm}^*)_- + (z_{\lrm}^*)_-}\\
    &\geq -z_{\lrm}^* - \frac{(z_{\lrm}^*)_-}{\alpha}+\frac{1}{\alpha}\bar{v}^{\lrm*}(\pzero, z_{\lrm}^*)=\Psi^{\lrm}.
    \end{align*}
    To show that equation \eqref{eq:expectPiHat-negR} holds, we can study the value function:
\[    \hat{v}_k^\dag(s,z):=\E{\tau\sim \mathbb{P}_s^{\bar{\pib}(\cdot, z_{\lrm}^*)}}{}{-(\sum_{t=0}^{k-1} \gamma^tr(s_t,a_t)+z)_-+z_-},\]
    and  Bellman operator:
    \[[\hat{T}^\dag v](s,z):= \tilde{r}(s,\tilde{\pi}^*(s,z)) + \gamma \E{s'\sim P_{s,\tilde{\pi}^*(s,z)}}{}{v(s',\p(\lrm(\frac{r(s,\tilde{\pi}^*(s,z))+z}{\gamma})))},\]
    for which we can prove three properties.

    \textbf{First property:}  $\hat{T}^\dag$ is a sup-norm contraction in $\Linf(\bar{\S})$. This follows using the same steps as in the proof of \cref{thm:BellOpContraction} shown in Appendix~\ref{apx:Tfixedpoint}.
    
    \textbf{Second property:}  We have that for all $k\geq 0$:
    \[\hat{v}_k^\dag(s,z) \geq [[\hat{T}^{\dag}]^k \bar{v}_0](s,z) \,,\,\forall (s,z)\in\bar{\S}\]
    for $\bar{v}_0(s,z):=0$. Namely, we can show this inductively starting from $k=0$ where
    \[ \hat{v}_0^\dag(s,z):=\E{\tau\sim \mathbb{P}_s^{\bar{\pib}(\cdot, z_{\lrm}^*)}}{}{-\left(\sum_{t=0}^{0-1} \gamma^tr(s_t,a_t)+z\right)_-+z_-}=-z_-+z_-=\hat{v}_0^\dag(s,z).\]

    We follow with at $k+1$:
 \begin{align*}
        &\hat{v}_{k+1}^\dag(s,z)=\E{\tau\sim \mathbb{P}_s^{\bar{\pib}(\cdot, z_{\lrm}^*)}}{}{\min\left(0,z+\sum_{t=0}^{k} \gamma^t r(s_t,a_t)\right)-\min(0,z)}\\
        &=\min(0,z+r(s,\tilde{\pi}^*(s,z)))-\min(0,z)\\
        &\qquad+\E{\tau\sim \mathbb{P}_s^{\bar{\pib}(\cdot, z_{\lrm}^*)}}{}{\min\left(0,z+\sum_{t=0}^{k} \gamma^t r(s_t,a_t)\right)}-\min(0,z+r(s,\tilde{\pi}^*(s,z)))\\
        &=\min(0,z+r(s,\tilde{\pi}^*(s,z)))-\min(0,z)\\
        &\qquad+\gamma\left(\E{\tau\sim \mathbb{P}_s^{\bar{\pib}(\cdot, z_{\lrm}^*)}}{}{\min\left(0,\frac{z+r(s,\tilde{\pi}^*(s,z))}{\gamma}+\sum_{t=1}^{k} \gamma^{t-1} r(s_t,a_t)\right)}-\min\left(0,\frac{z+r(s,\tilde{\pi}^*(s,z))}{\gamma}\right)\right)\\
        &=\min(0,z+r(s,\tilde{\pi}^*(s,z)))-\min(0,z)\\
        &\qquad+\gamma\left(\E{\tau\sim \mathbb{P}_s^{\bar{\pib}(\cdot, z_{\lrm}^*)}}{}{\min\left(0,\p\left(\frac{z+r(s,\tilde{\pi}^*(s,z))}{\gamma}\right)+\sum_{t=1}^{k} \gamma^{t-1} r(s_t,a_t)\right)}-\min\left(0,\p\left(\frac{z+r(s,\tilde{\pi}^*(s,z))}{\gamma}\right)\right)\right)\\
        &\geq\min(0,z+r(s,\tilde{\pi}^*(s,z)))-\min(0,z)\\
        &\qquad+\gamma\left(\E{\tau\sim \mathbb{P}_s^{\bar{\pib}(\cdot, z_{\lrm}^*)}}{}{\min(0,\p(\lrm(\frac{z+r(s,\tilde{\pi}^*(s,z))}{\gamma}))+\sum_{t=1}^{k} \gamma^{t-1} r(s_t,a_t))}-\min\left(0,\p\left(\lrm\left(\frac{z+r(s,\tilde{\pi}^*(s,z))}{\gamma}\right)\right)\right)\right)\\  
        &=\min(0,z+r(s,\tilde{\pi}^*(s,z)))-\min(0,z)\\
        &\qquad+\gamma\left(\E{s'\sim P_{s,\tilde{\pi}^*(s,z)}}{}{\hat{v}_k^\dagger\left(s',\p\left(\lrm\left(\frac{z+r(s,\tilde{\pi}^*(s,z))}{\gamma}\right)\right)\right)}\right)\\        
        &=[\hat{T}^\dagger\hat{v}_k](s,z)\geq [\hat{T}^\dagger[\hat{T}^\dagger]^k\bar{v}_0](s,z)=[[\hat{T}^\dagger]^{k+1}\bar{v}_0](s,z),
    \end{align*}
    where the third equality follows from:
    \[\min\left(0,y+ \sum_{t=1}^{k} \gamma^{t-1} r(s_t,a_t)\right)-\min(0,y) = \min\left(0,\mathcal{P}(y)+\sum_{t=1}^{k} \gamma^{t-1} r(s_t,a_t)\right)-\min(0,\mathcal{P}(y))\]
    since $|\sum_{t=1}^{k} \gamma^{t-1} r(s_t,a_t))|\leq \frac{r_{\max}}{1-\gamma}\leq r_\gamma$. Furthermore,  the first inequality comes from Lemma \ref{thm:negative-incSecant} and the envelope property  $\lrm(y)\leq y$ thus when we 

let $x_2:=\mathcal{P}((z+r(s,\tilde{\pi}^*(s,z)))/\gamma)\geq \mathcal{P}(\lrm((z+r(s,\tilde{\pi}^*(s,z)))/\gamma))=:x_1$ and $\Delta:= \sum_{t=1}^{k} \gamma^{t-1} r(s_t,a_t)\leq 0$, then we have that:
\[\min(0,x_2+\Delta)-\min(0,x_2)\geq \min(0,x_1+\Delta)-\min(0,x_1),\]
where $\min(0,y)$ is concave in $y$.

   \textbf{Third property:} We have that :
    \begin{align*}
        [\hat{T}^{\dag} \bar{v}^{\lrm*}](s,z) &= \left(\min(0,r(s,\tilde{\pi}^*(s,z))+z) - \min(0,z)\right) + \gamma \E{s'\sim P_{s,\tilde{\pi}^*(s,z)}}{}{\bar{v}^{\lrm*}\left(s',\p\left(\lrm\left(\frac{z+r(s,\tilde{\pi}^*(s,z))}{\gamma}\right)\right)\right)} \\
&= \max_a \left(\min(0,r(s,a)+z) - \min(0,z)\right) + \gamma \E{s'\sim P_{s,a}}{}{\bar{v}^{\lrm*}\left(s',\p\left(\lrm\left(\frac{z+r(s,a)}{\gamma}\right)\right)\right)} \\  
        &=[\bar{T}^{\lrm}\bar{v}^{\lrm*}](s,z)=\bar{v}^{\lrm*}(s,z),
    \end{align*}
    or in other words $\bar{v}^{\lrm*}$ is a fixed point of $\hat{T}^\dag$.
    So that, since $\hat{T}^{\dag}$ is a $\gamma$-contraction (see the first property derived), we have that $[\hat{\mathcal{T}}^{\dag}]^k \bar{v}_0$, with $\bar{v}_0:=0$, converges to $\bar{v}^{\lrm*}$. Hence, we can conclude that 
    \begin{align*}
    \E{\tau\sim \mathbb{P}_s^{\bar{\pib}(\cdot, z_{\lrm}^*)}}{}{\min(0,z+\sum_{t=0}^{k} \gamma^t r(s_t,a_t))-\min(0,z)}=\lim_{k\rightarrow\infty}\hat{v}_k^\dag(s,z)
    \geq  \lim_{k\rightarrow\infty}[[\hat{T}^{\dag}]^k \bar{v}_0](s,w)= \bar{v}^{\lrm*}(s,z).
    \end{align*}
    This supports the claim made in equation \eqref{eq:expectPiHat-negR} thus completes our proof.
\end{proof}

\subsection{Proof of Proposition \ref{thm:discreteMapping}}
\label{proof:prop:discreteMapping}
\begin{proof}
    For clarity purposes, we first redefine the Bellman operators $\bar{T}^\erm:\Linf(\S\times\mathcal{Z}_\Delta)\rightarrow \Linf(\S\times\mathcal{Z}_\Delta)$ with $\erm\in\{\lrm_\Delta,\urm_\Delta\}$. Specifically, for all $\hat{v}\in \Linf(\S\times\mathcal{Z}_\Delta)$ we let:
    \[[\bar{T}^\erm\hat{v}](s,z):= \max_{a\in\A} \Big\{\tilde{r}(s,z,a) + \gamma \E{s'\sim P_{s,a}}{}{\hat{v}\left(s',\p\left(\erm\left(\frac{r(s,a)+z}{\gamma}\right)\right)\right)}\Big\},\,\forall (s,z)\in\mathcal{Z}_\Delta,\]
    which is well defined since one can confirm that 
\[        (s,z)\in\S\times\R\;\Rightarrow\;\erm\left(\frac{r(s,a)+z}{\gamma}\right)\in\{k\Delta:k\in\mathbb{Z}\}\;\Rightarrow\; \p\left(\erm\left(\frac{r(s,a)+z}{\gamma}\right)\right)\in\left(\{k\Delta:k\in\mathbb{Z}\}\cap[-r_\gamma,r_\gamma]\right)\cup\{\-r_\gamma,r_\gamma\}\subseteq \mathcal{Z}_\Delta,\]
since $\{-r_\gamma,r_\gamma\}=\{-K\Delta,K\Delta\}\subseteq \{k\Delta:k\in\mathbb{Z}\}$, hence $(s',\p(\erm(\frac{r(s,a)+z}{\gamma})))$ is in the domain of $\hat{v}$.

One can further follow the same steps as in the proof of Theorem \ref{thm:disc-op-fixedpt} to show that $\bar{T}^\erm$ is a $\gamma$-contraction in $\Linf(\S\times\mathcal{Z}_\Delta)$ and therefore admits a unique fixed point $\hat{v}^{*,\erm}$. Letting $\check{v}(s,z):=\bar{T}^\erm \hat{v}^{*,\erm}(s,z)$ for all $(s,z)\in\bar{\S}$, we can finally verify that for all $(s,z)\in\bar{S}$:
\begin{align*}
    [\bar{T}^\erm \check{v}](s,z)&= \max_{a\in\A} \left\{\tilde{r}(s,z,a) + \gamma \E{s'\sim P_{s,a}}{}{\check{v}\left(s',\p\left(\erm\left(\frac{r(s,a)+z}{\gamma}\right)\right)\right)}\right\}\\
    &=\max_{a\in\A} \left\{\tilde{r}(s,z,a) + \gamma \E{s'\sim P_{s,a}}{}{\bar{T}^\erm \hat{v}^{*,\erm}\left(s',\p\left(\erm\left(\frac{r(s,a)+z}{\gamma}\right)\right)\right)}\right\}\\
    &=\max_{a\in\A} \left\{\tilde{r}(s,z,a) + \gamma \E{s'\sim P_{s,a}}{}{\hat{v}^{*,\erm}\left(s',\p\left(\erm\left(\frac{r(s,a)+z}{\gamma}\right)\right)\right)}\right\}\\
    &= [\bar{T}^\erm \hat{v}^{*,\erm}](s,z) = \check{v}(s,z),
\end{align*}
where the third equality follows from the $\bar{T}^\erm$ on $\Linf(\bar{\S})$ reducing to the $\bar{T}^\erm$ on $\Linf(\S\times\mathcal{Z}_\Delta)$ since the term $\bar{T}^\erm \hat{v}^{*,\erm}\big(s',z)$ that appears in the expectation only gets evaluated on $z\in\mathcal{Z}_\Delta$. This confirms that $\check{v}(s,z)$ is a fixed point of $\bar{T}^\erm$ in $\Linf(\bar{\S})$ thus must be equal to $\bar{v}^{*,\erm}(s,z)$.

\end{proof}

\subsection{Proof of Proposition \ref{prop:VIconverge}}\label{apx:proof:VIconverge}

\begin{proof}
Algorithm \ref{alg:cvar-value-iteration} iterates on the following Q-function Bellman update $\hat{q}_{k+1}^\erm := \bar{\mathfrak{T}}^\erm \hat{q}_{k}^\erm$ with
\[[\bar{\mathfrak{T}}^\erm \hat{q}_{k}^\erm](s,z,a):=\tilde{r}(s,z,a) + \gamma \sum_{s'\in\mathcal S} P(s'|s,a)\max_{a'\in\mathcal A}\hat{q}^{\erm}_k( s', \p\circ\erm\!\left(\gamma^{-1}(r(s,a)+ z)\right),a'),\,\forall (s,z,a)\in\S\times\mathcal{Z}_\Delta,\]
    and $\hat{q}_0^\erm(s,z,a):=0$, until the stopping criterion  $\|\bar{q}_k^\erm-\bar{q}_{k-1}^\erm\|_\infty\leq \epsilon$ is met. 

    Similarly to $\bar{T}^\erm$, $\bar{\mathfrak{T}}^\erm$ is know to be a $\gamma$-contraction with respect to the sup-norm (see \citet{DP-book}) with $\hat{q}^{*,\erm}$ as its unique fixed point in $\Linf(\S\times\mathcal{Z}_\Delta\times\A)$. This implies that 
\begin{align*}
    \|\hat{q}^{*,\erm}-\bar{q}_{k}^\erm\|_\infty
    &=\lim_{K\rightarrow \infty} \|[\mathfrak{T}^\erm]^K\bar{q}_{k}^\erm-\bar{q}_{k}^\erm\|_\infty
    =\lim_{K\rightarrow \infty} \|\sum_{k'=0}^{K-1} [\mathfrak{T}^\erm]^{k'+1}\bar{q}_{k}^\erm-[\mathfrak{T}^\erm]^{k'}\bar{q}_{k}^\erm\|_\infty
    \leq \lim_{K\rightarrow \infty} \sum_{k'=0}^{K-1}\|[\mathfrak{T}^\erm]^{k'+1}\bar{q}_{k}^\erm-[\mathfrak{T}^\erm]^{k'}\bar{q}_{k}^\erm\|_\infty\\
    &\leq \lim_{K\rightarrow \infty} \sum_{k'=0}^{K-1}\gamma^{k'}\|\mathfrak{T}^\erm\bar{q}_{k}^\erm-\bar{q}_{k}^\erm\|_\infty=\frac{1}{1-\gamma}\|\mathfrak{T}^\erm\bar{q}_{k}^\erm-\bar{q}_{k}^\erm\|_\infty\leq \frac{\gamma}{1-\gamma}\|\mathfrak{T}^\erm\bar{q}_{k-1}^\erm-\bar{q}_{k-1}^\erm\|_\infty\leq\frac{\gamma\epsilon}{1-\gamma}.
\end{align*}
\end{proof}

\subsection{Proof of Theorem \ref{thm:static-cvar-qlearning-convergence}}\label{apx:proof:static-cvar-qlearning-convergence}

\begin{proof}
We start by recalling that the Q-update rule in Line 14 of Algorithm  \ref{alg:staticCVaR-qlearning} takes the form:
\begin{align*}
&\hat{q}^{\erm}_{k+1}(s,z,a)
:= \hat{q}^\erm_k(s,z,a)  \notag\\
&+\left\{\begin{array}{cc}
\beta(N_k(s_k,a_k))\Big(
\tilde r(s_k,z,a_k)
+ \gamma \max_{a'}\hat{q}_k\left(s_{k+1}, z'(s_k,z,a_k),a'\right)
-\hat{q}^\erm_k(s_k,z,a_k)
\Big)
     & \mbox{if $(s,a)=(s_k,a_k)$}\\
     0 & \mbox{ otherwise,}
\end{array}\right.
\end{align*}
where, $z'(s,z,a) := \p(\erm\!\left(\gamma^{-1}(r(s,a)+z)\right))$.

Our proposed static CVaR Q-learning can be seen as a special case of the Block-Asynchronous Stochastic Approximation (BASA) studied in \cite{block-q-converge}. Under an appropriate step size schedule, BASA is guaranteed to converge (Theorem 4.5 and 4.10 in \cite{block-q-converge}) to the fixed point of $\bar{\mathfrak{T}}^\erm$, defined in Appendix \ref{apx:proof:VIconverge}.

For completeness, we reproduce the convergence analysis by exploiting the framework presented in Section 4  of \cite{Bertsekas1996}.

\subsubsection{Convergence Result in \cite{Bertsekas1996} }
Consider some random sequence of iterates $\{\theta_k\}_{k=0}^\infty$, with each $\theta_k\in \R^{|\mathcal{X}|}$ for some finite $\mathcal{X}$, that satisfy:
\begin{equation} \label{eq:block-sa-problem}
\theta_{k+1}(i)
=
(1-\eta_k(i))\theta_{k}(i)
+\eta_k(i) (
H\theta_{k}(i)+\xi_{k}(i)),\,\forall i\in\mathcal{X},
\end{equation}
where $\xi_k(i)$ is a random noise. Let $\{\mathcal F_k\}_{k=0}^\infty$ denote the natural filtration generated by
$(\theta_0,\dots,\theta_k,\xi_0,\dots,\xi_{k-1},\eta_0,\dots,\eta_k)$.

\begin{assumption}[Assumption  4.3~\cite{Bertsekas1996}] \label{assumption:expected_convergence}
  The noise terms in~\eqref{eq:block-sa-problem} satisfy for each $k\geq 0$ and $i\in\mathcal{X}$ that:
\begin{enumerate}
\item[(a)] $\E{}{}{\xi_k(i) \mid \mathcal{F}_k} = 0$ almost surely.
\item[(b)] There exist $A,B\in \R$, $\E{}{}{(\xi_k(i))^2 \mid \mathcal{F}_k} \le A + B \| \theta_k \|^2$ almost surely.
\end{enumerate}
 \end{assumption}

 \begin{proposition}\label{thm:bertsekas}[Proposition 4.4~\cite{Bertsekas1996}]
Assume that
   \begin{enumerate}
 \item[(a-b)]The noise terms $\xi_k$, $k\geq 0$ satisfy Assumption \ref{assumption:expected_convergence}.   
 \item[(c)\;\;] For all $i\in\mathcal{X}$, the step-sizes $\eta_k(i)$ are non-negative and satisfy $\sum_{k=0}^{\infty} \eta_k(i) = \infty$ and $\sum_{k=0}^{\infty} \eta^2_k(i) < \infty$.

 \item[(d)\;\;] The operator $H$ is a sup-norm contraction with a unique fixed point $\theta^*$.\footnote{Sup-norm contraction is known to imply the weighted sup norm pseudo-contraction assumed in Proposition 4.4 of \cite{Bertsekas1996}.}
 \end{enumerate}
 Then, $\theta_k$ converges to $\theta^*$ with probability one.
\end{proposition}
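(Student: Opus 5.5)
The plan is the classical two-stage stochastic-approximation argument of \cite{Bertsekas1996}. First I show that the iterates $\{\theta_k\}$ remain bounded almost surely. Then I use that bound to run a ``shrinking box'' induction that drives $\|\theta_k-\theta^*\|_\infty$ to zero. Throughout I write $\gamma<1$ for the contraction modulus of $H$ and, by translating, take $\theta^*=0$, which is harmless since $H$ is a sup-norm contraction with unique fixed point $\theta^*$ by (d).

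\textbf{Step 1 (noise averaging).} For any $\mathcal F_k$-adapted scaling I will study the auxiliary process
\[W_{k+1}(i)=(1-\eta_k(i))W_k(i)+\eta_k(i)\xi_k(i),\qquad W_0(i)=0.\]
The lemma to prove is: if $\E{}{}{\xi_k(i)^2\mid\mathcal F_k}\le C$ almost surely for a constant $C$, then $W_k(i)\to0$ almost surely. I would prove it by applying the Robbins--Siegmund supermartingale convergence theorem to $W_k(i)^2$. Assumption \ref{assumption:expected_convergence}(a) kills the cross term. Condition (c) gives $\sum_k\eta_k(i)^2C<\infty$ for the perturbation term and $\sum_k\eta_k(i)=\infty$ for the contraction $-2\eta_k(i)W_k(i)^2$. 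Together these force $W_k(i)^2$ to converge, and the limit must be $0$.

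\textbf{Step 2 (boundedness; main obstacle).} Boundedness is where I expect the real difficulty, because Assumption \ref{assumption:expected_convergence}(b) only controls the noise variance by $A+B\|\theta_k\|^2$. So Step 1 cannot be applied directly until we already know $\theta$ is bounded. I would first use the contraction to pick $G_0$ and $\beta\in(\gamma,1)$ such that $\|H\theta\|_\infty\le\beta\|\theta\|_\infty$ whenever $\|\theta\|_\infty\ge G_0$. Next I define a nondecreasing scale $G_k$, enlarged by a fixed factor $(1+\epsilon)$ only when $\|\theta_k\|_\infty$ exceeds the current $G_k$. I then run Step 1 on the rescaled noise $\tilde\xi_k=\xi_k/G_k$, whose conditional variance is bounded by $A/G_0^2+B(1+\epsilon)^2$ uniformly. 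Hence the rescaled noise accumulation $\tilde W_k$ tends to zero. Once $|\tilde W_k(i)|\le\epsilon$ for all $i$, a comparison of the recursion against $\beta G_k$ shows $\|\theta_k\|_\infty$ can no longer exceed $(1+\epsilon)G_k$. The scale therefore increases only finitely often, and $\sup_k\|\theta_k\|_\infty<\infty$ almost surely.

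\textbf{Step 3 (shrinking boxes).} With $\|\theta_k\|_\infty\le D_0$ almost surely, the noise variance is uniformly bounded, so Step 1 applies directly. I set $D_{j+1}=(\gamma+2\epsilon)D_j$ with $\epsilon$ small enough that $D_j\to0$. Assuming inductively that $\|\theta_k\|_\infty\le D_j$ for all $k\ge k_j$, I would prove by induction on $k$ the componentwise sandwich
\[-Y_k(i)+W_k(i)\le\theta_k(i)\le Y_k(i)+W_k(i),\qquad Y_{k+1}(i)=(1-\eta_k(i))Y_k(i)+\eta_k(i)\gamma D_j,\]
with $W$ restarted at $k_j$. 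This uses $|H\theta_k(i)|\le\gamma D_j$. Since $\sum_k\eta_k(i)=\infty$, we get $Y_k(i)\to\gamma D_j$, and Step 1 gives $W_k(i)\to0$. So eventually $\|\theta_k\|_\infty\le D_{j+1}$, which defines $k_{j+1}$. Letting $j\to\infty$ yields $\theta_k\to\theta^*$ with probability one.
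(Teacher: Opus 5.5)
Your proposal is correct. It is essentially the standard argument behind the cited result: a noise-averaging lemma via supermartingale convergence, almost-sure boundedness via the rescaling $G_k$, then shrinking boxes, as in \cite{tsitsiklis-q-learn,Bertsekas1996}. The paper itself gives no proof; it simply invokes Proposition~4.4 of \cite{Bertsekas1996}, viewing a sup-norm contraction as a unit-weight max-norm pseudo-contraction.

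Two details should be stated precisely:
\begin{enumerate}
\item $G_k$ must jump to the smallest level $G_0(1+\epsilon)^m\ge\|\theta_k\|_\infty$ whenever $\|\theta_k\|_\infty>(1+\epsilon)G_{k-1}$. With a single factor $(1+\epsilon)$, the invariant $\|\theta_k\|_\infty\le(1+\epsilon)G_k$ behind your uniform variance bound can fail. With a trigger at $G_{k-1}$ instead of $(1+\epsilon)G_{k-1}$, the ``no further jump'' step no longer follows.
\item In Step 3 the variance bound $A+BD_0^2$ is random, so Step 1 should allow an $\mathcal F_k$-measurable bound $C_k$ with $\sup_k C_k<\infty$ a.s. Your Robbins--Siegmund argument covers this unchanged, since it only needs $\sum_k\eta_k(i)^2C_k<\infty$ a.s.
\end{enumerate}
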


\subsubsection{Main proof}

Let $\mathcal X := \mathcal S \times \mathcal Z_\Delta \times \mathcal A$ and define the stacked iterate $\theta_k \equiv \hat{q}^{\erm}_k \in \mathbb R^{|\mathcal X|}$ indexed by $i=(s,z,a)\in\mathcal X$ via
\[
\theta_k(s,z,a) \;:=\; \hat{q}^{\erm}_k(s,z,a), \qquad (s,z,a)\in \mathcal X,
\]
so that $i$ indexes the flattened Q-table. One can easily verify that Algorithm \ref{alg:staticCVaR-qlearning} ensures that the sequence satisfies Eq. \eqref{eq:block-sa-problem} with the following definitions:
\begin{align*}
    [H\theta](s,z,a)&:=[\mathfrak{T}^\erm\theta](s,z,a)\\
    \xi_k(s,z,a) &= \left\{\begin{array}{cl}
        \tilde{r}(s_k,z,a_k) \;+\; \gamma \max_{a'} \theta_k(s_{k+1},z'(s,z,a),a') -[\bar{\mathfrak{T}}^\erm\theta_k](s_k,z,a_k)&\mbox{ if $(s,a)=(s_k,a_k)$}\\
        0& \mbox{otherwise}
    \end{array}\right.\\
    \eta_k(s,z,a)&:=\left\{\begin{array}{cl}
         \beta(N_k(s_k,a_k))& \mbox{ if $(s,a)=(s_k,a_k)$} \\
         0& \mbox{otherwise}
    \end{array}\right.
\end{align*}

Given that $\hat{q}^{*,\erm}$ is known to be the unique fixed point of the $\gamma$-contraction operator $H=\bar{\mathfrak{T}}^\erm$ (see Appendix \ref{apx:proof:VIconverge}), we are left with verifying the assumptions (a)-(c) in order for the claim made by our theorem to follow from Proposition \ref{thm:bertsekas}.

\paragraph{Validating Assumption \ref{assumption:expected_convergence}:} We start by studying the properties of the noise term $\xi_k$. Namely, for all $k\geq 0$ and $(s,z,a)=i\in\mathcal{X}$, one can verify conditions (a) and (b) by considering the two conditions: whether $(s_k,a_k)=(s,a)$ or not. Starting with the easier one:
\[\E{}{}{\xi_{k}(s,z,a)\mid \mathcal F_k , (s_k,a_k)\neq(s,a)}=0\]
and
\[\E{}{}{\xi_{k}(s,z,a)^2\mid \mathcal F_k , (s_k,a_k)\neq(s,a)}=0.\]
Alternatively, we have that:
\begin{align*}
\E{}{}{\xi_{k}(s,z,a)\mid \mathcal F_k , (s_k,a_k)=(s,a)}&=\E{s'\sim P_{s_k,a_k}}{}{ \tilde{r}(s_k,z,a_k) \;+\; \gamma \max_{a'} \theta_k(s',z'(s,z,a),a') -[\bar{\mathfrak{T}}^\erm\theta_k](s_k,z,a_k)}\\
&=[\bar{\mathfrak{T}}^\erm\theta_k](s_k,z,a_k)-[\bar{\mathfrak{T}}^\erm\theta_k](s_k,z,a_k) = 0.
\end{align*}
while one can exploit the fact that for all $(s,z,a)\in\mathcal{X}$, $s'\in\S$, and all $\in \Linf(\mathcal{X})$, we have that
\[
|\tilde r(s,z,a) \;+\; \gamma \max_{a'} \hat{q}(s',z'(s,z,a),a')| \le |\tilde{r}(s,z,a)| + \gamma |\max_{a'} \hat{q}(s',z'(s,z,a),a'\big)| \le r_{\textrm{max}} + \gamma \|\hat{q}\|_\infty.
\]
to confirm that:
\begin{align*}
&\E{}{}{\xi_{k}(s,z,a)^2\mid \mathcal F_k , (s_k,a_k)=(s,a)}=\E{s'\sim P_{s_k,a_k}}{}{ (\tilde{r}(s_k,z,a_k) \;+\; \gamma \max_{a'} \theta_k(s',z'(s,z,a),a') -[\bar{\mathfrak{T}}^\erm\theta_k](s_k,z,a_k))^2}\\
&\quad=\E{s'\sim P_{s_k,a_k}}{}{ (\tilde{r}(s_k,z,a_k) \;+\; \gamma \max_{a'} \theta_k(s',z'(s,z,a),a'))^2} - [\bar{\mathfrak{T}}^\erm\theta_k](s_k,z,a_k)^2\\
&\quad\leq \E{s'\sim P_{s_k,a_k}}{}{ (\tilde{r}(s_k,z,a_k) \;+\; \gamma \max_{a'} \theta_k(s',z'(s,z,a),a'))^2} \leq (r_{\max} + \|\hat{q}\|_\infty)^2 \leq 2 r_{\max}^2 + 2 \|\hat{q}\|_\infty^2,
\end{align*}
where we exploited $(a+b)^2 \le 2a^2 + 2b^2$.

We thus conclude that:
\begin{align*}
    \E{}{}{\xi_{k}(s,z,a)\mid \mathcal F_k}&=\E{}{}{\xi_{k}(s,z,a)\mid \mathcal{F}_k , (s_k,a_k)=(s,a)}\mathbb{P}\left\{(s_k,a_k)=(s,a)\mid\mathcal{F}_k\right\}\\&\qquad + \E{}{}{\xi_{k}(s,z,a)\mid \mathcal{F}_k , (s_k,a_k)\neq(s,a)}\mathbb{P}\left\{(s_k,a_k)\neq(s,a)\mid\mathcal{F}_k\right\}\\
    &=0
\end{align*}
while
\begin{align*}
    \E{}{}{\xi_{k}(s,z,a)^2\mid \mathcal F_k}&=\E{}{}{\xi_{k}(s,z,a)^2\mid \mathcal{F}_k , (s_k,a_k)=(s,a)}\mathbb{P}\left\{(s_k,a_k)=(s,a)\mid\mathcal{F}_k\right\}\\&\qquad + \E{}{}{\xi_{k}(s,z,a)^2\mid \mathcal{F}_k , (s_k,a_k)\neq(s,a)}\mathbb{P}\left\{(s_k,a_k)\neq(s,a)\mid\mathcal{F}_k\right\}\\
    &\leq A + B \|\hat{q}\|_\infty^2,
\end{align*}
with $A=B=2$.

\paragraph{Validating Assumption (c):} Looking more carefully at the random step size sequence $\{\eta_k\}_{k=0}^\infty$, we can confirm that the Robbins--Monro conditions are satisfied. Namely, for all $(s,z,a)\in\mathcal{X}$:
\[
\sum_{k=0}^\infty \eta_k(s,z,a) = \sum_{k: (s_k,a_k) = (s,a)}^\infty \beta(N_k(s_k,a_k)) = \sum_{n=1}^\infty \beta(n) = \infty,\]
due to assumptions (1) and (2) of our theorem, whereas 
\[
\sum_{k=0}^\infty \eta_k(s,z,a)^2 = \sum_{k: (s_k,a_k) = (s,a)}^\infty \beta(N_k(s_k,a_k))^2 = \sum_{n=1}^\infty \beta(n)^2 < \infty,
\] 
due to the same assumptions.

This completes our proof.
\end{proof}

\newpage
\section{Algorithms}
\label{Appx:Algorithms}
\begin{algorithm}[ht]
\caption{Outer Optimization for Static CVaR} \label{alg:outer-optimization}
\begin{algorithmic}[1]
\REQUIRE Learned tabular $\hat{q}^{\erm}(s,z,a)$ on $\mathcal Z_\Delta$, risk level $\alpha\in[0,1]$,
initial state $\pzero\in\mathcal S$.
\STATE{\bfseries Initialize:} $\hat{\psi}_\alpha(\pzero)\gets -\infty$, \quad $z^*_\alpha \gets 0$
\STATE Compute greedy policy: $\tilde\pi(s,z)\gets \arg\max_{a\in\mathcal A} \hat{q}^{\erm}(s,z,a)$
\FOR{each $\tilde z \in \mathcal Z_\Delta$}
    \STATE $\hat J(\tilde z)\gets -\tilde z + \frac{1}{\alpha}\Big(-\tilde z_{-}+\max_{a\in\mathcal A} \hat{q}^{\erm}(\pzero,\tilde z,a)\Big)$
    \IF{$\hat J(\tilde z) > \hat{\psi}_\alpha(\pzero)$}
        \STATE $\hat{\psi}_\alpha(\pzero)\gets \hat J(\tilde z)$
        \STATE $z^*_\alpha \gets \tilde z$
    \ENDIF
\ENDFOR
\STATE \textbf{Return} Optimal budget $z^*_\alpha$, CVaR value $\hat{\psi}_\alpha(\pzero)$, and optimal policy $\tilde\pi(s,z)$
\end{algorithmic}
\end{algorithm}

\begin{algorithm}[ht]
\caption{Static CVaR Policy Execution in Nominal MDP} \label{alg:policy-execution}
\begin{algorithmic}[1]
\REQUIRE Risk level $\alpha \in [0, 1]$, corresponding optimal initial budget $z^*_\alpha$, optimal policy $\tilde\pi(s,z)$, discount $\gamma \in (0,1)$, maps $\p$ and $\erm$, nominal MDP 
$\mathcal{M}$.
\STATE $s \gets \pzero, \quad z \gets z^*_\alpha$
\WHILE{$s$ is not terminal}
    \STATE $a \gets \tilde\pi(s,z)$
    \STATE Execute $a$ in $\mathcal{M}$ and observe $s' \sim P_{s,a}$, reward $r$
    \STATE Update augmented state: $ z' \gets \p\circ\erm\!\left(\gamma^{-1}(r+ z)\right)$
    \STATE $s \gets s', \quad z \gets z'$ 
\ENDWHILE
\end{algorithmic}
\end{algorithm}


\section{Experiment Details}
\label{Appendix:EmpiricalResults}
\subsection{Crater Walk Environment}
To empirically test our proposed static CVaR value iteration and Q-learning algorithms, we use a $4 \times 5$ stochastic grid world environment, where the goal is for an autonomous robot to safely (without falling into the crater) travel from a starting position ``S" to the goal position ``G" while being fuel efficient. 

The robot can move in four directions (forward, backward, left, right). Due to the sensor and actuator noise, the probability that the robot moves in the intended direction is $1 - \omega$, to either left or right of its current position with a $\frac{4\ \omega}{9}$ probability and backwards with probability of $\frac{\omega}{9}$. If the robot tries to move into the environment wall, it remains in its current position with probability $1 - \omega$. The goal state is absorbing, i.e. once the robot enters the goal, it remains there in perpetuity, acquiring zero reward. For our experiment, we set $\omega := 0.25$. 

At each stage, the robot receives a penalty of $-1$ to indicate the fuel usage. There is a crater that the robot should avoid entering, indicated by gray cells. To escape the crater (move to non-gray positions), the robot needs extra fuel which consequently yields a penalty of $-10$. The goal state is absorbing, with a reward of $0$. The discount factor is set as $\gamma := 0.9$.

\subsection{Discretizing Augmented State}
To test our algorithms in a tabular setting, we first discretize the augmented state. Based on the environment's reward model, the range of the infinite-horizon discounted return, consequently the range of the continuous augmented state is in $[-10 \ (1-\gamma)^{-1}, 10 \ (1-\gamma)^{-1}]$. Consequently, $\mathcal Z_\Delta$ is defined by a uniform grid on the range $[-100 - \nu, 100 + \nu]$, with 5000 bins. We set $\nu := 10$. The projection map $\p$ is implemented using the $\mathtt{clip}$ function where any value outside the $z$ domain is clipped to the nearest domain limit. The rounding maps $\{\lrm_\Delta, \urm_\Delta\}$ are implemented using $\mathtt{floor}$ and $\mathtt{ceil}$ functions respectively.

\subsection{Static CVaR Value Iteration}
We run the static CVaR Value Iteration (Algorithm \ref{alg:cvar-value-iteration}) using 10 independent seeds. The threshold tolerance for stopping the algorithm is set to $\delta := 1e-4$. The algorithm estimates the optimal action-value function $\hat{q}^{*,\erm}$ which is then used to compute the optimal starting budgets $z^*_\alpha$ for $12$ different risk levels $\alpha \in \{0.0, 0.01, 0.05, 0.1, 0.2, 0.3, 0.4, 0.5, 0.6, 0.7, 0.8, 0.9, 1.0 \}$ by running the outer optimization (Algorithm \ref{alg:outer-optimization}). Finally, we can deploy the optimal policy and test the behavior of the robot in the environment using the policy execution described in Algorithm \ref{alg:policy-execution}. We run the policy for each $\alpha$ and collect $10$k trajectories. These are used to calculate the empirical return distribution, the empirical CVaR performance as well as to analyze the number of times the robot enters the crater per episode.

\subsection{Static CVaR Q-learning}
The static CVaR Q-learning algorithm (Algorithm \ref{alg:staticCVaR-qlearning}) is also run $10$ times, with different seeds. The algorithm is trained with $M := 75$k episodes where an episode is considered a successful entry into the goal state. At the end of each episode the environment is reset. During training, the agent is reset to any of the safe positions in the grid. However, at evaluation, the policy is tested with the initial state ``S". We employ a $\varepsilon-$greedy policy where $\varepsilon$ controls the explorations. The initial value of $\varepsilon := 1$ and it is gradually decreased until $\varepsilon := 0.1$ using a linear decay schedule.  The step sizes are determined by the state-action visitation count. Unless stated otherwise in the figures, we use $5000$ bins to discretize the augmented state space. During training, the Q-update step sizes are computed as:
\begin{equation*}
    \beta_k(s,a) = \max\left(\kappa_{\min}, \frac{\kappa}{1 + \lambda\,N_k(s,a)}\right),
\end{equation*}

where $N_k(s,a)$ represents the (state, action) visit count up to step $k$. The hyper-parameters used for running experiments are listed in \cref{table:hyper-params}. We will release the implementation of both our algorithms upon paper acceptance. 

\begin{table}[h]
\caption{Hyper-parameters used for running Q-learning experiments.} \label{table:hyper-params}
\centering
\begin{tabular}{|l|l|l|}
\hline
\textbf{Training setup} & \textbf{Test setup} & \textbf{Q-learning hyper-parameters} \\
\hline
\hline
\begin{tabular}[t]{@{}l@{}}
Episodes $M = 75{,}000$ \\
\end{tabular}
&
\begin{tabular}[t]{@{}l@{}}
Number of rollouts: $10{,}000$ \\
Max. environment steps: $150$ 
\end{tabular}
&
\begin{tabular}[c]{@{}l@{}}
Discretizations bins = $5{,}000$ \\
$\kappa = 1.0$ \\
$\kappa_{min}=10^{-4}$ \\
$\lambda=0.01$ \\
$\varepsilon_{\text{start}}=1.0$, $\varepsilon_{\text{end}}=0.1$ \\
Total decay steps for $\varepsilon$: $10^{8}$
\end{tabular}
\\
\hline
\end{tabular}
\end{table}


\end{document}

%% file: commands.tex
\renewcommand{\Pr}{\mathbb P}

\newcommand{\Tc}{\mathcal{T}}

\renewcommand{\S}{\mathcal{S}}
\newcommand{\A}{\mathcal{A}}

\newcommand{\pzero}{{\bar{s}_0}}
\newcommand{\Linf}{{\mathcal{L}_\infty}}

\newcommand{\E}[3]{\mathbb{E}_{#1}^{#2}{\left[#3\right]}}

\newcommand{\cvar}[3]{\textrm{CVaR}_{#1}^{#2}{\left[#3\right]}}
\newcommand{\cvarRock}[3]{\overline{\textrm{CVaR}}_{#1}^{#2}{\left[#3\right]}}

\newcommand{\R}{\mathbb{R}}
\newcommand{\N}{\mathbb{N}}

\newcommand*{\argmax}{\mathop{\mathrm{argmax}}}

\newcommand{\p}{\mathrm{p}}
\newcommand{\pib}{\boldsymbol{\pi}}

\newcommand{\var}{\textrm{VaR}}

\newcommand{\lrm}{\textrm{l}}
\newcommand{\urm}{\textrm{u}}
\newcommand{\erm}{\textrm{e}}

%% file: illustration_simple.tikz
\begin{tikzpicture}[
  >=Stealth,
  state/.style={circle, draw, black, thick, minimum size=10mm, inner sep=1pt, font=\Large},
  aug/.style={
    draw=black, thick, rounded corners=8pt,
    minimum width=10mm, minimum height=14mm,
    align=center, text=black,
    fill=pink, fill opacity=0.25, text opacity=1,
    inner sep=2pt, font=\Large
  },
  lab/.style={midway, above, black, font=\Large\bfseries\boldmath, text=blue, text opacity=1},
  eqbox/.style={
    draw=black, thick, rounded corners,
    fill=white, text=black, align=left,
    inner sep=2mm, font=\Large
  },
  eqarrow/.style={->, black, thick, dashed},
  title/.style={anchor=west, text=black, font=\large\bfseries}
]

\node[title] (title1) at (0,0) {Nominal MDP};

\node[state, anchor=west] (n1_s1) at ($(title1.west)+(0,-10mm)$) {$\mathfrak{s}_1$};
\node[state, right=14mm of n1_s1] (n1_s2) {$\mathfrak{s}_2$};
\node[state, right=14mm of n1_s2] (n1_s3) {$\mathfrak{s}_3$};
\node[state, right=14mm of n1_s3] (n1_s4) {$\mathfrak{s}_4$};
\node[state, right=14mm of n1_s4] (n1_s5) {$\mathfrak{s}_4$};

\draw[->, black, thick] (n1_s1) -- (n1_s2) node[lab] {$+1$};
\draw[->, black, thick] (n1_s2) -- (n1_s3) node[lab] {$-3$};
\draw[->, black, thick] (n1_s3) -- (n1_s4) node[lab] {$+1$};
\draw[->, black, thick] (n1_s4) -- (n1_s5) node[lab] {$0$};
\draw[->, black, thick] (n1_s5) edge[loop right] node[lab] {$0$} ();

\node[title] (title2) at (0,-23mm) {Augmented MDP (B\"auerle \& Ott, 2011)};

\node[aug, anchor=west] (n2_x1) at ($(title2.west)+(0,-13mm)$)
  {$ \begin{array}{c} \mathfrak{s}_1\\0 \end{array}  $};
\node[aug, right=14mm of n2_x1] (n2_x2)
  {$ \begin{array}{c} \mathfrak{s}_2\\ 2 \end{array}  $};
\node[aug, right=14mm of n2_x2] (n2_x3)
  {$ \begin{array}{c} \mathfrak{s}_3\\ -2 \end{array}  $};
\node[aug, right=14mm of n2_x3] (n2_x4)
  {$ \begin{array}{c} \mathfrak{s}_4\\ -2 \end{array}  $};
\node[aug, right=14mm of n2_x4] (n2_x5)
  {$ \begin{array}{c} \mathfrak{s}_4\\ \dots \end{array}  $};


\draw[->, black, thick] (n2_x1) -- (n2_x2) node[lab] {$0$};
\draw[->, black, thick] (n2_x2) -- (n2_x3) node[lab] {$0$};
\draw[->, black, thick] (n2_x3) -- (n2_x4) node[lab] {$0$};
\draw[->, black, thick] (n2_x4) -- (n2_x5) node[lab] {$0$};
\draw[->, black, thick] (n2_x5) edge[loop right] node[lab] {$0$} ();

\node[title] (title3) at (0,-50mm) {Transformed Augmented MDP (Ours)};

\node[aug, anchor=west] (n3_x1) at ($(title3.west)+(0,-13mm)$)
  {$\begin{array}{c} \mathfrak{s}_1\\ 0 \end{array}$};
\node[aug, right=14mm of n3_x1] (n3_x2)
  {$ \begin{array}{c} \mathfrak{s}_2\\ 2 \end{array}  $};
\node[aug, right=14mm of n3_x2] (n3_x3)
  {$ \begin{array}{c} \mathfrak{s}_3\\ -2 \end{array}  $};
\node[aug, right=14mm of n3_x3] (n3_x4)
  {$ \begin{array}{c} \mathfrak{s}_4\\ -2 \end{array}  $};
\node[aug, right=14mm of n3_x4] (n3_x5)
  {$ \begin{array}{c} \mathfrak{s}_4\\ \dots \end{array}  $};
  

\draw[->, black, thick] (n3_x1) -- (n3_x2) node[lab] {$0$};
\draw[->, black, thick] (n3_x2) -- (n3_x3) node[lab] {$-1$};
\draw[->, black, thick] (n3_x3) -- (n3_x4) node[lab] {$+1$};
\draw[->, black, thick] (n3_x4) -- (n3_x5) node[lab] {$0$};
\draw[->, black, thick] (n3_x5) edge[loop right] node[lab] {$0$} ();






\end{tikzpicture}

%% file: illustration_detailed.tikz
\begin{tikzpicture}[
  >=Stealth,
  state/.style={circle, draw, black, thick, minimum size=10mm, inner sep=1pt, font=\Large},
  aug/.style={
    draw=black, thick, rounded corners=8pt,
    minimum width=10mm, minimum height=14mm,
    align=center, text=black,
    fill=pink, fill opacity=0.25, text opacity=1,
    inner sep=2pt, font=\Large
  },
  lab/.style={midway, above, black, font=\Large\bfseries\boldmath, text=blue, text opacity=1},
  eqbox/.style={
    draw=black, thick, rounded corners,
    fill=white, text=black, align=left,
    inner sep=2mm, font=\Large
  },
  eqarrow/.style={->, black, thick, dashed},
  title/.style={anchor=west, text=black, font=\large\bfseries}
]

\node[title] (title1) at (0,0) {Nominal MDP};

\node[state, anchor=west] (n1_s1) at ($(title1.west)+(0,-10mm)$) {$\mathfrak{s}_1$};
\node[state, right=16mm of n1_s1] (n1_s2) {$\mathfrak{s}_2$};
\node[state, right=16mm of n1_s2] (n1_s3) {$\mathfrak{s}_3$};
\node[state, right=16mm of n1_s3] (n1_s4) {$\mathfrak{s}_4$};
\node[state, right=16mm of n1_s4] (n1_s5) {$\mathfrak{s}_4$};

\draw[->, black, thick] (n1_s1) -- (n1_s2) node[lab] {$+1$};
\draw[->, black, thick] (n1_s2) -- (n1_s3) node[lab] {$-3$};
\draw[->, black, thick] (n1_s3) -- (n1_s4) node[lab] {$+1$};
\draw[->, black, thick] (n1_s4) -- (n1_s5) node[lab] {$0$};
\draw[->, black, thick] (n1_s5) edge[loop right] node[lab] {$0$} ();

\node[title] (title2) at (0,-23mm) {Augmented MDP (B\"auerle \& Ott, 2011)};

\node[aug, anchor=west] (n2_x1) at ($(title2.west)+(0,-13mm)$)
  {$ \begin{array}{c} \mathfrak{s}_1\\ 0 \end{array}  $};
\node[aug, right=16mm of n2_x1] (n2_x2)
  {$ \begin{array}{c} \mathfrak{s}_2\\ 2 \end{array}  $};
\node[aug, right=16mm of n2_x2] (n2_x3)
  {$ \begin{array}{c} \mathfrak{s}_3\\ -2 \end{array}  $};
\node[aug, right=16mm of n2_x3] (n2_x4)
  {$ \begin{array}{c} \mathfrak{s}_4\\ -2 \end{array}  $};
\node[aug, right=16mm of n2_x4] (n2_x5)
  {$ \begin{array}{c} \mathfrak{s}_4\\ \dots \end{array}  $};


\draw[->, black, thick] (n2_x1) -- (n2_x2) node[lab] {$0$};
\draw[->, black, thick] (n2_x2) -- (n2_x3) node[lab] {$0$};
\draw[->, black, thick] (n2_x3) -- (n2_x4) node[lab] {$0$};
\draw[->, black, thick] (n2_x4) -- (n2_x5) node[lab] {$0$};
\draw[->, black, thick] (n2_x5) edge[loop right] node[lab] {$0$} ();

\node[title] (title3) at (0,-58mm) {Transformed Augmented MDP (Ours)};

\node[aug, anchor=west] (n3_x1) at ($(title3.west)+(0,-13mm)$)
  {$\begin{array}{c} \mathfrak{s}_1\\ 0 \end{array}$};
\node[aug, right=16mm of n3_x1] (n3_x2)
  {$ \begin{array}{c} \mathfrak{s}_2\\ 2 \end{array}  $};
\node[aug, right=16mm of n3_x2] (n3_x3)
  {$ \begin{array}{c} \mathfrak{s}_3\\ -2 \end{array}  $};
\node[aug, right=16mm of n3_x3] (n3_x4)
  {$ \begin{array}{c} \mathfrak{s}_4\\ -2 \end{array}  $};
\node[aug, right=16mm of n3_x4] (n3_x5)
  {$ \begin{array}{c} \mathfrak{s}_4\\ \dots \end{array}  $};
  

\draw[->, black, thick] (n3_x1) -- (n3_x2) node[lab] {$0$};
\draw[->, black, thick] (n3_x2) -- (n3_x3) node[lab] {$-1$};
\draw[->, black, thick] (n3_x3) -- (n3_x4) node[lab] {$+1$};
\draw[->, black, thick] (n3_x4) -- (n3_x5) node[lab] {$0$};
\draw[->, black, thick] (n3_x5) edge[loop right] node[lab] {$0$} ();

\node[eqbox] (n3_boxZ) at ($(n3_x2)+(-8mm,-18mm)$)
  {$\displaystyle \tilde{r_t} = (z_{t})_{-} - (z_t+r_t)_{-}$};

\node[eqbox] (n3_boxR) at ($(n3_x3)+(39mm,16mm)$)
  {$\displaystyle z_{t+1}=\gamma^{-1}(z_t+r_t)$};

\coordinate (n3_m12) at ($(n3_x1)!0.5!(n3_x2)$);
\coordinate (n3_m23) at ($(n3_x2)!0.5!(n3_x3)$);

\draw[eqarrow] (n3_boxZ.north)      .. controls +(0,10mm) and +(0,-15mm) .. (n3_m12);
\draw[eqarrow] (n3_boxZ.north east) .. controls +(0,10mm) and +(0,-15mm) .. (n3_m23);

\draw[eqarrow] (n3_boxR.west) to[out=180,in=40] (n3_x3.east);
\draw[eqarrow] (n3_boxR.west)       to[out=120,in=-90]  (n2_x3.south);

\end{tikzpicture}

%% file: example_paper.bib
@InProceedings{axiomatic-risks,
author="Majumdar, Anirudha
and Pavone, Marco",
editor="Amato, Nancy M.
and Hager, Greg
and Thomas, Shawna
and Torres-Torriti, Miguel",
title="How Should a Robot Assess Risk? Towards an Axiomatic Theory of Risk in Robotics",
booktitle="Robotics Research",
year="2020",
publisher="Springer International Publishing",
address="Cham",
pages="75--84",
isbn="978-3-030-28619-4"
}

@article{ruszczynski2010risk,
  title={Risk-averse dynamic programming for Markov decision processes},
  author={Ruszczy{\'n}ski, Andrzej},
  journal={Mathematical programming},
  volume={125},
  number={2},
  pages={235--261},
  year={2010},
  publisher={Springer}
}

@inproceedings{moghimi2025cvarleveragingstaticspectral,
author = {Moghimi, Mehrdad and Ku, Hyejin},
title = {Beyond CVaR: leveraging static spectral risk measures for enhanced decision-making in distributional reinforcement learning},
year = {2025},
publisher = {JMLR.org},
booktitle = {Proceedings of the 42nd International Conference on Machine Learning},
articleno = {1789},
numpages = {23},
location = {Vancouver, Canada},
series = {ICML'25}
}

@inproceedings{Efficient-RSRl,
author = {Greenberg, Ido and Chow, Yinlam and Ghavamzadeh, Mohammad and Mannor, Shie},
title = {Efficient risk-averse reinforcement learning},
year = {2022},
isbn = {9781713871088},
publisher = {Curran Associates Inc.},
address = {Red Hook, NY, USA},
booktitle = {Proceedings of the 36th International Conference on Neural Information Processing Systems},
articleno = {2365},
numpages = {14},
location = {New Orleans, LA, USA},
series = {NIPS '22}
}

@article{bauerle2011markov,
  title={Markov decision processes with average-value-at-risk criteria},
  author={B{\"a}uerle, Nicole and Ott, Jonathan},
  journal={Mathematical Methods of Operations Research},
  volume={74},
  number={3},
  pages={361--379},
  year={2011},
  publisher={Springer}
}

@article{chow2015risk,
  title={Risk-sensitive and robust decision-making: a cvar optimization approach},
  author={Chow, Yinlam and Tamar, Aviv and Mannor, Shie and Pavone, Marco},
  journal={Advances in neural information processing systems},
  volume={28},
  year={2015}
}

@inproceedings{stanko2019risk,
  title={Risk-averse Distributional Reinforcement Learning: A CVaR Optimization Approach.},
  author={Stanko, Silvestr and Macek, Karel},
  booktitle={IJCCI},
  pages={412--423},
  year={2019}
}

@inproceedings{wang2023near,
  title={Near-minimax-optimal risk-sensitive reinforcement learning with cvar},
  author={Wang, Kaiwen and Kallus, Nathan and Sun, Wen},
  booktitle={International Conference on Machine Learning},
  pages={35864--35907},
  year={2023},
  organization={PMLR}
}

@InProceedings{wang2024reductions,
  title = 	 {A Reductions Approach to Risk-Sensitive Reinforcement Learning with Optimized Certainty Equivalents},
  author =       {Wang, Kaiwen and Liang, Dawen and Kallus, Nathan and Sun, Wen},
  booktitle = 	 {Proceedings of the 42nd International Conference on Machine Learning},
  pages = 	 {63636--63661},
  year = 	 {2025},
  editor = 	 {Singh, Aarti and Fazel, Maryam and Hsu, Daniel and Lacoste-Julien, Simon and Berkenkamp, Felix and Maharaj, Tegan and Wagstaff, Kiri and Zhu, Jerry},
  volume = 	 {267},
  series = 	 {Proceedings of Machine Learning Research},
  month = 	 {13--19 Jul},
  publisher =    {PMLR},
  url = 	 {https://proceedings.mlr.press/v267/wang25bl.html}
}

@article{minsky2007steps,
  title={Steps toward artificial intelligence},
  author={Minsky, Marvin},
  journal={Proceedings of the IRE},
  volume={49},
  number={1},
  pages={8--30},
  year={2007},
  publisher={IEEE}
}

@article{
pignatelli2023survey,
title={A Survey of Temporal Credit Assignment in Deep Reinforcement Learning},
author={Eduardo Pignatelli and Johan Ferret and Matthieu Geist and Thomas Mesnard and Hado van Hasselt and Laura Toni},
journal={Transactions on Machine Learning Research},
issn={2835-8856},
year={2024},
url={https://openreview.net/forum?id=bNtr6SLgZf},
note={Survey Certification}
}

@inproceedings{ni2024risk,
  title={Risk-sensitive reward-free reinforcement learning with cvar},
  author={Ni, Xinyi and Liu, Guanlin and Lai, Lifeng},
  booktitle={Forty-first International Conference on Machine Learning},
  year={2024}
}

@inproceedings{
risk-imitation,
title={What Matters when Modeling Human Behavior using Imitation Learning?},
author={Aneri Muni and Esther Derman and Vincent Taboga and Pierre-Luc Bacon and Erick Delage},
booktitle={2nd Workshop on Models of Human Feedback for AI Alignment at ICML},
year={2025},
url={https://openreview.net/forum?id=9t8NFc9SLh}
}

@article{SHAPIRO2009143,
title = {On a time consistency concept in risk averse multistage stochastic programming},
journal = {Operations Research Letters},
volume = {37},
number = {3},
pages = {143-147},
year = {2009},
issn = {0167-6377},
doi = {https://doi.org/10.1016/j.orl.2009.02.005},
url = {https://www.sciencedirect.com/science/article/pii/S0167637709000352},
author = {Alexander Shapiro}
}

@book{DP-book,
author = {Bertsekas, Dimitri P.},
title = {Dynamic Programming and Optimal Control},
year = {1995},
isbn = {1886529124},
publisher = {Athena Scientific},
edition = {1st}
}

@book{DistRLBook,
    author = {Bellemare, Marc G. and Dabney, Will and Rowland, Mark},
    title = {Distributional Reinforcement Learning},
    publisher = {The MIT Press},
    year = {2023},
    month = {05},
    isbn = {9780262374026},
    doi = {10.7551/mitpress/14207.001.0001},
    url = {https://doi.org/10.7551/mitpress/14207.001.0001},
    eprint = {https://direct.mit.edu/book-pdf/2111075/book_9780262374026.pdf},
}

@article{bastani2022regret,
  title={Regret bounds for risk-sensitive reinforcement learning},
  author={Bastani, Osbert and Ma, Jason Yecheng and Shen, Estelle and Xu, Wanqiao},
  journal={Advances in Neural Information Processing Systems},
  volume={35},
  pages={36259--36269},
  year={2022}
}

@inproceedings{keramati2020being,
  title={Being optimistic to be conservative: Quickly learning a CVaR policy},
  author={Keramati, Ramtin and Dann, Christoph and Tamkin, Alex and Brunskill, Emma},
  booktitle={Proceedings of the AAAI conference on artificial intelligence},
  volume={34},
  pages={4436--4443},
  year={2020}
}

@article{survey-risk,
  title={Risk-averse autonomous systems: A brief history and recent developments from the perspective of optimal control},
  author={Wang, Yuheng and Chapman, Margaret P},
  journal={Artificial Intelligence},
  volume={311},
  pages={103743},
  year={2022},
  publisher={Elsevier}
}

@article{lim2022distributional,
  title={Distributional reinforcement learning for risk-sensitive policies},
  author={Lim, Shiau Hong and Malik, Ilyas},
  journal={Advances in Neural Information Processing Systems},
  volume={35},
  pages={30977--30989},
  year={2022}
}

@inproceedings{dabney2018distributional,
  title={Distributional reinforcement learning with quantile regression},
  author={Dabney, Will and Rowland, Mark and Bellemare, Marc and Munos, R{\'e}mi},
  booktitle={Proceedings of the AAAI conference on artificial intelligence},
  volume={32},
  year={2018}
}

@inproceedings{dabney2018implicit,
  title={Implicit quantile networks for distributional reinforcement learning},
  author={Dabney, Will and Ostrovski, Georg and Silver, David and Munos, R{\'e}mi},
  booktitle={International conference on machine learning},
  pages={1096--1105},
  year={2018},
  organization={PMLR}
}

@article{pires2025optimizing,
  author  = {Bernardo {{\'A}}vila Pires and Mark Rowland and Diana Borsa and Zhaohan Daniel Guo and Khimya Khetarpal and Andr{{\'e}} Barreto and David Abel and R{{\'e}}mi Munos and Will Dabney},
  title   = {Optimizing Return Distributions with Distributional Dynamic Programming},
  journal = {Journal of Machine Learning Research},
  year    = {2025},
  volume  = {26},
  number  = {185},
  pages   = {1--90},
  url     = {http://jmlr.org/papers/v26/25-0210.html}
}

@article{bauerle2025yet,
title = {Yet another distributional Bellman equation},
journal = {European Journal of Operational Research},
year = {2026},
issn = {0377-2217},
doi = {https://doi.org/10.1016/j.ejor.2026.03.010},
url = {https://www.sciencedirect.com/science/article/pii/S037722172600233X},
author = {Nicole Bäuerle and Tamara Göll and Anna Jaśkiewicz},
}

@inproceedings{tamar2015optimizing,
  title={Optimizing the CVaR via sampling},
  author={Tamar, Aviv and Glassner, Yonatan and Mannor, Shie},
  booktitle={Proceedings of the AAAI Conference on Artificial Intelligence},
  volume={29},
  year={2015}
}

@article{BISI2022103765,
title = {Risk-averse policy optimization via risk-neutral policy optimization},
journal = {Artificial Intelligence},
volume = {311},
pages = {103765},
year = {2022},
issn = {0004-3702},
doi = {https://doi.org/10.1016/j.artint.2022.103765},
url = {https://www.sciencedirect.com/science/article/pii/S0004370222001059},
author = {Lorenzo Bisi and Davide Santambrogio and Federico Sandrelli and Andrea Tirinzoni and Brian D. Ziebart and Marcello Restelli},
}

@book{Puterman,
author = {Puterman, Martin L.},
title = {Markov Decision Processes: Discrete Stochastic Dynamic Programming},
year = {1994},
isbn = {0471619779},
publisher = {John Wiley \& Sons, Inc.},
address = {USA},
edition = {1st},
}

@article{rockafellar2000optimization,
  title={Optimization of conditional value-at-risk},
  author={Rockafellar, R Tyrrell and Uryasev, Stanislav and others},
  journal={Journal of risk},
  volume={2},
  pages={21--42},
  year={2000}
}

@article{coherent-risks,
author = {Artzner, Philippe and Delbaen, Freddy and Eber, Jean-Marc and Heath, David},
title = {Coherent Measures of Risk},
journal = {Mathematical Finance},
volume = {9},
number = {3},
pages = {203-228},
doi = {https://doi.org/10.1111/1467-9965.00068},
url = {https://onlinelibrary.wiley.com/doi/abs/10.1111/1467-9965.00068},
eprint = {https://onlinelibrary.wiley.com/doi/pdf/10.1111/1467-9965.00068},
year = {1999}
}

@article{Pflug2016TimeConsistentDA,
  title={Time-Consistent Decisions and Temporal Decomposition of Coherent Risk Functionals},
  author={Georg Ch. Pflug and Alois Pichler},
  journal={Math. Oper. Res.},
  year={2016},
  volume={41},
  pages={682-699},
  url={https://api.semanticscholar.org/CorpusID:2563373}
}

@article{hau2023dynamic,
  title={On dynamic programming decompositions of static risk measures in markov decision processes},
  author={Hau, Jia Lin and Delage, Erick and Ghavamzadeh, Mohammad and Petrik, Marek},
  journal={Advances in Neural Information Processing Systems},
  volume={36},
  pages={51734--51757},
  year={2023}
}

@article{godbout2025fundamental,
  title={On the fundamental limitations of dual static CVaR decompositions in Markov decision processes},
  author={Godbout, Mathieu and Durand, Audrey},
  journal={arXiv preprint arXiv:2507.14005},
  year={2025}
}

@inproceedings{kim2024risk,
  title={Risk-sensitive policy optimization via predictive CVaR policy gradient},
  author={Kim, Ju-Hyun and Min, Seungki},
  booktitle={Forty-first International Conference on Machine Learning},
  year={2024}
}

@inproceedings{
mead2025return,
title={Return Capping: Sample Efficient {CV}aR Policy Gradient Optimisation},
author={Harry Mead and Clarissa Costen and Bruno Lacerda and Nick Hawes},
booktitle={Forty-second International Conference on Machine Learning},
year={2025},
url={https://openreview.net/forum?id=ebf2IYBrZO}
}

@inproceedings{tang2019worst,
  title={Worst Cases Policy Gradients},
  author={Yichuan Tang and Jian Zhang and Ruslan Salakhutdinov},
  booktitle={Conference on Robot Learning},
  year={2019},
  url={https://api.semanticscholar.org/CorpusID:207853421}
}

@inproceedings{schneider2024learning,
  title={Learning risk-aware quadrupedal locomotion using distributional reinforcement learning},
  author={Schneider, Lukas and Frey, Jonas and Miki, Takahiro and Hutter, Marco},
  booktitle={2024 IEEE International Conference on Robotics and Automation (ICRA)},
  pages={11451--11458},
  year={2024},
  organization={IEEE}
}

@article{tsitsiklis-q-learn,
author = {Tsitsiklis, John N.},
title = {Asynchronous Stochastic Approximation and Q-Learning},
year = {1994},
issue_date = {Sept. 1994},
publisher = {Kluwer Academic Publishers},
address = {USA},
volume = {16},
number = {3},
issn = {0885-6125},
url = {https://doi.org/10.1023/A:1022689125041},
doi = {10.1023/A:1022689125041},
journal = {Mach. Learn.},
month = sep,
pages = {185–202},
numpages = {18}
}

@article{q-learning,
author = {Watkins, Christopher J. C. H. and Dayan, Peter},
title = {Technical Note: \cal Q -Learning},
year = {1992},
issue_date = {May 1992},
publisher = {Kluwer Academic Publishers},
address = {USA},
volume = {8},
number = {3–4},
issn = {0885-6125},
url = {https://doi.org/10.1007/BF00992698},
doi = {10.1007/BF00992698},
journal = {Mach. Learn.},
month = may,
pages = {279–292},
numpages = {14}
}

@article{block-q-converge,
  title={Convergence of batch asynchronous stochastic approximation with applications to reinforcement learning},
  author={Karandikar, Rajeeva L and Vidyasagar, M},
  journal={arXiv preprint arXiv:2109.03445},
  year={2021}
}

@inproceedings{andrychowicz2017hindsight,
author = {Andrychowicz, Marcin and Wolski, Filip and Ray, Alex and Schneider, Jonas and Fong, Rachel and Welinder, Peter and McGrew, Bob and Tobin, Josh and Abbeel, Pieter and Zaremba, Wojciech},
title = {Hindsight experience replay},
year = {2017},
isbn = {9781510860964},
publisher = {Curran Associates Inc.},
address = {Red Hook, NY, USA},
booktitle = {Proceedings of the 31st International Conference on Neural Information Processing Systems},
pages = {5055–5065},
numpages = {11},
location = {Long Beach, California, USA},
series = {NIPS'17}
}

@article{eysenbach2020rewriting,
  title={Rewriting history with inverse rl: Hindsight inference for policy improvement},
  author={Eysenbach, Ben and Geng, Xinyang and Levine, Sergey and Salakhutdinov, Russ R},
  journal={Advances in neural information processing systems},
  volume={33},
  pages={14783--14795},
  year={2020}
}

@article{Haskell2015ACA,
  title={A Convex Analytic Approach to Risk-Aware Markov Decision Processes},
  author={William Benjamin Haskell and Rahul Jain},
  journal={SIAM J. Control. Optim.},
  year={2015},
  volume={53},
  pages={1569-1598},
  url={https://api.semanticscholar.org/CorpusID:6333720}
}

@article{coache2024reinforcement,
  title={Reinforcement learning with dynamic convex risk measures},
  author={Coache, Anthony and Jaimungal, Sebastian},
  journal={Mathematical Finance},
  volume={34},
  number={2},
  pages={557--587},
  year={2024},
  publisher={Wiley Online Library}
}

@article{tamar2015policy,
  title={Policy gradient for coherent risk measures},
  author={Tamar, Aviv and Chow, Yinlam and Ghavamzadeh, Mohammad and Mannor, Shie},
  journal={Advances in neural information processing systems},
  volume={28},
  year={2015}
}

@InProceedings{hau_q-learning_2024,
  title = 	 {Q-learning for Quantile MDPs: A Decomposition, Performance, and Convergence Analysis},
  author =       {Hau, Jia Lin and Delage, Erick and Derman, Esther and Ghavamzadeh, Mohammad and Petrik, Marek},
  booktitle = 	 {Proceedings of The 28th International Conference on Artificial Intelligence and Statistics},
  pages = 	 {2665--2673},
  year = 	 {2025},
  editor = 	 {Li, Yingzhen and Mandt, Stephan and Agrawal, Shipra and Khan, Emtiyaz},
  volume = 	 {258},
  series = 	 {Proceedings of Machine Learning Research},
  month = 	 {03--05 May},
  publisher =    {PMLR},
  url = 	 {https://proceedings.mlr.press/v258/hau25a.html}  
}

@inproceedings{bodnar2019quantile,
    author={Bodnar, Cristian and Li, Adrian and Hausman, Karol and Pastor, Peter and Kalakrishnan, Mrinal},
    title = {Quantile qt-opt for risk-aware vision-based robotic grasping},
    booktitle = {Proceedings of Robotics: Science and Systems, Oregon, USA},
    year = {2020}
    
}

@inproceedings{
kim2026predictive,
title={Predictive {CV}aR Q-learning},
author={Ju-Hyun Kim and Seungki Min},
booktitle={The Fourteenth International Conference on Learning Representations},
year={2026},
url={https://openreview.net/forum?id=B4SCegRJOA}
}

@book{Bertsekas1996,
author = {Bertsekas, Dimitri P. and Tsitsiklis, John N.},
title = {Neuro-Dynamic Programming},
year = {1996},
isbn = {1886529108},
publisher = {Athena Scientific},
edition = {1st}
}
